\documentclass{article} 
\pdfoutput=1
\usepackage{nips14submit_e,times}
\usepackage{hyperref}
\usepackage{url}
\usepackage{comment}
\usepackage{amsmath}

\usepackage{amsfonts}

\usepackage{graphicx}
\usepackage{multirow}
\usepackage{array}
\usepackage{floatrow}
\newfloatcommand{capbtabbox}{table}[][\FBwidth]

\def\for{\hbox{ for }}

\def\prob{\hbox{Pr}}
\def\Min{\text{Min}}
\def\Max{\text{Max}}

\def\bA{{\bf A}}
\def\bB{{\bf B}}
\def\bM{{\bf M}}
\def\bW{{\bf W}}
\def\bP{{\bf P}}
\def\RR{\mathbb R}
\usepackage{longtable}
\usepackage[titletoc]{appendix}
\usepackage{floatrow}
\usepackage{algorithm}
\usepackage{algorithmic}
\usepackage{framed}

\usepackage{cleveref}
\usepackage{xcolor}
\hypersetup{
    colorlinks,
    linkcolor={red!50!black},
    citecolor={blue!50!black},
    urlcolor={blue!80!black}
}

\def\citep{\cite}

\newtheorem{theorem}{Theorem}[section]
\newtheorem{lemma}[theorem]{Lemma}

\newtheorem{claim}{Claim}[section]

\newenvironment{proof}[1][Proof:]{\begin{trivlist}
\item[\hskip \labelsep {\bfseries #1}]}{\end{trivlist}}
\newenvironment{definition}[1][Definition:]{\begin{trivlist}
\item[\hskip \labelsep {\bfseries #1}]}{\end{trivlist}}

\newcommand{\qed}{\nobreak \ifvmode \relax \else
      \ifdim\lastskip<1.5em \hskip-\lastskip
      \hskip1.5em plus0em minus0.5em \fi \nobreak
      \vrule height0.4em width0.5em depth0.25em\fi}
\newcommand{\plmi}{$\pm\;$} 

\def\prob{\text{Prob}}

\title{ A provable SVD-based algorithm for learning topics in
dominant admixture corpus}

\author{
Trapit Bansal$\dagger$, C. Bhattacharyya$\ddagger$ \\
Department of Computer Science and Automation\\
Indian Institute of Science\\
Bangalore -560012, India \\
$\dagger$\texttt{trapitbansal@gmail.com} \\
$\ddagger$\texttt{chiru@csa.iisc.ernet.in} \\
\And
Ravindran Kannan \\
Microsoft Research \\
India\\
\texttt{kannan@microsoft.com} 
}

%

\nipsfinalcopy 

\begin{document}

\maketitle

\begin{abstract}
Topic models, such as Latent Dirichlet Allocation (LDA), posit that
documents are
drawn from admixtures of distributions over words, known as topics.
The inference problem of recovering topics from such a collection of documents drawn from admixtures,
is NP-hard.
Making a strong assumption called \emph{separability},
 \cite{AGM} gave the first provable algorithm
for inference. For the widely used LDA model, \cite{anandkumar-2} gave a provable algorithm using clever tensor-methods.
But \cite{AGM,anandkumar-2} do not learn topic vectors with bounded $l_1$ error (a natural measure for
probability vectors).

Our aim is to develop a model which makes intuitive and empirically supported
assumptions and to design an algorithm with natural, simple components such as SVD,
which provably solves the inference problem for the model with bounded $l_1$ error.
A topic in LDA and other models is essentially characterized by a group
of co-occurring words. Motivated by this, we introduce topic specific \emph{Catchwords}, a group of
words which occur with strictly  greater frequency in a topic than any other
topic individually and are required to have high frequency together rather than individually.
A major contribution of the paper is to show that
under this more realistic assumption, which is empirically verified on real corpora,
a singular value decomposition (SVD) based algorithm with a crucial pre-processing step
of thresholding,
can provably recover the topics from a collection of documents drawn from
\emph{Dominant admixtures}. Dominant admixtures are convex combination
of distributions in which one distribution has a significantly higher
contribution than the others. 
Apart from the simplicity of the algorithm, the
sample complexity has near optimal dependence on $w_0$,
the lowest probability that a topic is dominant, and is better
than \cite{AGM}. Empirical evidence
shows that on several real world corpora, both \emph{Catchwords} and
\emph{Dominant admixture} assumptions hold and the proposed algorithm
substantially outperforms the
state of the art \citep{arora}.
\end{abstract}

\section{Introduction}
Topic models \citep{survey}
assume that each document
 in a text corpus is generated from an \emph{ad-mixture}
of topics, where, each
topic is a distribution over words in a Vocabulary.
An admixture is a convex combination of distributions.
Words in the document are then picked in i.i.d.~trials, each trial
has a multinomial distribution over words given by the weighted combination of topic distributions. The problem of inference, recovering the topic
distributions
from such a collection of documents, is provably NP-hard.
Existing literature pursues techniques such as variational methods \citep{LDA} or MCMC procedures \citep{gibbs}
for approximating the maximum likelihood estimates.

Given the intractability of the problem one needs further assumptions
on topics to derive polynomial time algorithms which can
provably recover topics.
A possible (strong) assumption is that each document has only one topic but the collection can have many topics.
A document with only one topic is sometimes referred as a \emph{pure topic} document.
\citep{papa} proved that a natural algorithm, based on SVD,
recovers topics when each document is pure and in addition, for each topic, there is
a set of words, called \emph{primary words}, whose total frequency
in that topic is close to 1. More recently,
\citep{anandkumar-2}
show using tensor methods that if the topic weights
have Dirichlet distribution, we can learn the topic matrix. Note that
while this allows non-pure documents, the Dirichlet distribution gives
essentially uncorrelated topic weights.

In an interesting recent development  \cite{AGM, arora} gave the first
provable algorithm which can recover topics from a corpus of documents
drawn from admixtures, assuming \emph{separability}. Topics are said to be separable
if in every topic there exists at least one \emph{Anchor} word.
A word in a topic is said to be an an \emph{Anchor} word for that
topic if it has a high probability in that topic and
\emph{zero} probability in remaining topics. The requirement of high
probability in a topic for a single word is unrealistic.

\paragraph{Our Contributions:}
Topic distributions, such as those learnt in LDA, try to model the co-occurrence of a group
of words which describes a theme. Keeping this in mind we introduce
the notion of \emph{Catchwords}. A group of words are called
\emph{Catchwords} of a topic, if each word occurs strictly more frequently in the topic than other
topics and together they have high frequency.
This is a much weaker assumption than
separability.
Furthermore we observe, empirically, that posterior topic weights assigned by LDA
to a document often have the property that
one of the weights is significantly higher than the rest.
Motivated by this observation, which has not been exploited by topic modeling
literature, we suggest a new assumption.
It is natural to assume that in a text corpus,
a document, even if it has multiple themes, will have an
overarching dominant theme. In this paper we focus
on document collections drawn from \emph{dominant admixtures}.
A document collection is said to be drawn from a dominant admixture if
for every document, there is one topic whose weight is
significantly higher than other topics and in addition, for every topic,
there is a small fraction of documents
which are nearly purely on that topic.
The main contribution of the paper is to show that
under these assumptions, our algorithm, which we call TSVD ,
indeed provably finds a good approximation in total $l_1$ error to
the topic matrix. We prove a bound on the error of our approximation which does not
grow with dictionary size $d$, unlike \cite{arora} where the error grows
linearly with $d$.

Empirical evidence shows that on semi-synthetic
corpora constructed from several real world datasets, as suggested by \cite{arora}, TSVD
substantially outperforms
the state of the art \citep{arora}. In particular it is seen that
compared to \cite{arora}
TSVD gives $27$\% lower error in terms of $L_1$ recovery on $90$\%
of the topics.

\paragraph{Problem Definition:}
$d,k,s$ will denote respectively, the number of words in the dictionary, number of topics and
number of documents. $d,s$ are large, whereas, $k$ is to be thought of as much smaller.
Let ${\cal S}_k=\{ x=(x_1,x_2,\ldots ,x_k): x_l\geq 0; \sum_l x_l=1\}$. For each topic, there
is a fixed vector in ${\cal S}_k$ giving the probability of each word in that topic. Let $\bM$
be the $d\times k$ matrix with these vectors as its columns.

Documents are picked in i.i.d. trials. To pick document $j$, one first picks a $k-$ vector
$W_{ij},W_{2j},\ldots , W_{kj}$ of topic weights according to
a fixed distribution on ${\cal S}_k$. Let
$P_{\cdot, j}= \bM W_{\cdot,j}$ be the weighted combination of the topic vectors. Then the $m$
words of the document are picked in i.i.d. trials; each trial picks a word according to the
multinomial distribution with $P_{\cdot,j}$ as the probabilities. All that is given as
data is the frequency of words in each document, namely, we are given the $d\times s$ matrix
$\bA$, where~ $A_{ij}=\frac{\text{Number of occurrences of word $i$ in Document $j$}}{m}.$
Note that $E(\bA | \bW) = \bP $, where, the expectation is taken entry-wise.

\emph{In this paper we consider the problem of finding $\bM$ given $\bA$.}

\section{Previous Results}\label{sec:prev}
In this section we review literature related to designing provable algorithms
for topic models. For an overview of topic models we refer the reader to
the excellent survey\citep{survey}.
Provable algorithms for recovering topic models was started by \cite{papa}.
Latent Semantic Indexing(LSI) \citep{lsi} remains a successful method for retrieving
similar documents by using SVD. \cite{papa} showed that one can recover $\bM$ from a collection of documents,
with \emph{pure topics}, by using SVD based procedure under the additional Primary Words Assumption.
%
 \citep{anandkumar-2} showed that in the admixture case, if one assumes Dirichlet distribution
for the topic weights, then, indeed, using tensor methods, one can learn $\bM$ to $l_2$ error provided some
added assumptions on numerical parameters like condition number are satisfied. 

The first provably polynomial time algorithm  for admixture corpus
 was given in \citep{AGM,arora}.
For
a  topic $l$, a word $i$ is an {\bf anchor word} if
$M_{i,l}\geq p_0\quad ;\quad M_{i,l'}=0\quad\forall l'\not= l.$
\begin{theorem}\citep{AGM}
If every topic has an anchor word,
there is a polynomial time algorithm that returns an $\hat M$ such that with high probability,
$$\sum_{l=1}^k\sum_{i=1}^d |\hat M_{il}-M_{il}|\leq \; d\varepsilon\; \;  \text{    provided   }\;
s\geq \text{Max}\left\{ O\left( \frac{ k^6\log d}{a^4\varepsilon ^2p_0^6\gamma^2m}\right), O\left( \frac{k^4}{\gamma^2a^2}\right)\right\},$$
where, $\gamma$ is the condition number of $E(WW^T)$, $a$ is the minimum expected weight
of a topic and $m$ is the number of words in each document.
\end{theorem}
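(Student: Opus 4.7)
The plan is to follow the three-stage strategy of \cite{AGM}: form a word-word co-occurrence matrix, locate anchor words as geometric extreme points among its rows, then invert a small linear system via Bayes' rule to recover $\bM$ column by column.

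First I would build the $d\times d$ matrix $Q$ whose entry $Q_{ij}$ estimates the probability that two random positions of a random document are filled by words $i$ and $j$. Standard debiasing (subtracting the same-position contribution) yields $E[Q]=\bM R\bM^{T}$ with $R=E(\bW \bW^{T})$; a matrix-Bernstein concentration bound over the $s$ documents controls $\|Q-E[Q]\|$ in terms of $s$ and $m$, and this is the only place randomness enters. Normalizing row $i$ of $Q$ by its $\ell_{1}$ norm produces an approximation to the posterior vector $\left(\Pr[\text{topic}=l\mid\text{word}=i]\right)_{l=1}^{k}$. By the anchor-word assumption ($M_{i,l}\ge p_{0}$ and $M_{i,l'}=0$ for $l'\ne l$), the normalized row of an anchor word for topic $l$ sits at a standard-basis-like vertex, while every non-anchor row is a strict convex combination of the $k$ anchor rows, weighted by its own posterior topic masses.

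Next I would locate the anchors by a greedy farthest-point procedure on the normalized rows: pick the row of largest norm, then iteratively pick the row furthest from the affine span of previous picks. Under the anchor assumption the true anchor rows are geometrically separated vertices, with the gap controlled by $p_{0}$ and the condition number $\gamma$ of $E(\bW \bW^{T})$, so a small spectral perturbation of $Q$ still returns approximate anchors. With anchors $i_{1},\dots,i_{k}$ in hand, express each non-anchor row as a non-negative combination of anchor rows to read off $C_{il}=\Pr[\text{topic}=l\mid\text{word}=i]$, and convert via Bayes' rule to $M_{il}=C_{il}\,\Pr[\text{word}=i]/\sum_{i'}C_{i'l}\Pr[\text{word}=i']$, estimating both marginals from $\bA$.

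The main obstacle is quantitative: propagating the spectral error in $Q$ through the non-negative recovery and the Bayes inversion. Each step amplifies errors by factors scaling like $1/\gamma$, $1/p_{0}$ and $1/a$, and the exponents $p_{0}^{-6}$, $a^{-4}$, $\gamma^{-2}$ in the sample bound arise by multiplying these amplifications together, with the $\log d$ and high $k$ powers coming from union-bounded matrix concentration. The delicate step is establishing stability of the non-negative recovery, which I would handle by lower-bounding the smallest singular value of a suitably rescaled Gram matrix of the anchor rows as a function of $p_{0}$ and $\gamma$. The second, deterministic term $O(k^{4}/(\gamma^{2}a^{2}))$ inside the $\text{Max}$ reflects the additional documents needed merely so that the conditioning of $E(\bW \bW^{T})$ can itself be estimated to constant relative accuracy, and the final bound $\sum_{l,i}|\hat M_{il}-M_{il}|\le d\varepsilon$ follows by summing $O(\varepsilon)$ column errors.
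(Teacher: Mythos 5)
This statement is not proved in the paper at all: it is Theorem~2.1, quoted verbatim from \cite{AGM} as background in Section~2 to motivate the new model, and the paper's own analysis (the TSVD algorithm, Catchwords, dominant admixtures) is entirely separate from it. So there is no ``paper's own proof'' to compare against; your proposal can only be judged as a blind reconstruction of the cited work's argument.

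As such a reconstruction, your outline is faithful to the anchor-word methodology: forming the debiased co-occurrence matrix $Q$ with $E[Q]=\bM R \bM^{T}$, observing that row-normalized $Q$ places anchor rows at simplex vertices and non-anchor rows in their convex hull with coefficients $\Pr[\text{topic}\mid\text{word}]$, recovering those coefficients, and converting to $\bM$ by Bayes' rule. Two caveats are worth flagging. First, the greedy farthest-point anchor search and the Bayes-rule inversion you describe are really the mechanism of the follow-up practical algorithm \cite{arora}; the original \cite{AGM} proof locates anchors via linear programming (testing whether a row lies in the convex hull of the others) and recovers $\bM$ by matrix-algebraic means, so your route proves a theorem of this type but is not literally the cited proof. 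Second, and more substantively, the entire quantitative content of the theorem --- the exponents $p_0^{-6}$, $a^{-4}$, $\gamma^{-2}$, $k^{6}$, and the $\log d$ factor in the sample bound --- is asserted in your last paragraph (``arise by multiplying these amplifications together'') rather than derived; the stability of the non-negative recovery, which you correctly identify as the delicate step, is exactly where that bookkeeping must be carried out, and without it the sketch does not establish the stated sample complexity, only the qualitative recovery guarantee.
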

Note that the error grows linearly in the dictionary size $d$, which is often large.
Note also the dependence of $s$ on parameters $p_0$, which is, $1/p_0^6$ and on $a$, which is $1/a^4$.
If, say, the word ``run'' is an anchor word for the topic ``baseball'' and $p_0=0.1$, then
the requirement is that every 10 th word in a document on this topic is ``run''. This seems
too strong to be realistic. It would be more realistic to ask that a set of words like - ``run'', ``hit'', ``score'',
etc. together have frequency at least 0.1 which is what our catchwords assumption does.

\section{Learning Topics from Dominant Admixtures}
Informally, a document is said to be drawn
from a Dominant Admixture if the document has one \emph{dominant topic}.
Besides its simplicity, we show
empirical evidence from real corpora to demonstrate that topic dominance is a reasonable
assumption.
The Dominant Topic assumption is
 weaker than the Pure Topic assumption.
More importantly
SVD based procedures proposed by \citep{papa} will not apply.
Inspired by the \emph{Primary words} assumption we introduce the
assumption that each topic has a set
of \emph{Catchwords} which
individually occur more frequently in that topic than others. This is
again a much weaker assumption than both \emph{Primary Words} and \emph{Anchor
Words} assumptions and can be verified experimentally.
 In this section we establish that
by applying SVD on a matrix, obtained by thresholding the word-document matrix,
and subsequent $k$ means clustering can learn topics having Catchwords
from a Dominant Admixture corpus.

\subsection{Assumptions: Catchwords and Dominant admixtures}
\label{sec:assumptions}
Let
$\alpha,\beta,\rho,\delta,\varepsilon_0 $ be non-negative reals satisfying:
\begin{align}
\beta+\rho &\leq (1-\delta)\alpha\label{alpha-beta-rho}.\\
\alpha+2\delta &\leq 0.5\quad ;\quad \delta\leq 0.08\label{delta-inequality}.
\end{align}

{\bf Dominant topic Assumption} (a) For $j=1,2,\ldots ,s$, document $j$ has a dominant topic
$l(j)$ such that
$W_{l(j),j}\geq\alpha \text{  and  }  W_{l'j}\leq\beta,\;\; \forall l'\not= l(j).$

(b)For each topic $l$, there are at least $\varepsilon_0 w_0s$ documents in each of which topic $l$ has weight at least $1-\delta$.

{\bf Catchwords Assumption:}
There are $k$ disjoint sets of words - $S_1,S_2,\ldots ,S_k$
such that with $\varepsilon$ defined in (\ref{varepsilon-inequality})
\begin{align}
& \forall i\in S_l,\; \forall l'\not= l, \; M_{il'}\leq\rho M_{il}\label{401}\\
& \sum_{i\in S_l}M_{il}\geq p_0\label{402}\\
&\forall i\in S_l,  m\delta^2\alpha M_{il}\geq 8\ln\left( \frac{20}{\varepsilon w_0}\right)\label{def:catch}.
\end{align}

Part (b.) of the Dominant Topic Assumption is in a sense necessary for ``identifiability'' - namely for the model to
have a set of $k$ document vectors so that every document vector is in the convex hull of these
vectors.
The Catchwords assumption is natural to describe a theme
as it tries to model
a unique group of words which is likely to co-occur when a theme is expressed. This assumption is close to topics discovered by
LDA like models, which try to model of co-occurence of words.
If $\alpha,\delta\in\Omega(1)$, then, the assumption (\ref{def:catch}) says $M_{il}\in\Omega^*(1/m)$. In fact if
$M_{il}\in o(1/m)$, we do not expect to see word $i$ (in topic $l$), so it cannot be called a catchword at all.

A slightly different (but equivalent) description of the model will be useful to keep in mind.
What is fixed (not stochastic) are the matrices $\bM$ and the distribution of the weight matrix $\bW$.
To pick document $j$, we can first pick the dominant topic $l$ in document $j$
and condition the
distribution of $W_{\cdot ,j}$ on this being the dominant topic. One could instead also think of $W_{\cdot ,j}$
being picked from a mixture of $k$ distributions.
Then, we
let $P_{ij}=\sum_{l=1}^k M_{il}W_{lj}$ and
pick the $m$ words of the document in i.i.d multinomial trials as before.
We will assume that
$$T_l=\{ j: l\text{  is the dominant topic in document }j\} \text{  satisfies  }     |T_l|=w_ls,$$
where, $w_l$ is the probability of topic $l$ being dominant.
This is only approximately valid, but the error is small enough that we can disregard it.

For $\zeta\in \{ 0,1,2,\ldots ,m\}$, let $p_i(\zeta,l)$ be the probability that $j\in T_l$ and $A_{ij}=\zeta/m$ and $q_i(\zeta,l)$ the corresponding ``empirical probability'':
\begin{align}
p_i(\zeta,l) &= \int_{W_{\cdot, j}} {m\choose \zeta} P_{ij}^\zeta (1-P_{ij})^{m-\zeta}  \prob(W_{\cdot, j}\; |\; j\in T_l) \; \prob(j\in T_l), \text{  where,  }P_{\cdot,j}=\bM W_{\cdot,j}. \label{p-i-zeta-l}\\
q_i(\zeta,l)&=\frac{1}{s}\left| \{ j\in T_l: A_{ij}=\zeta/m\}\right| \label{q-i-zeta-l}.
\end{align}
Note that $p_i(\zeta,l)$ is a real number, whereas, $q_i(\zeta,l)$ is a random variable with
$E(q_i(\zeta,l))=p_i(\zeta,l)$.
We need a technical assumption on the $p_i(\zeta,l)$ (which is weaker than unimodality).

{\bf No-Local-Min Assumption:}
We assume that $p_i(\zeta,l)$ does not have a local
minimum, in the sense:
\begin{equation}\label{fil}
p_i(\zeta,l) \; > \; \Min (p_i(\zeta-1,l),p_i(\zeta+1,l))\; \forall \; \zeta \in \{ 1,2,\ldots ,m-1\}.
\end{equation}

The justification for the this assumption is two-fold. First, generally, Zipf's law
kind of behaviour where the number of words plotted against relative frequencies declines
as a power function has often been observed. Such a plot is monotonically decreasing and
indeed satisfies our assumption. But for Catchwords, we do not expect this behaviour - indeed,
we expect the curve to go up initially as the relative frequency increases, then reach a maximum
and then decline. This is a unimodal function and also satisfies our assumption. Indeed, we
have empirically observed, see {\tt EXPTS}, that these are essentially the only two behaviours.

{\bf Relative sizes of parameters}
Before we close the section we discuss the values of the parameters are
in order.
Here, $s$ is large. For asymptotic analysis, we can
think of it as going to infinity. $1/w_0$ is also large and can be thought of as going
to infinity. [In fact, if $1/w_0\in O(1)$, then, intuitively, we see that there is no
use of a corpus of more than constant size - since our model has i.i.d. documents, intuitively,
the number of samples we need should depend mainly on $1/w_0$].
$m$ is (much) smaller, but need not be constant.

$c$ refers to a generic constant independent of $m,s,1/w_0,\varepsilon,
\delta$; its value may be different in different contexts.

\subsection{The TSVD Algorithm}
\label{sec:algo}
Existing SVD based procedures for clustering on raw word-document matrices
fail because
the spread of frequencies of a word within a topic is often more (at least not significantly less)
than the gap between the word's frequencies in two different topics.
Hypothetically the frequency for the word  \emph{run},
in the topic \emph{Sports}, may range from say 0.01 on up. But in other
topics, it may range from 0 to 0.005 say.
The success of the algorithm will lie on correctly identifying the dominant topics such as sports by identifying that the word \emph{run} has occurred
with high frequency. In this example, the gap (0.01-0.005) between Sports and other topics is less than the spread within
Sports (1.0-0.01), so a 2-clustering approach (based on SVD) will split the topic Sports into two. While this is a toy
example, note that if we threshold the frequencies at say 0.01, ideally, sports will be all above and the
rest all below the threshold, making the succeeding job of clustering easy.

There are several issues in
extending beyond the toy case. Data is not one-dimensional. We will use different thresholds for each
word; word $i$ will have a threshold $\zeta_i/m$. Also, we have to compute $\zeta_i/m$.
Ideally we would not like to split any $T_l$, namely, we would like that
for each $l$ and and each $i$, either most $j\in T_l$ have $A_{ij}>\zeta_i/m$ or most
$j\in T_l$ have $A_{ij}\leq\zeta_i/m$. We will show that our threshold procedure indeed achieves this.
One other nuance: to avoid conditioning, we split the data $\bA$ into two parts ${\bf A^{(1)}}$ and
${\bf A^{(2)}}$, compute the thresholds using ${\bf A^{(1)}}$ and actually do the thresholding on
${\bf A^{(2)}}$. We will assume that the intial $\bA$ had $2s$ columns, so each part now has $s$ columns.
Also, $T_1,T_2,\ldots ,T_k$ partitions the columns of $\bA^{(1)}$ as well as those of $\bA^{(2)}$.
The columns of thresholded matrix $\bB$ are then clustered, by a technique we call
Project and Cluster, namely, we project the columns of $\bB$ to its $k-$dimensional SVD
subspace and cluster in the projection. The projection before clustering has recently been
proven \citep{KK} (see also \citep{AS}) to yield good starting cluster centers. The clustering so found is not yet
satisfactory. We use the classic Lloyd's $k$-means
algorithm proposed by \cite{Lloyd82}. As we will show, the partition produced after clustering,
$\{R_1,\ldots,R_k\}$ of $\bA^{(2)}$
is close to the partition induced by the Dominant Topics, $\{T_1,\ldots,T_k\}$.
Catchwords of topic $l$ are now (approximately) identified as the most frequently occurring words in documents
in $R_l$. Finally, we identify nearly pure documents in $T_l$ (approximately) as the documents
in which the catchwords occur the most. Then we get an approximation to $M_{\cdot, l}$ by averaging these
nearly pure documents. We now describe the precise algorithm.

\subsection{Topic recovery using Thresholded SVD}

{\bf Threshold SVD based K-means (TSVD) }
\begin{equation}\label{varepsilon-inequality}
\varepsilon  = \text{Min }\left( \frac{1}{900c_0^2}\frac{\alpha p_0}{k^3m}\; ,\; \frac{\varepsilon _0 \sqrt{\alpha p_0}\delta }{640m\sqrt k}\; ,\;
\right).
\end{equation}

\begin{enumerate}
    \item  Randomly partition the columns of $\bA$ into two
matrices $\bA^{(1)}$ and $\bA^{(2)}$ of $s$~columns each.
    \item {\bf Thresholding}
    \begin{enumerate}
         \item {\bf Compute Thresholds on $\bA^{(1)}$} For each $i$, let
            $\zeta_i$ be the highest value of $\zeta\in \{ 0,1,2,\ldots ,m\}$ such that $|\{ j: A^{(1)}_{ij}> \frac{\zeta}{m}\}|\geq \frac{w_0}{2} s$;~
$ |\{ j: A^{(1)}_{ij} =\frac{\zeta}{m}\}| \leq  3\varepsilon w_0s.$
        \item {\bf Do the thresholding on }$\bA^{(2)}$:~
            $ B_{ij}=\begin{cases} \sqrt {\zeta_i} & \mbox{ if } A^{(2)}_{ij} > \zeta_i/m \mbox{ and }\zeta_i\geq 8\ln (20/\varepsilon w_0)\\
                    0 &\mbox{ otherwise  } \end{cases}.$
    \end{enumerate}
    \item {\bf SVD} Find the best rank $k$ approximation $\bB^{(k)}$ to $\bB$.
    \item {\bf Identify Dominant Topics}
    \begin{enumerate}
       \item {\bf Project and  Cluster} Find (approximately) optimal $k-$ means clustering of
        the columns of $\bB^{(k)}$.
        \item {\bf Lloyd's Algorithm} Using the clustering found in Step 4(a) as the starting clustering,
            apply Lloyd's algorithm $k$ means algorithm to the columns of $\bB$ ($\bB$, not $\bB^{(k)}$).
        \item   Let $R_1,R_2,\ldots ,R_k$ be the $k-$partition of $[s]$
        corresponding to the clustering after Lloyd's. //*\emph{Will prove that $R_l\approx T_l$}*//
    \end{enumerate}
    \item {\bf Identify Catchwords}
   \begin{enumerate}
        \item For each $i,l$, compute $g(i,l)=$ the $\lfloor \varepsilon_0w_0s/2 \rfloor)$
        th highest element of $\{ A^{(2)}_{ij} : j\in R_l\}$.
        \item Let $J_l=\left\{ i: g(i,l)>\Max\left(\frac{4}{m\delta^2}\ln (20/\varepsilon w_0),\Max_{l'\not= l}\gamma\; g(i,l')\right)\right\},$
        where, $\gamma = \frac{1-2\delta}{(1+\delta)(\beta+\rho)}$.
   \end{enumerate}
   \item {\bf Find Topic Vectors}  Find the $\lfloor \varepsilon_0w_0s /2\rfloor $ highest $\sum _{i\in J_l}A_{ij}^{(2)}$ among all $j\in [s]$ and return the average
   of these $A_{\cdot,j}$
   as our approximation $\hat M_{\cdot, l}$ to $M_{\cdot,l}$.
\end{enumerate}

\begin{theorem}{\bf Main Theorem}\label{main-theorem}
Under the Dominant Topic, Catchwords and No-Local-Min assumptions, the algorithm
succeeds with high probability in finding an $\hat M$ so that
$$\sum_{i,l}|M_{il}-\hat M_{il}|\in O(k\delta), \text{   provided   }\footnote{The superscript $^*$ hides a logarithmic factor in $dsk/\delta_{\text{fail}}$, where,
$\delta_{\text{fail}}>0$ is the desired upper bound on the probability of failure.}
s\in \Omega^*\left( \frac{1}{w_0}\left( \frac{k^6m^2}{\alpha^2p_0^2}+\frac{m^2k}{\varepsilon_0^2\delta^2\alpha p_0}
 +\frac{d}{\varepsilon_0\delta^2}\right)\right).$$
\end{theorem}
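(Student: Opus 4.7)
The plan is to walk through the algorithm and bound the error introduced at each stage. The random split in Step 1 lets me treat $\zeta_i$ computed from $\bA^{(1)}$ as independent of $\bA^{(2)}$. For the thresholding, I would show that the cutoff $\zeta_i/m$ produced by Step 2(a) lies in a separating window for every catchword $i\in S_l$: documents $j\in T_l$ have $P_{ij}\ge\alpha M_{il}$ while documents $j\notin T_l$ have $P_{ij}\le(\beta+\rho)M_{il}$, which is strictly smaller by (\ref{alpha-beta-rho}). A Chernoff bound on $A^{(1)}_{ij}$ together with the two size constraints in Step 2(a) pins $\zeta_i$ in this gap; here the No-Local-Min assumption is crucial, since it prevents $\{q_i(\zeta,l)\}_\zeta$ from being bimodal and thus forces the highest $\zeta$ satisfying $|\{j:A^{(1)}_{ij}>\zeta/m\}|\ge w_0 s/2$ to lie above the noise floor rather than at a spurious secondary peak. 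Condition (\ref{def:catch}) then guarantees $\zeta_i\ge 8\ln(20/\varepsilon w_0)$, so $B_{ij}=\sqrt{\zeta_i}$ is actually used on a $(1-\varepsilon)$-fraction of $j\in T_l$, while $B_{ij}=0$ on all but an $\varepsilon$-fraction of $j\notin T_l$.

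Next, let $\mathbf{C}$ be the matrix whose $j$-th column is the conditional expectation of $\bB_{\cdot,j}$ given the dominant topic $l(j)$. Then $\mathbf{C}$ has rank at most $k$ and its $k$ distinct column-values (topic centers) are pairwise separated with a gap driven by $\sum_{i\in S_l}\zeta_i\in\Omega(mp_0)$ coming from (\ref{402}). By Eckart--Young, $\|\bB^{(k)}-\mathbf{C}\|_F\le 2\|\bB-\mathbf{C}\|_F$, and the latter is controlled by column-wise concentration using independence of word-sampling across documents. The Project-and-Cluster guarantee \cite{KK} then yields a warm start for Lloyd's whose misclassification rate against the $T_l$-partition is bounded by $\|\bB^{(k)}-\mathbf{C}\|_F^2$ divided by the squared inter-center separation; Lloyd's monotonicity only improves this, so I would conclude $|R_l\triangle T_l|\le \varepsilon w_0 s$ for each $l$. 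This step is the main obstacle: tracking the quantitative dependence on $k,m,p_0,\alpha$ tightly enough to derive the $k^6 m^2/(\alpha^2 p_0^2)$ term in the sample bound requires carefully distinguishing how much of the perturbation $\|\bB-\mathbf{C}\|_F$ is absorbed by the rank-$k$ projection versus propagated into cluster-assignment errors after Lloyd's refinement, and it is where the SVD of a thresholded (rather than raw) matrix pays off by amplifying the signal-to-noise ratio at catchword coordinates.

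Given $R_l\approx T_l$, the empirical quantile $g(i,l)$ from Step 5(a) approximates the true population $\lfloor\varepsilon_0 w_0 s/2\rfloor$-th quantile of $A_{ij}$ on $j\in T_l$ up to $O(\varepsilon)$ error, by a routine order-statistic Chernoff argument applied to the symmetric difference $R_l\triangle T_l$. For $i\in S_l$, the $\varepsilon_0 w_0 s$ near-pure documents guaranteed by Dominant Topic (b) force $g(i,l)\ge (1-\delta) M_{il}$, while for $l'\ne l$ one has $g(i,l')\le (1+\delta)(\beta+\rho) M_{il}$; the ratio $\gamma=(1-2\delta)/((1+\delta)(\beta+\rho))$ in the definition of $J_l$ is precisely calibrated under (\ref{alpha-beta-rho}) so that $S_l\subseteq J_l$ and $J_l$ excludes any word strongly present in other topics. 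Consequently the documents $j$ maximizing $\sum_{i\in J_l} A^{(2)}_{ij}$ are, with high probability, near-pure for topic $l$: $W_{l,j}\ge 1-\delta$ and hence $\|P_{\cdot,j}-M_{\cdot,l}\|_1\le 2\delta$. Averaging these $\Omega(\varepsilon_0 w_0 s)$ near-pure documents and applying a Chernoff bound for multinomial sampling coordinate-wise across the full dictionary gives $\|\hat M_{\cdot,l}-M_{\cdot,l}\|_1=O(\delta)$; summing over $l$ yields the claimed $O(k\delta)$ total $l_1$ error, and the $d/(\varepsilon_0\delta^2)$ term in the sample complexity is exactly what is needed for this final averaging to converge over all $d$ coordinates.
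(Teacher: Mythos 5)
Your outline tracks the paper's at the top level (threshold, cluster via \cite{KK}, identify catchwords, average near-pure documents), but it has genuine gaps at the two places where the paper has to work hardest. First, you invoke the No-Local-Min assumption only for catchwords, to pin $\zeta_i$ in the gap between $\alpha M_{il}$ and $(\beta+\rho)M_{il}$. The paper uses that assumption for a different and indispensable purpose: to show that for \emph{every} word $i$, catchword or not, and every topic $l$, the threshold does not split $T_l$ (Lemma \ref{one-side-big-2}), i.e.\ either almost all $j\in T_l$ land above $\zeta_i/m$ or almost all land below. That no-split property is what gives $\mathrm{Var}(B_{ij})\leq 2\varepsilon_l\zeta_i'$ (Lemma \ref{define-R}) instead of the trivial $O(\zeta_i')$, and this factor of $\varepsilon_l$ is exactly what makes the noise $\sigma$ small enough (Theorem \ref{spectral-norm}) to beat a center separation of only $\Omega(\sqrt{\alpha p_0 m})$. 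Your plan says nothing about how thresholds act on non-catchwords, yet those coordinates dominate the noise in $\bB$; without the no-split property the clustering analysis collapses whether you measure noise in Frobenius or spectral norm. Relatedly, your claim that ``Lloyd's monotonicity only improves'' the misclassification rate is false: monotone decrease of the $k$-means objective says nothing about agreement with the ground-truth partition $\{T_l\}$, which is precisely why the paper needs the proximity condition of \cite{KK} (Theorem \ref{proximity-KK}), stated in terms of the spectral norm $\sigma$ and the factor $c_0k/\sqrt{w_0}$, rather than a Frobenius-over-separation counting argument.

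The second, and most serious, gap is in the final step. You select $L$ as the documents maximizing $\sum_{i\in J_l}A^{(2)}_{ij}$ and then ``apply a Chernoff bound for multinomial sampling coordinate-wise'' to their average. But both $J_l$ and $L$ are functions of $\bA^{(2)}$ itself, so the selected columns are not an independent sample and that Chernoff bound is invalid as stated. The paper's Lemma \ref{pure-docs} exists precisely to repair this: it replaces the random set $J_l$ by a deterministic superset $K_l$, proves the purity statements uniformly over all subsets $N_l\subseteq K_l$, and controls the average over $L$ by a union bound over all $\binom{s}{\lfloor \varepsilon_0w_0s/2\rfloor}$ candidate sets, using McDiarmid's bounded-difference inequality (the extra factor of $m$ in the exponent, coming from each document being an average of $m$ independent word draws, is what beats the entropy of this union bound) together with a Jensen bound $EX\leq\sqrt{d}/\sqrt{|L|}$. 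That uniformization is also where the $d/(\varepsilon_0\delta^2)$ term in the sample complexity actually arises; asserting that this term ``is exactly what is needed'' for a naive coordinate-wise argument does not substitute for it. Without these two repairs your proposal does not prove the theorem.
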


A note on the sample complexity is in order. Notably, the dependence of $s$ on $w_0$ is best possible
(namely $s\in\Omega^*(1/w_0)$) within logarithmic factors, since, if we had fewer than $1/w_0$ documents,
a topic which is dominant with probability only $w_0$ may have none of the documents in the collection.
The dependence of $s$ on $d$ needs to be at least $d/\varepsilon_0w_0\delta^2$: to see this,
note that we only assume that there are $r=O(\varepsilon _0w_0s)$ nearly pure documents on each topic.
Assuming we can find this set (the algorithm approximately does), their average has standard deviation of about
$\sqrt{M_{il}}/\sqrt r$ in coordinate $i$.
If
topic vector $M_{\cdot, l}$ has $O(d)$ entries, each of size $O(1/d)$,
to get an approximation of
$M_{\cdot,l}$ to $l_1$ error $\delta$, we need the per coordinate error $1/\sqrt {dr}$ to be at most $\delta/d$ which implies $s\geq d/\varepsilon_0w_0\delta^2$.
Note that to get comparable error in \cite{AGM}, we need a quadratic dependence on $d$.

There is a long sequence of Lemmas to prove the theorem. The Lemmas and the proofs are given in Appendix.
The essence of the proof lies in proving that the clustering
step correctly identifies the partition induced by the dominant topics.
For this, we take advantage of a recent
development
on the $k-$means algorithm from
\cite{KK}
[see also \cite{AS}], where, it is shown that under a condition called
the \emph{Proximity Condition},
Lloyd's $k$ means algorithm starting with the centers provided by the SVD-based algorithm,
correctly identifies almost all the documents' dominant topics.  We prove that indeed the Proximity Condition
holds. This calls for machinery from Random Matrix theory (in particular bounds on singular values). We
prove that the singular values of the thresholded word-document matrix are nicely bounded.
Once the dominant topic of each document is identified, we are able to find the Catchwords for each topic.
Now, we rely upon part (b.) of the Dominant Topic assumption : that is there is a small fraction of nearly Pure
Topic-documents for each topic. The Catchwords help isolate the nearly pure-topic documents and hence find the
topic vectors. The proofs are complicated by the fact that each step of the algorithm induces conditioning on the data-
for example, after clustering, the document vectors in one cluster are not anymore independent.

\section{Experimental Results}\label{sec:expts}
We compare the thresholded SVD based k-means (\textbf{TSVD}\footnote{Resources available at: \url{http://mllab.csa.iisc.ernet.in/tsvd}}) algorithm \ref{sec:algo} with the algorithms of \cite{arora}, \emph{Recover-KL} and \emph{Recover-L2}, using the code made available by the authors\footnote{\url{http://www.cs.nyu.edu/~halpern/files/anchor-word-recovery.zip}}.
We first provide empirical support for the algorithm assumptions in Section \ref{sec:assumptions}, namely the dominant topic and the catchwords assumption.
Then we show on 4 different semi-synthetic data that TSVD provides as good or better recovery of topics than the Recover algorithms.
Finally on real-life datasets, we show that the algorithm performs as well as \cite{arora} in terms of perplexity and topic coherence.

\paragraph{Implementation Details:}
TSVD parameters ($w_0,\;\varepsilon,\;\varepsilon_0,\;\gamma$) are not known in advance for real corpus. We tested empirically for multiple settings and the following values gave the best performance.
Thresholding parameters used were: $w_0=\frac{1}{k}$, $\varepsilon = \frac{1}{6}$. 
For finding the catchwords, $\gamma = 1.1, \varepsilon_0=\frac{1}{3}$ in step 5.
For finding the topic vectors (step 6), taking the top 50\% ($\varepsilon_0 w_0= \frac{1}{k}$) gave empirically better results.
The same values were used on all the datasets tested.
The new algorithm is sensitive to the initialization of the first k-means step in the projected SVD space. 
To remedy this, we run 10 independent random initializations of the algorithm with K-Means++ \cite{kmeanspp} and report the best result.

{\bf Datasets: }
We use four real word datasets in the experiments.
As pre-processing steps we removed standard stop-words, selected the vocabulary size by term-frequency and removed documents with less than 20 words. Datasets used are:
 (1) {\bf NIPS}\footnotemark[4]\footnotetext[4]{\url{http://archive.ics.uci.edu/ml/datasets/Bag+of+Words}}: Consists of 1,500 NIPS full papers, vocabulary of 2,000 words and mean document length 1023. 
(2) {\bf NYT}\footnotemark[4]: Consists of a random subset of 30,000 documents from the New York Times dataset, vocabulary of 5,000 words and mean document length 238.
(3) {\bf  Pubmed}\footnotemark[4]: Consists of a random subset of 30,000 documents from the Pubmed abstracts dataset, vocabulary of 5,030 words and mean document length 58.
(4) {\bf 20NewsGroup}\footnotemark[5]\footnotetext[5]{\url{http://qwone.com/~jason/20Newsgroups}} (20NG): Consist of 13,389 documents, vocabulary of 7,118 words and mean document length 160.

\subsection{Algorithm Assumptions}
\label{sec:assumption}
To check the \emph{dominant topic} and \emph{catchwords} assumptions, we first run 1000 iterations of Gibbs sampling on the real corpus and learn the posterior document-topic distribution ($\{W_{.,j}\}$) for each document in the corpus
(by averaging over 10 saved-states separated by 50 iterations after the 500 burn-in iterations).
We will use this posterior document-topic distribution as the document generating distribution to check the two assumptions.

\textbf{Dominant topic assumption: }
Table \ref{tab:assumptions} shows the fraction of the documents in each corpus which satisfy this assumption with $\alpha=0.4$ (minimum probability of dominant topic) and $\beta=0.3$ (maximum probability of non-dominant topics).
The fraction of documents which have almost pure topics with highest topic weight at least 0.95 ($\delta = 0.05$) is also shown. 
The results indicate that the dominant topic assumption is well justified (on average 64\% documents satisfy the assumption) 
and there is also a substantial fraction of documents satisfying almost pure topic assumption.

\textbf{Catchwords assumption: }
We first find a $k$-clustering of the documents $\{T_1,\ldots,T_k\}$ by assigning all documents which have highest posterior probability for the same topic into one cluster.
Then we use step 5 of TSVD (Algorithm \ref{sec:algo}) to find the set of catchwords for each topic-cluster, i.e. $\{S_1,\ldots,S_k\}$, 
with the parameters:
$\epsilon_0 w_0=\frac{1}{3k}$, $\gamma = 2.3$ (taking into account constraints in Section \ref{sec:assumptions}, $\alpha = 0.4, \beta=0.3, \delta=0.05, \rho = 0.07$).
Table \ref{tab:assumptions} reports the fraction of topics with non-empty set of catchwords and the average per topic frequency of the catchwords\footnotemark[6]\footnotetext[6]{$\left( \frac{1}{k}\sum_{l=1}^{k} \frac{1}{|T_l|}\sum_{i\in S_l} \sum_{j\in T_l} A_{ij} \right)$}.
Results indicate that most topics on real data contain catchwords (Table \ref{tab:assumptions}, second-last column).
Moreover, the average per-topic frequency of the group of catchwords for that topic is also quite high (Table \ref{tab:assumptions}, last column).

\textbf{No-Local-Min Assumption:}
To provide support and intuition for the local-min assumption we consider the quantity $q_i(\zeta,l)$, in (\ref{q-i-zeta-l}).
Recall that $\mathbb{E}[q_i(\zeta,l)] = p_i(\zeta,l)$, we will analyze the behavior of $q_i(\zeta,l)$ as a function of $\zeta$ for some topics $l$ and words $i$.
As defined, we need a fixed $m$ to check this assumption and so we generate semi-synthetic data with a fixed $m$ from LDA model trained on the real NYT corpus (as explained in Section \ref{sec:l1}).
We find catchwords and the sets $\{T_l\}$ as in the catchwords assumption above and plot $q_i(\zeta,l)$ separately for some random catchwords and non-catchwords by fixing some random $l \in [k]$.
Figure \ref{fig:localmin} shows the plots. As explained in \ref{sec:assumptions}, the plots are monotonically decreasing for non-catchwords and satisfy the assumption.
On the other hand, the plots for catchwords are almost unimodal and also satisfy the assumption.

\begin{table}[t!]
 \centering
 \begin{tabular}{|c|c|c||c|c|c|c|}
 \hline
  \multirow{2}{*}{\bf Corpus} & \multirow{2}{*}{$s$} & \multirow{2}{*}{$k$} & {\bf \% s with Dominant} & {\bf \% s with Pure} & {\bf \% Topics} & {\bf CW Mean} \\ 
   & & & {\bf Topics ($\alpha=0.4$)} & {\bf Topics ($\delta=0.05$)} & {\bf with CW} & {\bf Frequency}   \\ \hline
   NIPS & 1500 & 50 & 56.6\% & 2.3\% & 96\% & 0.05 \\ \hline
   NYT & 30000 & 50 & 63.7\% & 8.5\% & 98\% & 0.07 \\ \hline
   Pubmed & 30000 & 50 & 62.2\% & 5.1\% & 78\% & 0.05  \\ \hline
   20NG & 13389 & 20 & 74.1\% & 39.5\% & 85\% & 0.06 \\ \hline
 \end{tabular}
\caption{Algorithm Assumptions. For dominant topic assumption, fraction of documents with satisfy the assumption for $(\alpha,\beta)=(0.4,0.3)$ are shown.
\% documents with almost pure topics ($\delta=0.05$, i.e. $95\%$ pure) are also shown.
Last two columns show results for catchwords (CW) assumption.}
\label{tab:assumptions}
\end{table}

\begin{figure}[t!]
 \includegraphics[width=\textwidth]{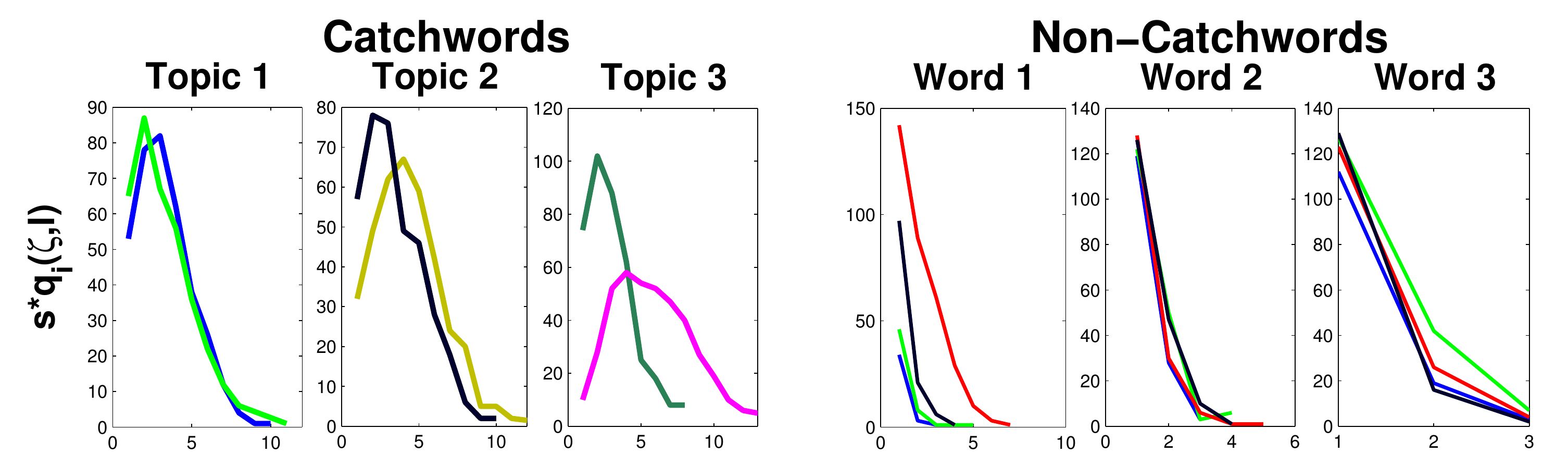}
 \caption{Plot of $q_i(\zeta,l)$ for some random catchwords (left) and non-catchwords (right). Each of three plots for catchword is for one topic ($l$) with two random catchwords ($i$) for each topic and each plot on right is for one non-catchword ($i$) with curves for multiple topics ($l$).}
 \label{fig:localmin}
\end{figure}

\subsection{Empirical Results}
\subsubsection{Topic Recovery on Semi-Synthetic Data}
\label{sec:l1}
\textbf{Semi-synthetic Data: }
Following \cite{arora}, we generate semi-synthetic corpora from LDA model trained by MCMC, to ensure that the synthetic corpora retain the characteristics of real data.
Gibbs sampling is run for 1000 iterations on all the four datasets and the final word-topic distribution is used to generate varying number of synthetic documents with document-topic distribution drawn from a symmetric Dirichlet with hyper-parameter 0.01.
For NIPS, NYT and Pubmed we use $k=50$ topics, for 20NewsGroup $k=20$, and mean document lengths of 1000, 300, 100 and 200 respectively.
Note that the synthetic data is \emph{not} guaranteed to satisfy dominant topic assumption for every document (on average about 80\% documents satisfy the assumption for value of $(\alpha,\beta)$ tested in Section \ref{sec:assumption})

\textbf{Topic Recovery:} 
We learn the word-topic distributions ($\hat{M}$) for the semi-synthetic corpora using TSVD and the Recover algorithms of \cite{arora}.
Given these learned topic distributions and the original data-generating distributions ($M$), we align the topics of $M$ and $\hat{M}$ by bipartite matching 
and rearrange the columns of $\hat{M}$ in accordance to the matching with $M$.
Topic recovery is measured by the average of the $l_1$ error across topics (called reconstruction error \citep{arora}), $\Delta(M,\hat{M})$, defined as:
 $\Delta(M,\hat{M}) = \frac{1}{k} \sum_{l=1}^k \sum_{i=1}^{d} |M_{il} - \hat{M}_{il}|$.

We report reconstruction error in Table \ref{tab:l1_all} for TSVD and the Recover algorithms, Recover-L2 and Recover-KL.
TSVD has smaller error on most datasets than the Recover-KL algorithm. We observed performance of TSVD to be always better than Recover-L2.
Best performance is observed on NIPS which has largest mean document length, indicating that larger $m$ leads to better recovery.
Results on 20NG are slightly worse than Recover-KL for small sample size (though better than Recover-L2), but the difference is small.
While the values in Table \ref{tab:l1_all} are averaged values,
Figure \ref{fig:hist_all} shows that TSVD algorithm achieves much better topic recovery (27\% improvement in $l_1$ error over Recover-KL)
for majority of the topics ($>$90\%) on most datasets.

\begin{table}[t!]
\centering 
\begin{tabular}{|>{\bfseries}c||c||c|c|c|c|}
\hline 
 Corpus & \textbf{Documents} & \textbf{Recover-L2} & \textbf{Recover-KL} & \textbf{TSVD} & \textbf{\% Improvement}
 \\[2pt] \hline \hline
 \multirow{4}{*}{NIPS} 
 & 40,000 & 0.342 & 0.308 & \textbf{0.115} & \textbf{62.7\%}
\\ \cline {2-6}
 & 50,000 & 0.346 & 0.308 & \textbf{0.145} & \textbf{52.9\%}
\\ \cline {2-6}
 & 60,000 & 0.346 & 0.311 & \textbf{0.131} & \textbf{57.9\%}
\\ \hline \hline

 \multirow{4}{*}{Pubmed} 
 & 40,000 & 0.388 & 0.332 & \textbf{0.288} & \textbf{13.3\%}
\\ \cline {2-6}
 & 50,000 & 0.378 & 0.326 & \textbf{0.280} & \textbf{14.1\%}
\\ \cline {2-6}
 & 60,000 & 0.372 & 0.328 & \textbf{0.284} & \textbf{13.4\%}
\\ \hline \hline

 \multirow{4}{*}{20NG} 
 & 40,000 & 0.126 & \textbf{0.120} & 0.124 & -3.3\%
\\ \cline {2-6}
 & 50,000 & 0.118 & 0.114 & \textbf{0.113} & \textbf{0.9}\%
\\ \cline {2-6} 
 & 60,000 & 0.114 & 0.110 & \textbf{0.106} & \textbf{3.6\%}
\\ \hline \hline

 \multirow{4}{*}{NYT} 
 & 40,000 & 0.214 & 0.208 & \textbf{0.195} & \textbf{6.3\%}
\\ \cline {2-6} 
 & 50,000 & 0.211 & 0.206 & \textbf{0.185} & \textbf{10.2\%}
\\ \cline {2-6} 
 & 60,000 & 0.205 & 0.200 & \textbf{0.194} & \textbf{3.0\%}
\\ \hline 
\end{tabular}
\caption{L1 reconstruction error on various semi-synthetic datasets. Last column is percent improvement over Recover-KL (best performing Recover algorithm).}
\label{tab:l1_all}
\end{table}

\begin{figure}[t!]
 \includegraphics[width=\textwidth]{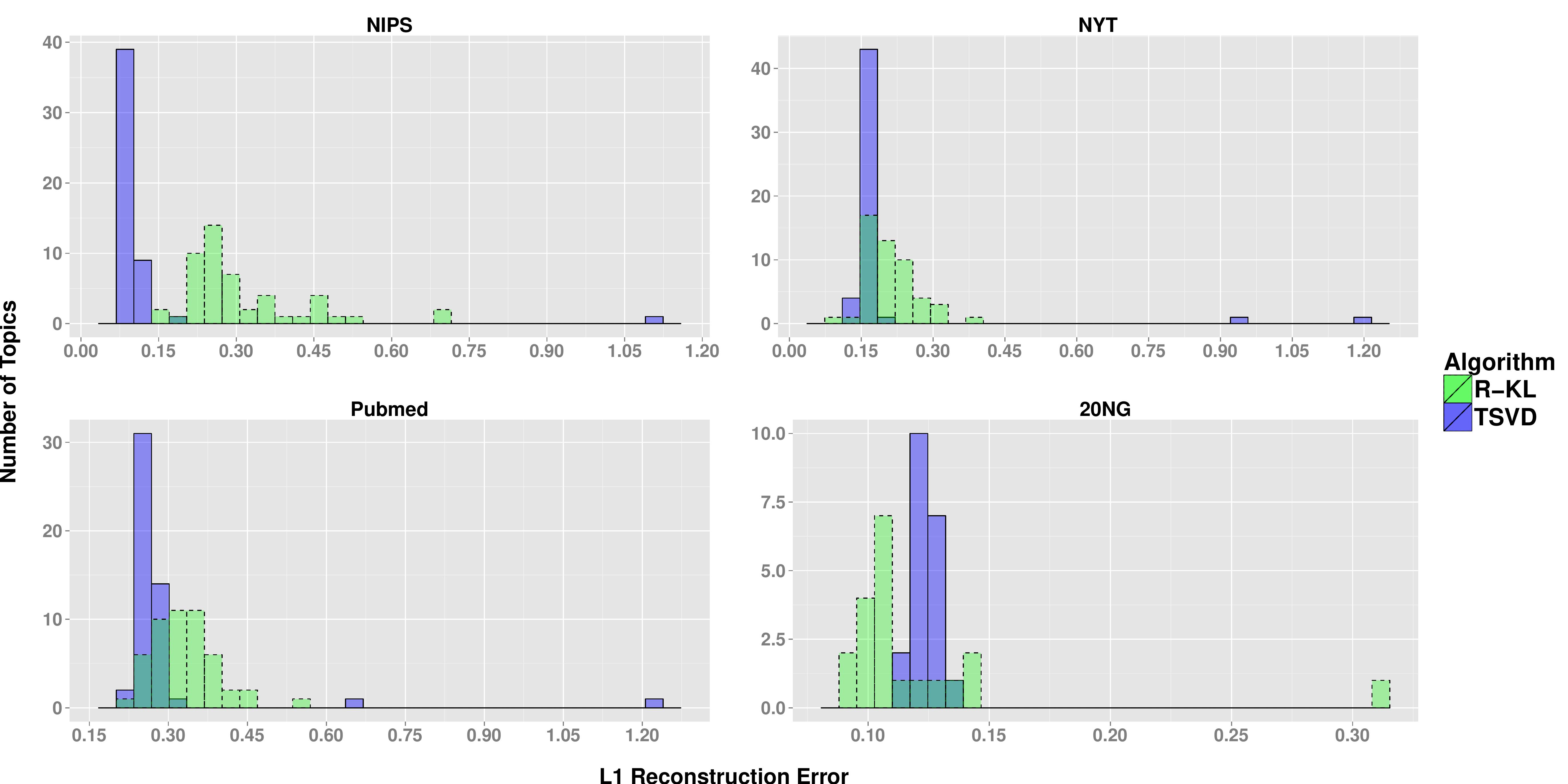}
 \caption{Histogram of $l_1$ error across topics for 40,000 synthetic documents. \emph{TSVD} (blue, solid border) gets better recovery on most topics ($>90\%$) for most datasets (leaving small number of outliers) than \emph{Recover-KL} (green, dashed border).}
 \label{fig:hist_all}
\end{figure}

\subsubsection{Topic Recovery on Real Data}
\paragraph{Perplexity:}
A standard quantitative measure used to compare topic models and inference algorithms is perplexity \citep{LDA}. Perplexity of a set of $D$ test documents, where each document $j$ consists of $m_j$ words, denoted by $\textbf{w}_j$, is defined as: 
$perp(D_{test}) = exp \left\{ - \frac{\sum_{j=1}^{D}\log p(\textbf{w}_j)}{\sum_{j=1}^{D}m_j} \right\}$.
To evaluate perplexity on real data, the held-out sets consist of 350 documents for NIPS, 10000 documents for NYT and Pubmed, and 6780 documents for 20NewsGroup.
Table \ref{tab:perp_tc} shows the results of perplexity on the 4 datasets.
TSVD gives comparable perplexity with Recover-KL, results being slightly better on NYT and 20NewsGroup which are larger datasets with moderately high mean document lengths. 

\paragraph{Topic Coherence:}

\cite{mimno} proposed Topic Coherence as a measure of semantic quality of the learned topics by approximating user experience of topic quality on top $d_0$ words of a topic.
Topic coherence is defined as 
$TC(d_0) = \sum_{i \le d_0} \sum_{j<i} \log \frac{D(w_i,w_j)+e}{D(w_j)}$,
where $D(w)$ is the document frequency of a word $w$, $D(w_i,w_j)$ is the document frequency of $w_i$ and $w_j$ together, and $e$ is a small constant.
We evaluate TC for the top 5 words of the recovered topic distributions and report the average and standard deviation across topics. 
TSVD gives comparable results on Topic Coherence (see Table \ref{tab:perp_tc}).

\paragraph{Topics on Real Data:}
Table \ref{tab:topics} shows the top 5 words of all 50 matched pair of topics on NYT dataset for TSVD, Recover-KL and Gibbs sampling.
Most of the topics recovered by TSVD are more closer to Gibbs sampling topics. 
Indeed, the total average $l_1$ error with topics from Gibbs sampling for topics from TSVD is 0.034, whereas for Recover-KL it is 0.047 (on the NYT dataset).

\begin{table}[t!]
 \centering
 \begin{tabular}{|c|c|c|c||c|c|c|}
  \hline
  \multirow{2}{*}{\bf Corpus} & \multicolumn{3}{c||}{\bf Perplexity} & \multicolumn{3}{c|}{\bf Topic Coherence} \\ \cline{2-7}
  & {\bf R-KL} & {\bf R-L2} & {\bf TSVD} & {\bf R-KL} & {\bf R-L2} & {\bf TSVD}\\ \hline
  NIPS & 754 & \textbf{749} & 835 & -86.4 \plmi 24.5 & -88.6 \plmi 22.7 & \textbf{-65.2 \plmi 29.4} \\ \hline  
  NYT & 1579 & 1685 & \textbf{1555} & -105.2 \plmi 25.0 & \textbf{-102.1 \plmi 28.2} & -107.6 \plmi 25.7 \\ \hline  
  Pubmed & \textbf{1188} & 1203 & 1307 & -94.0 \plmi 22.5 & -94.4 \plmi 22.5 & \textbf{-84.5 \plmi 28.7} \\ \hline 
  20NG & 2431 & 2565 & \textbf{2390} & -93.7 \plmi 13.6 & \textbf{-89.4 \plmi 20.7} & -90.4 \plmi 27.0 \\ \hline 
 \end{tabular}
 \caption{Perplexity and Topic Coherence. R-KL is Recover-KL, R-L2 is Recover-L2. Standard deviation for topic coherence across topics is also shown.}
 \label{tab:perp_tc}
\end{table}

{\bf Summary: }
We evaluated the proposed algorithm, \textbf{TSVD}, rigorously on multiple datasets with respect to the state of the art (Recover), following the evaluation methodology of \cite{arora}.
In Table \ref{tab:l1_all} we show that the L1 reconstruction error for the new algorithm is small and on average \textbf{19.6\%} better than the best results of the Recover algorithms \cite{arora}.
We also demonstrate that on real datasets the algorithm achieves comparable perplexity and topic coherence to Recover (Table \ref{tab:perp_tc}.
Moreover, we show on multiple real datasets that the algorithm assumptions are well justified in practice.

\section*{Conclusion}
Real world corpora often exhibits the property that
in every document
there is one topic dominantly present. A standard SVD based procedure will not
be able to detect these topics, however TSVD, a thresholded SVD based procedure, as suggested in this paper,
discovers these topics. 
While SVD is time-consuming, there have been a host of recent sampling-based approaches which
make SVD easier to apply to massive corpora which may be distributed among many servers. 
We believe that
apart from topic recovery, thresholded SVD can be applied even more broadly to similar problems, such as matrix factorization, and will be the basis for future
research. 

\LTcapwidth=\textwidth
\begin{longtable}{|>{\centering\arraybackslash}m{0.3\textwidth} | >{\centering\arraybackslash}m{0.3\textwidth} | >{\centering\arraybackslash}m{0.3\textwidth}|}
\caption{{Top 5 words of matched topic pairs for TSVD, Recover-KL and Gibbs sampling. Catchwords and anchor words in top 5 words are highlighted for TSVD and Recover-KL}} \\
\hline \textbf{TSVD} & \textbf{Recover-KL} & \textbf{Gibbs} 
\label{tab:topics} 
\endfirsthead
\caption{{Top 5 words of matched topic pairs for TSVD, Recover-KL and Gibbs sampling. Catchwords and anchor words in top 5 words are highlighted for TSVD and Recover-KL}} \\
\hline \textbf{TSVD} & \textbf{Recover-KL} & \textbf{Gibbs} \\ \hline
\endhead

\hline \multicolumn{3}{|r|}{{\emph{Continued on next page}}} \\ \hline
\endfoot

\hline \hline
\endlastfoot
\hline
\textbf{zzz\_elian} \textbf{zzz\_miami} \textbf{boy} \textbf{father} \textbf{zzz\_cuba}  & zzz\_elian boy zzz\_miami father family  & zzz\_elian zzz\_miami boy father zzz\_cuba \\ \hline 
\textbf{cup} \textbf{minutes} \textbf{add} \textbf{tablespoon} \textbf{oil}  & cup minutes \textbf{tablespoon} add oil  & cup minutes add tablespoon oil \\ \hline 
game team \textbf{yard} \textbf{zzz\_ram} season  & game team season play \textbf{zzz\_ram}  & team season game coach zzz\_nfl \\ \hline 
book find \textbf{british} sales \textbf{retailer}  & book find school woman women  & book find woman british school \\ \hline 
\textbf{run} \textbf{inning} \textbf{hit} season game  & run season game \textbf{inning} hit  & run season game hit inning \\ \hline 
\textbf{church} \textbf{zzz\_god} \textbf{religious} \textbf{jewish} \textbf{christian}  & \textbf{pope} church book jewish religious  & religious church jewish jew zzz\_god \\ \hline 
\textbf{patient} \textbf{drug} \textbf{doctor} \textbf{cancer} \textbf{medical}  & patient drug doctor percent found  & patient doctor drug medical cancer \\ \hline 
\textbf{music} \textbf{song} \textbf{album} \textbf{musical} \textbf{band}  & black reporter zzz\_new\_york zzz\_black show  & music song album band musical \\ \hline 
\textbf{computer} \textbf{software} system zzz\_microsoft company  & web www site \textbf{cookie} cookies  & computer system software technology mail \\ \hline 
\textbf{house} \textbf{dog} \textbf{water} \textbf{hair} look  & room show look home house  & room look water house hand \\ \hline 
\textbf{zzz\_china} trade zzz\_united\_states \textbf{nuclear} official  & zzz\_china \textbf{zzz\_taiwan} government trade zzz\_party  & zzz\_china zzz\_united\_states zzz\_u\_s zzz\_clinton zzz\_american \\ \hline 
\textbf{zzz\_russian} \textbf{war} \textbf{rebel} \textbf{troop} \textbf{military}  & zzz\_russian zzz\_russia war zzz\_vladimir\_putin rebel  & war military zzz\_russian soldier troop \\ \hline 
\textbf{officer} \textbf{police} \textbf{case} \textbf{lawyer} \textbf{trial}  & \textbf{zzz\_ray\_lewis} police case officer death  & police officer official case investigation \\ \hline 
\textbf{car} driver \textbf{wheel} race \textbf{vehicles}  & car driver truck system model  & car driver truck vehicle wheel \\ \hline 
\textbf{show} \textbf{network} \textbf{zzz\_abc} \textbf{zzz\_nbc} \textbf{viewer}  & \textbf{con} zzz\_mexico son federal mayor  & show television network series zzz\_abc \\ \hline 
\textbf{com} \textbf{question} \textbf{information} \textbf{zzz\_eastern} \textbf{sport}  & com information question zzz\_eastern sport  & com information daily question zzz\_eastern \\ \hline 
\textbf{book} author writer com \textbf{reader}  & \textbf{zzz\_john\_rocker} player team right braves  & book word writer author wrote \\ \hline 
\textbf{zzz\_al\_gore} zzz\_bill\_bradley campaign president democratic  & zzz\_al\_gore \textbf{zzz\_bill\_bradley} campaign president percent  & zzz\_al\_gore campaign zzz\_bill\_bradley president democratic \\ \hline 
\textbf{actor} film play movie character  & goal play team season game  & film movie award actor zzz\_oscar \\ \hline 
\textbf{school} \textbf{student} \textbf{teacher} \textbf{district} program  & school student program million children  & school student teacher program children \\ \hline 
\textbf{tax} \textbf{taxes} \textbf{cut} billion plan  & \textbf{zzz\_governor\_bush} tax campaign taxes plan  & tax plan billion million cut \\ \hline 
\textbf{percent} \textbf{stock} \textbf{market} \textbf{fund} \textbf{investor}  & million percent tax bond fund  & stock market percent fund investor \\ \hline 
team player season coach zzz\_nfl  & team season player coach \textbf{zzz\_cowboy}  & team player season coach league \\ \hline 
family home friend room school  & look gun game point shot  & family home father son friend \\ \hline 
\textbf{primary} zzz\_mccain voter zzz\_john\_mccain zzz\_bush  & \textbf{zzz\_john\_mccain} zzz\_george\_bush campaign republican voter  & zzz\_john\_mccain zzz\_george\_bush campaign zzz\_bush zzz\_mccain \\ \hline 
\textbf{zzz\_microsoft} \textbf{court} company case law  & \textbf{zzz\_microsoft} company computer system software  & zzz\_microsoft company window antitrust government \\ \hline 
\textbf{company} million percent \textbf{shares} \textbf{billion}  & million company stock percent \textbf{shares}  & company million companies business market \\ \hline 
\textbf{site} \textbf{web} \textbf{sites} com www  & web site zzz\_internet company com  & web site zzz\_internet online sites \\ \hline 
\textbf{scientist} \textbf{human} \textbf{cell} \textbf{study} \textbf{researcher}  & dog quick jump \textbf{altered} food  & plant human food product scientist \\ \hline 
\textbf{baby} \textbf{mom} percent home family  & \textbf{mate} women bird film idea  & women look com need telegram \\ \hline 
\textbf{point} game \textbf{half} shot team  & point game team season zzz\_laker  & game point team play season \\ \hline 
\textbf{zzz\_russia} \textbf{zzz\_vladimir\_putin} zzz\_russian \textbf{zzz\_boris\_yeltsin} \textbf{zzz\_moscow}  & zzz\_clinton government \textbf{zzz\_pakistan} zzz\_india zzz\_united\_states  & government political election zzz\_vladimir\_putin zzz\_russia \\ \hline 
com \textbf{zzz\_canada} www \textbf{fax} information  & \textbf{chocolate} food wine flavor buy  & www com hotel room tour \\ \hline 
\textbf{room} \textbf{restaurant} building \textbf{fish} \textbf{painting}  & zzz\_kosovo police \textbf{zzz\_serb} war official  & building town area resident million \\ \hline 
\textbf{loved} family show friend play  & film show movie music book  & film movie character play director \\ \hline 
\textbf{prices} percent \textbf{worker} oil price  & percent stock market economy prices  & percent prices economy market oil \\ \hline 
\textbf{million} \textbf{test} shares \textbf{air} \textbf{president}  & air wind snow \textbf{shower} weather  & water snow weather air scientist \\ \hline 
\textbf{zzz\_clinton} \textbf{flag} \textbf{official} \textbf{federal} \textbf{zzz\_white\_house}  & \textbf{zzz\_bradley} zzz\_al\_gore campaign zzz\_gore zzz\_clinton  & zzz\_clinton president gay mayor zzz\_rudolph\_giuliani \\ \hline 
\textbf{files} article computer art ball  & show film country right women  & art artist painting museum show \\ \hline 
\textbf{con} percent zzz\_mexico federal official  & official \textbf{zzz\_iraq} government zzz\_united\_states oil  & zzz\_mexico drug government zzz\_united\_states mexican \\ \hline 
\textbf{involving} book film case right  & \textbf{test} women study student found  & plane flight passenger pilot zzz\_boeing \\ \hline 
\textbf{zzz\_internet} \textbf{companies} company \textbf{business} \textbf{customer}  & company companies deal zzz\_internet \textbf{zzz\_time\_warner}  & media zzz\_time\_warner television newspaper cable \\ \hline 
zzz\_internet companies company business customer  & newspaper \textbf{zzz\_chronicle} zzz\_examiner zzz\_hearst million  & million money worker company pay \\ \hline 
\textbf{goal} \textbf{play} \textbf{games} \textbf{king} game  & \textbf{zzz\_tiger\_wood} shot tournament tour player  & zzz\_tiger\_wood tour tournament shot player \\ \hline 
\textbf{zzz\_american} zzz\_united\_states \textbf{zzz\_nato} \textbf{camp} war  & zzz\_israel \textbf{zzz\_lebanon} peace zzz\_syria israeli  & zzz\_israel peace palestinian talk israeli \\ \hline 
\textbf{team} season game player play  & team game point season player  & race won win fight team \\ \hline 
\textbf{reporter} \textbf{zzz\_earl\_caldwell} zzz\_black black look  & \textbf{corp} group list oil meeting  & black white zzz\_black hispanic reporter \\ \hline 
campaign \textbf{zzz\_republican} \textbf{republican} \textbf{zzz\_party} primary  & \textbf{zzz\_bush} zzz\_mccain campaign republican voter  & gun bill law zzz\_congress legislation \\ \hline 
\textbf{zzz\_bush} \textbf{zzz\_mccain} \textbf{campaign} primary republican  & flag black \textbf{zzz\_confederate} right group  & flag zzz\_confederate zzz\_south\_carolina black zzz\_south \\ \hline 
\textbf{zzz\_john\_mccain} campaign zzz\_george\_bush zzz\_bush republican  & official government case officer security  & court law case lawyer right \\ \hline 
\end{longtable}

\bibliographystyle{plain}

\begin{appendices}
\section{Line of Proof}\label{sec:lemmas}

We describe the Lemmas we prove to establish the result. The detailed proofs are in the Section \ref{sec:proof}.

\subsection{General Facts}
We start with a consequence of the no-local-minimum assumption. We use that assumption
solely through this Lemma.
\begin{lemma}\label{isoperimetry}
Let $p_i(\zeta,l)$ be as in (\ref{p-i-zeta-l}).
If for some $\zeta_0\in \{ 0,1,\ldots ,m\}$ and $\nu\geq 0$,
$\sum_{\zeta \geq \zeta_0}p_{i}(\zeta,l)\geq \nu$ and also $\sum_{\zeta \leq \zeta_0}p_i(\zeta,l)\geq \nu$
then, $p_i(\zeta_0,l)\geq \frac{\nu}{m}$.
\end{lemma}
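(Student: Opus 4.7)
The plan is to first observe that the no-local-minimum assumption actually forces the sequence $\zeta \mapsto p_i(\zeta, l)$ on $\{0, 1, \ldots, m\}$ to be unimodal, and then conclude the lemma by a simple pigeonhole applied to whichever monotone piece contains $\zeta_0$. Concretely, once $\zeta_0$ is known to sit at the top of a monotone stretch of length at most $m$, one of the two hypothesized sums (the one that lives on that stretch) is both $\geq \nu$ and $\leq m \cdot p_i(\zeta_0, l)$, giving the desired bound.

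For the unimodality step, I will fix a global maximizer $\zeta^* \in \{0, 1, \ldots, m\}$ of $p_i(\cdot, l)$ and show that $p_i(\cdot, l)$ is non-decreasing on $[0, \zeta^*]$ and non-increasing on $[\zeta^*, m]$. Suppose, toward contradiction, that some $\zeta < \zeta^*$ has $p_i(\zeta, l) > p_i(\zeta+1, l)$, and pick the largest such $\zeta$. By maximality, $p_i(\cdot, l)$ is non-decreasing on $[\zeta+1, \zeta^*]$, so in particular $p_i(\zeta+1, l) \leq p_i(\zeta+2, l)$; moreover $\zeta+1 \neq \zeta^*$, since otherwise $p_i(\zeta, l) > p_i(\zeta^*, l)$ would contradict the definition of $\zeta^*$. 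Thus $\zeta+1 \in \{1, \ldots, m-1\}$ and $p_i(\zeta+1, l) \leq \min(p_i(\zeta, l), p_i(\zeta+2, l))$, directly contradicting the no-local-minimum assumption. A symmetric argument handles $[\zeta^*, m]$.

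For the final pigeonhole, the edge cases $\zeta_0 = 0$ and $\zeta_0 = m$ are immediate, since $p_i(\zeta_0, l)$ then equals one of the two hypothesized sums and is hence $\geq \nu \geq \nu/m$. Otherwise $1 \leq \zeta_0 \leq m-1$. If $\zeta_0 \leq \zeta^*$, unimodality gives $p_i(\zeta, l) \leq p_i(\zeta_0, l)$ for all $0 \leq \zeta \leq \zeta_0$, so
$$\nu \;\leq\; \sum_{\zeta=0}^{\zeta_0} p_i(\zeta, l) \;\leq\; (\zeta_0 + 1)\, p_i(\zeta_0, l) \;\leq\; m\, p_i(\zeta_0, l).$$
If instead $\zeta_0 > \zeta^*$, the analogous estimate uses the sum on $\{\zeta_0, \ldots, m\}$, which has $m - \zeta_0 + 1 \leq m$ terms. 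The only nontrivial step is formalizing the implication ``no-local-minimum $\Rightarrow$ unimodal''; once that structural claim is in place, the lemma follows immediately from the pigeonhole estimate above.
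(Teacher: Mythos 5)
Your proof is correct and takes essentially the same route as the paper's: use the no-local-minimum assumption, via a contradiction at an extremal point, to obtain monotonicity of $p_i(\cdot,l)$ on a stretch containing $\zeta_0$, and then pigeonhole the hypothesized sum over that stretch of at most $m$ terms. The only difference is organizational --- you establish full unimodality around a global maximizer $\zeta^*$, while the paper proves just the weaker dichotomy it needs (either non-decreasing on $\{0,\ldots,\zeta_0\}$ or non-increasing on $\{\zeta_0,\ldots,m\\}$) --- and you handle the edge cases $\zeta_0\in\{0,m\}$ explicitly, which the paper glosses over.
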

Next, we state a technical Lemma which is used repeatedly. It states that for every $i,\zeta,l$,
the empirical probability that $A_{ij}=\zeta/m$ is close to the true probability. Unsurprisingly, we
prove it using H-C. But we will state a consequence in the form we need in the sequel.
\begin{lemma}\label{one-side-big-1}
Let $p_i(\zeta,l)$ and
$q_i(\zeta,l)$ be as in (\ref{p-i-zeta-l}) and (\ref{q-i-zeta-l}).
We have
$$\forall i,l,\zeta: \prob \left(\left|  p_i(\zeta,l)-q_i(\zeta,l)\right| \geq \frac{\varepsilon}{2} \sqrt{w_0}\sqrt{p_i(\zeta,l)} + \frac{\varepsilon ^2w_0}{2}\right)\quad\leq \; 2\exp(-\varepsilon^2sw_0/8).$$
From this it follows that with probability at least $1-2\exp(-\varepsilon ^2w_0s/8)$,
$$\frac{1}{2}q_i(\zeta,l)-\varepsilon^2 w_0\quad\leq \; p_i(\zeta,l)\quad\leq \; 2q_i(\zeta,l)+2\varepsilon^2 w_0.$$
\end{lemma}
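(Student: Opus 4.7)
The plan is to recognize $s \cdot q_i(\zeta,l)$ as a binomial sum, then invoke a Bernstein/Chernoff tail bound, and finally do a short algebraic manipulation to deduce the ``inverted'' form.

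First I would unpack the definitions. By the model, the documents $j=1,\ldots,s$ are i.i.d., so the indicator variables $X_j := \mathbf{1}[j \in T_l \text{ and } A_{ij} = \zeta/m]$ are i.i.d. Bernoulli. Their common mean is exactly $p_i(\zeta,l)$ by the definition in (\ref{p-i-zeta-l}), and $s \cdot q_i(\zeta,l) = \sum_j X_j$. Thus we are in the standard setting of concentration for a Binomial$(s, p_i(\zeta,l))$.

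Next, apply Bernstein's inequality: for $X_j \in \{0,1\}$ i.i.d.\ with mean $p$, one has $\Pr(|\tfrac{1}{s}\sum X_j - p| \geq t) \leq 2\exp\!\left(-\frac{s t^2}{2p(1-p) + 2t/3}\right)$. Setting $t = \tfrac{\varepsilon}{2}\sqrt{w_0 p_i(\zeta,l)} + \tfrac{\varepsilon^2 w_0}{2}$, I would split into the two standard regimes. When the first (sub-Gaussian) term dominates, using $p(1-p) \leq p_i(\zeta,l)$ the exponent is at most $-\tfrac{s(\varepsilon/2)^2 w_0 p_i(\zeta,l)}{4 p_i(\zeta,l)} = -\varepsilon^2 s w_0/16$ (a slightly looser form; getting exactly $/8$ uses both terms of $t$). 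When the second (sub-exponential/range) term dominates, $t/3 \gtrsim p_i(\zeta,l)$, so the exponent is at most $-\tfrac{3 s t}{4} \leq -\tfrac{3\varepsilon^2 s w_0}{8}$. In both regimes the bound $2\exp(-\varepsilon^2 s w_0/8)$ holds; picking the right constants in the case split gives the stated inequality. (Equivalently, one can set $\eta = \varepsilon^2 s w_0 /8$ and invoke the ``inverted'' Bernstein bound $|X/s - p| \leq \sqrt{2 p \eta/s} + \tfrac{2\eta}{3s}$ with probability $\geq 1 - 2e^{-\eta}$, which produces exactly the two terms of $t$.)

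Finally, for the second assertion I would apply AM-GM: $\tfrac{\varepsilon}{2}\sqrt{w_0 p_i(\zeta,l)} = \tfrac{1}{2}\sqrt{(\varepsilon^2 w_0)(p_i(\zeta,l))} \leq \tfrac{1}{4}(\varepsilon^2 w_0 + p_i(\zeta,l))$. Plugging this in, the event of the first part implies $|q_i - p_i| \leq \tfrac{p_i}{4} + \tfrac{3\varepsilon^2 w_0}{4}$. Solving the one-sided inequalities: $p_i - q_i \leq \tfrac{p_i}{4} + \tfrac{3\varepsilon^2 w_0}{4}$ yields $p_i \leq \tfrac{4}{3}q_i + \varepsilon^2 w_0 \leq 2 q_i + 2\varepsilon^2 w_0$, and $q_i - p_i \leq \tfrac{p_i}{4} + \tfrac{3\varepsilon^2 w_0}{4}$ yields $p_i \geq \tfrac{4}{5}q_i - \tfrac{3\varepsilon^2 w_0}{5} \geq \tfrac{1}{2}q_i - \varepsilon^2 w_0$.

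There is no deep obstacle here; the lemma is a routine Bernstein-type concentration statement. The only subtlety is tracking the constants in the two Bernstein regimes carefully enough to land exactly on the factor $\varepsilon^2 s w_0/8$ in the exponent, and verifying that the AM-GM rearrangement gives constants weak enough to absorb into the clean ``factor of $2$ and $\varepsilon^2 w_0$'' form stated.
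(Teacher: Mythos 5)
Your proposal is correct and follows essentially the same route as the paper: the paper likewise writes $q_i(\zeta,l)$ as an average of i.i.d.\ indicator variables with mean $p_i(\zeta,l)$, applies a Chernoff--Bernstein-type tail bound with exactly your choice $t=\frac{\varepsilon}{2}\sqrt{w_0\,p_i(\zeta,l)}+\frac{\varepsilon^2 w_0}{2}$, and then algebraically inverts the resulting inequality. The only differences are in execution --- you quote Bernstein's inequality where the paper verifies $t^2/(\mu+t)\geq \varepsilon^2 w_0/4$ for its stated H\"offding--Chernoff form, and you linearize $\sqrt{p_i(\zeta,l)}$ by AM--GM where the paper bounds the roots of quadratics in $\sqrt{p_i(\zeta,l)}$; both substitutions are valid, and your AM--GM inversion is arguably the cleaner of the two.
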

\subsubsection{Properties of Thresholding}
Say that a threshold $\zeta_i$ ``splits'' $T_l^{(2)}$ if $T_l^{(2)}$ has a significant number of $j$
with $A_{ij}>\zeta_i/m$ and also a significant number of $j$ with $A_{ij}\leq \zeta_i/m$. Intuitively,
it would be desirable if no threshold splits any $T_l$, so that, in $\bB$, for each $i,l$, either
most $j\in T_l^{(2)}$ have $B_{ij}=0$ or most $j\in T_l^{(2)}$ have $B_{ij}=\sqrt{\zeta_i}$. We now prove
that this is indeed the case with proper bounds. We henceforth refer to the conclusion of the Lemma below
by the mnemonic ``no threshold splits any $T_l$''.
\begin{lemma}\label{one-side-big-2}({\bf No Threshold Splits any $T_l$})
For a fixed $i,l$, with probability at least $1-2\exp(-\varepsilon ^2w_0s/8)$, the following holds:
$$\text{Min}\; \left( \prob (A^{(2)}_{ij}\leq \frac{\zeta_i}{m}\; ; j\in T_l^{(2)}),\quad
 \prob (A^{(2)}_{ij} > \frac{\zeta_i}{m}\; ; j\in T_l^{(2)})\right) \quad \leq 4m\varepsilon w_0.$$
\end{lemma}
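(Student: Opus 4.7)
I would prove the contrapositive. Assume both $\sum_{\zeta\leq\zeta_i}q_i^{(2)}(\zeta,l)$ and $\sum_{\zeta>\zeta_i}q_i^{(2)}(\zeta,l)$ exceed $4m\varepsilon w_0$ --- that is, $\zeta_i$ really splits $T_l^{(2)}$ --- and derive a contradiction with the two defining properties of $\zeta_i$. The argument combines three ingredients: concentration on $\bA^{(2)}$, the no-local-minimum assumption, and concentration on $\bA^{(1)}$.

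First, because $\zeta_i$ depends only on $\bA^{(1)}$ and $\bA^{(2)}$ is independent of $\bA^{(1)}$, I may condition on $\zeta_i$ and treat it as a constant when analysing $\bA^{(2)}$. A Hoeffding--Chernoff bound of the type behind Lemma~\ref{one-side-big-1}, applied once to each tail Bernoulli (rather than term by term, to avoid a spurious factor of $m$), converts the contradiction hypothesis into
$$\sum_{\zeta\leq\zeta_i}p_i(\zeta,l)\;\geq\;2m\varepsilon w_0-O(\varepsilon^2 w_0),\qquad \sum_{\zeta\geq\zeta_i+1}p_i(\zeta,l)\;\geq\;2m\varepsilon w_0-O(\varepsilon^2 w_0),$$
outside a failure event of probability at most $2\exp(-\varepsilon^2 w_0 s/8)$.

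Next I invoke the no-local-minimum assumption through Lemma~\ref{isoperimetry}, applied at $\zeta_0=\zeta_i$ with $\nu\approx 2m\varepsilon w_0$: the left-tail hypothesis is the first display above, and the right-tail hypothesis $\sum_{\zeta\geq\zeta_i}p_i(\zeta,l)\geq\nu$ is even weaker than the second. This yields $p_i(\zeta_i,l)\geq 2\varepsilon w_0(1-o(1))$, and the identical argument at $\zeta_0=\zeta_i+1$ gives $p_i(\zeta_i+1,l)\geq 2\varepsilon w_0(1-o(1))$. Feeding these bounds back through Lemma~\ref{one-side-big-1}, this time on $\bA^{(1)}$, shows $|\{j\in T_l^{(1)}:\, A^{(1)}_{ij}=\zeta_i/m\}|\geq \varepsilon w_0 s(1-o(1))$ and similarly at level $\zeta_i+1$.

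To close the loop I split on which of the defining conditions of $\zeta_i$ is tight at $\zeta_i+1$. If (b) fails there then $|\{j:\, A^{(1)}_{ij}=(\zeta_i+1)/m\}|>3\varepsilon w_0 s$, which already breaks (b) at $\zeta_i+1$; if instead (a) fails at $\zeta_i+1$, the mass strictly above $\zeta_i/m$ drops from at least $w_0 s/2$ (at $\zeta_i$, by (a)) to less than $w_0 s/2$ (at $\zeta_i+1$), so essentially all of that $w_0 s/2$ mass must sit at the single level $(\zeta_i+1)/m$, again violating (b) since $w_0 s/2\gg 3\varepsilon w_0 s$. The delicate part is purely book-keeping: the factor $1/2$ loss in Lemma~\ref{one-side-big-1}, the additive $\varepsilon^2 w_0$ slack, and the factor $3$ in (b) all must combine so that the $4$ in ``$4m\varepsilon w_0$'' leaves just enough room. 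The case split at $\zeta_i+1$ is where the no-local-minimum assumption does its real work: without it $p_i(\cdot,l)$ could have a deep valley precisely at $\zeta_i$ and the whole argument would collapse.
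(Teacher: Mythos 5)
There is a genuine gap, and it starts with a misreading of the statement. The quantities $\prob (A^{(2)}_{ij}\leq \zeta_i/m\; ; j\in T_l^{(2)})$ and $\prob (A^{(2)}_{ij}>\zeta_i/m\; ; j\in T_l^{(2)})$ are \emph{population} probabilities: $\zeta_i$ is a function of $\bA^{(1)}$ alone, $\bA^{(2)}$ is independent of $\bA^{(1)}$, and the columns of the two halves are identically distributed, so conditioned on $\zeta_i$ these two probabilities equal $\sum_{\zeta\leq\zeta_i}p_i(\zeta,l)$ and $\sum_{\zeta>\zeta_i}p_i(\zeta,l)$ exactly. Negating the conclusion therefore hands you the two hypotheses of Lemma~\ref{isoperimetry} with $\nu=4m\varepsilon w_0$ for free, giving $p_i(\zeta_i,l)>4\varepsilon w_0$; that is the paper's route. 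Your extra Hoeffding--Chernoff step on $\bA^{(2)}$ (treating the hypothesis as a statement about empirical frequencies $q_i^{(2)}$) is not only unnecessary, it costs you a factor of $2$: you arrive at $p_i(\zeta_i,l)\gtrsim 2\varepsilon w_0$, hence an empirical count on $\bA^{(1)}$ at level $\zeta_i$ of only about $2\varepsilon w_0 s$. That is below the $3\varepsilon w_0 s$ cutoff in condition (b) of the threshold definition, so the direct contradiction --- which is the whole point, and which the paper's constants ($4\varepsilon-\varepsilon^{3/2}-\tfrac{1}{2}\varepsilon^2\geq 3\varepsilon$) are engineered to deliver --- is out of reach.

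The case analysis at $\zeta_i+1$ that you use to ``close the loop'' is not a valid repair. Maximality of $\zeta_i$ guarantees only that at every $\zeta>\zeta_i$ \emph{at least one} of (a), (b) fails; so concluding that (b) fails at $\zeta_i+1$ contradicts nothing --- to contradict maximality you would have to exhibit some $\zeta>\zeta_i$ at which (a) and (b) \emph{both hold}. Your first branch is therefore circular, and the second branch contains a non sequitur: from $|\{j:A^{(1)}_{ij}>\zeta_i/m\}|\geq w_0s/2$ and $|\{j:A^{(1)}_{ij}>(\zeta_i+1)/m\}|<w_0s/2$ one can conclude only that the count at level $(\zeta_i+1)/m$ is positive (the drop can be a single document), not that essentially all of the $w_0s/2$ mass sits there. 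Also, the no-local-min assumption does its real work in the isoperimetry step, exactly where you first invoke it, not in this case split. The fix is to delete both the $\bA^{(2)}$ concentration step and the case analysis: read the hypothesis as population tail bounds, apply Lemma~\ref{isoperimetry} at $\zeta_0=\zeta_i$ with $\nu=4m\varepsilon w_0$, transfer $p_i(\zeta_i,l)>4\varepsilon w_0$ to $\bA^{(1)}$ by identical distribution, apply Lemma~\ref{one-side-big-1} once on $\bA^{(1)}$, and contradict condition (b) at $\zeta_i$ itself.
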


Let $\mu$ be a $d\times s$ matrix whose columns are
given by $$\forall j\in T_l^{(2)}\; ,\; \mu_{.,j}=E(B_{.,j} \; |\; j\in T_l).$$
$\mu$ 's columns corresponding to all $j\in T_l$ are the same.
The entries of the matrix $\mu$ are fixed (real numbers) once we have $\bA^{(1)}$ (and the thresholds $\zeta_i$ are determined).
Note: We have ``integrated out $W$'', i.e., $$\mu_{ij}= \int_{W_{\cdot,j}} \prob(W_{.,j}|j\in T_l) E(B_{ij} |W_{.,j}).$$
(So, think of $W_{\cdot,j}$ for $\bA^{(1)}$ 's columns being picked first from which $\zeta_i$ is calculated.
$W_{\cdot,j}$ for columns of $\bA^{(2)}$ are not yet picked until the $\zeta_i$ are determined.)
But $\mu_{ij}$ are random variables before we fix $\bA^{(1)}$.
The following Lemma is a direct consequence of ``no threshold splits any $T_l$''.
\begin{lemma}\label{define-R}
Let $\zeta_i'=\Max( \zeta_i, 8\ln (20/\varepsilon w_0))$. With probability at least $1-4kd\exp(-\varepsilon^2sw_0/8)$ (over the choice of $\bA^{(1)}$):
\begin{align}\label{1001}
\forall l, \forall j\in T_l, \forall i: &\mu_{ij}\leq \varepsilon_l\sqrt{\zeta_i'} \text{   OR   } \mu_{ij}\geq \sqrt{\zeta_i'}(1-\varepsilon_l)\nonumber\\
\forall l, \forall i, & \text{Var}(B_{ij})\leq 2\varepsilon_l \zeta_i',
\end{align}
where, $\varepsilon_l=4m\varepsilon w_0/w_l$.
\end{lemma}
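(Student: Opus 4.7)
The plan is to translate the two-sided probability bound of Lemma \ref{one-side-big-2} (``no threshold splits any $T_l$'') into statements about $\mu_{ij}$ and $\mathrm{Var}(B_{ij})$, by exploiting the fact that $B_{ij}$ is, conditionally on $j\in T_l$, a two-valued random variable. Once we have shown the bound for a fixed pair $(i,l)$, the final failure probability follows by a union bound over the $kd$ such pairs.

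\textbf{Step 1: Trivial case.} If $\zeta_i<8\ln(20/\varepsilon w_0)$ then by the definition of $\bB$ we have $B_{ij}=0$ for every $j$, so $\mu_{ij}=0$ and $\mathrm{Var}(B_{ij})=0$. Both claimed inequalities then hold trivially (with the first alternative). So I may assume $\zeta_i\ge 8\ln(20/\varepsilon w_0)$, in which case $\zeta_i'=\zeta_i$.

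\textbf{Step 2: Reduce to a Bernoulli variable.} Fix $i$ and $l$. Conditionally on $j\in T_l$, the random variable $B_{ij}$ takes only the values $\sqrt{\zeta_i'}$ and $0$. Writing
\[
p_{il}\;=\;\Pr\!\left(A^{(2)}_{ij}>\tfrac{\zeta_i}{m}\,\bigm|\,j\in T_l\right),
\]
I get $\mu_{ij}=\sqrt{\zeta_i'}\,p_{il}$ and $\mathrm{Var}(B_{ij}\mid j\in T_l)=\zeta_i'\,p_{il}(1-p_{il})$. Since $\Pr(j\in T_l)=w_l$, the joint probabilities in Lemma \ref{one-side-big-2} are
\[
\Pr\!\left(A^{(2)}_{ij}>\tfrac{\zeta_i}{m};\,j\in T_l^{(2)}\right)=w_l\,p_{il},\qquad
\Pr\!\left(A^{(2)}_{ij}\le\tfrac{\zeta_i}{m};\,j\in T_l^{(2)}\right)=w_l(1-p_{il}).
\]
Thus Lemma \ref{one-side-big-2}, together with the definition $\varepsilon_l=4m\varepsilon w_0/w_l$, gives the dichotomy
\[
p_{il}\le \varepsilon_l\qquad\text{or}\qquad 1-p_{il}\le \varepsilon_l,
\]
each with the promised failure probability $2\exp(-\varepsilon^2 w_0 s/8)$.

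\textbf{Step 3: Read off mean and variance bounds.} In the first case $\mu_{ij}=\sqrt{\zeta_i'}\,p_{il}\le\varepsilon_l\sqrt{\zeta_i'}$ and $\mathrm{Var}(B_{ij})\le\zeta_i'\,p_{il}\le\varepsilon_l\,\zeta_i'$; in the second case $\mu_{ij}\ge\sqrt{\zeta_i'}(1-\varepsilon_l)$ and $\mathrm{Var}(B_{ij})\le\zeta_i'(1-p_{il})\le\varepsilon_l\,\zeta_i'$. In either case $\mathrm{Var}(B_{ij})\le 2\varepsilon_l\,\zeta_i'$, so both conclusions of the Lemma hold for this $(i,l)$.

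\textbf{Step 4: Union bound.} Apply Lemma \ref{one-side-big-2} to each of the $kd$ pairs $(l,i)$; absorbing the extra factor of $2$ that arises from handling both halves of the dichotomy, the total failure probability is at most $4kd\exp(-\varepsilon^2 s w_0/8)$, matching the statement.

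\textbf{Where the care is needed.} The only non-mechanical point is handling the conditioning. The thresholds $\zeta_i$ are computed from $\bA^{(1)}$; once $\bA^{(1)}$ is fixed, each $\zeta_i$ is a constant and the columns of $\bA^{(2)}$ are independent multinomial draws, so Lemma \ref{one-side-big-2} applies cleanly and $\mu_{ij}$ is deterministic. This is why the split of $\bA$ into $\bA^{(1)}$ and $\bA^{(2)}$ was built into the algorithm in the first place; the rest of the argument is a short computation with a two-valued random variable.
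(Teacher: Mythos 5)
Your proof is correct and follows essentially the same route as the paper: invoke Lemma \ref{one-side-big-2} with a union bound over all $kd$ pairs $(i,l)$, divide by $\prob(j\in T_l)=w_l$ to get the dichotomy $p_{il}\leq\varepsilon_l$ or $1-p_{il}\leq\varepsilon_l$, and read off the mean and variance bounds from the fact that $B_{ij}$ is two-valued given $j\in T_l$. Your exact Bernoulli variance computation $\zeta_i'p_{il}(1-p_{il})$ even gives the slightly tighter bound $\varepsilon_l\zeta_i'$ where the paper settles for $2\varepsilon_l\zeta_i'$, but this is a cosmetic refinement, not a different argument.
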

So far, we have proved that for every $i$, the threshold does not split any $T_l$. But this is not sufficient in itself
to be able to cluster (and hence identify the $T_l$), since, for example, this alone does not rule out the extreme cases
that for most $j$ in every $T_l$, $A_{ij}^{(2)}$ is above the threshold (whence $\mu_{ij}\geq (1-\varepsilon _l)\sqrt{\zeta_l'}$
for almost all $j$) or for most $j$ in no $T_l$ is $A_{ij}^{(2)}$ above the
threshold, whence, $\mu_{ij}\leq\varepsilon_l\sqrt{\zeta_i'}$ for almost all $j$. Both these extreme cases would make us loose all the information about $T_l$ due to thresholding; this scenario and
milder versions of it have to be proven not to occur. We do this by considering how thresholds handle catchwords. Indeed we
will show that for a catchword $i\in S_l$, a $j\in T_l$ has $A_{ij}^{(2)}$ above the threshold and
a $j\notin T_l$ has $A_{ij}^{(2)}$ below the threshold. Both statements will only hold with high
probability, of course and using this, we prove that $\mu_{.,j}$ and $\mu_{.,j'}$ are not too close
for $j,j'$ in different $T_l$ 's.  For this, we need the following Lemmas.
\begin{lemma}\label{threshold-exists}
For $i\in S_l$, and $l'\not= l$, we have with $\eta_i = \big\lfloor\; M_{il}(\alpha+\beta+\rho)m/2\; \big\rfloor$,
\begin{equation*}
\prob(A_{ij}\leq \eta_i/m\; |\; j\in T_l)\leq \varepsilon w_0 /20, \;\;
\prob(A_{ij}\geq \eta_i/m\; |\; j\in T_{l'})\leq \varepsilon w_0/20.
\end{equation*}
\end{lemma}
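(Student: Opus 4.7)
My approach is to observe that conditional on $W_{\cdot,j}$, $mA_{ij}$ is $\text{Bin}(m,P_{ij})$, so each inequality reduces to a one-sided Chernoff tail bound. The plan is to bound $P_{ij}$ uniformly over the conditional support of $W_{\cdot,j}$ (given $j\in T_l$, respectively given $j\in T_{l'}$), apply the Chernoff estimate at the fixed threshold $\eta_i/m$, and then use the catchwords assumption (\ref{def:catch}) to convert the Chernoff exponent directly into the required $\varepsilon w_0/20$ bound. Because the tail bound will hold uniformly in $W_{\cdot,j}$, integrating against the conditional distribution of $W_{\cdot,j}$ preserves it and gives the unconditional statement.

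For $j\in T_l$, the Dominant Topic Assumption forces $W_{lj}\geq\alpha$, hence $P_{ij}\geq M_{il}W_{lj}\geq\alpha M_{il}$. Since $\eta_i/m\leq M_{il}(\alpha+\beta+\rho)/2$ and (\ref{alpha-beta-rho}) gives $\beta+\rho\leq(1-\delta)\alpha$, the gap from the conditional mean down to the threshold is at least $\frac{\alpha-\beta-\rho}{2}M_{il}\geq\frac{\delta\alpha}{2}M_{il}$. A multiplicative lower-tail Chernoff bound then yields a conditional failure probability of the form $\exp(-m\delta^2\alpha M_{il}/8)$, which by (\ref{def:catch}) is at most $\varepsilon w_0/20$. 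A short monotonicity check verifies that the Chernoff bound $(\mu-\tau)^2/(2\mu)$ is non-decreasing in $\mu$ for $\mu\geq\tau$, so enlarging $P_{ij}$ beyond $\alpha M_{il}$ only helps.

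For $j\in T_{l'}$ with $l'\neq l$, the roles flip: the Dominant Topic Assumption gives $W_{lj}\leq\beta$, and the catchwords condition (\ref{401}), $M_{il''}\leq\rho M_{il}$ for all $l''\neq l$, combined with $\sum_{l''}W_{l''j}=1$, yields $P_{ij}\leq\beta M_{il}+\rho M_{il}=(\beta+\rho)M_{il}$. The threshold $\eta_i/m$ now exceeds this upper bound by at least $\frac{\delta\alpha}{2}M_{il}-\frac{1}{m}$, and a symmetric multiplicative upper-tail Chernoff bound together with (\ref{def:catch}) again delivers the $\varepsilon w_0/20$ estimate.

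The main obstacle I foresee is purely bookkeeping rather than conceptual: one has to be sure the bound on $P_{ij}$ holds \emph{uniformly} over the conditional support of $W_{\cdot,j}$ rather than only in expectation, so that the Chernoff estimate integrates cleanly; and one has to absorb the $O(1/m)$ slack introduced by the floor in $\eta_i$ into the conclusion. Both concerns are comfortably handled by the $(1-\delta)$ cushion in (\ref{alpha-beta-rho}) and by the factor-$8$ slack built into (\ref{def:catch}).
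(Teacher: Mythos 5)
Your proposal is correct and follows essentially the same route as the paper's proof: lower-bound $P_{ij}\geq\alpha M_{il}$ on $T_l$ and upper-bound $P_{ij}\leq(\beta+\rho)M_{il}$ on $T_{l'}$ uniformly in $W_{\cdot,j}$ (via the dominant-topic assumption and (\ref{401})), then apply one-sided H\"offding--Chernoff bounds at the threshold $\eta_i/m$, using (\ref{alpha-beta-rho}) for the $\delta\alpha M_{il}/2$ gap, the monotonicity-in-$\mu$ argument (the paper's ``by Calculus'' step), and (\ref{def:catch}) to absorb both the exponent and the $1/m$ floor slack. No substantive differences.
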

\begin{lemma}\label{different-mus}
With probability at least $1-8mdk\exp(-\varepsilon^2 w_0s/8)$, we have~
$$ \for j\in T_{l},j' \notin T_l, \; |\mu_{\cdot,j}-\mu_{\cdot,j'}|^2\geq \frac{2}{9}\alpha p_0m. $$
\end{lemma}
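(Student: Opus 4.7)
The plan is to use the catchwords $S_l$ of topic $l$ to obtain a coordinate-wise separation between $\mu_{\cdot,j}$ and $\mu_{\cdot,j'}$. Specifically, I will show that for every $i\in S_l$, Lemma~\ref{define-R}'s dichotomy picks out the ``big'' branch $\mu_{ij}\geq (1-\varepsilon_l)\sqrt{\zeta_i'}$ for $j\in T_l$, and the ``small'' branch $\mu_{ij'}\leq \varepsilon_{l'}\sqrt{\zeta_i'}$ for $j'\in T_{l'}$ with $l'\neq l$. Summing then gives
$$|\mu_{\cdot,j}-\mu_{\cdot,j'}|^2 \;\geq\; \sum_{i\in S_l}(\mu_{ij}-\mu_{ij'})^2 \;\geq\; (1-\varepsilon_l-\varepsilon_{l'})^2 \sum_{i\in S_l}\zeta_i',$$
and the target $\frac{2}{9}\alpha p_0 m$ follows from a lower bound on $\sum_{i\in S_l}\zeta_i'$ coming from the Catchwords assumption.

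First I locate $\zeta_i$ for $i\in S_l$. By Lemma~\ref{threshold-exists} combined with concentration via Lemma~\ref{one-side-big-1}, at least $(1-\varepsilon w_0/20)|T_l|\geq w_0 s/2$ documents $j\in T_l$ have $A^{(1)}_{ij}>\eta_i/m$, so the threshold definition forces $\zeta_i\geq \eta_i\geq M_{il}(\alpha+\beta+\rho)m/2-1\geq M_{il}\alpha m/2-1$. Assumption~(\ref{def:catch}) together with $\delta\leq 0.08$ yields $M_{il}\alpha m\geq 8\ln(20/\varepsilon w_0)/\delta^2$, which comfortably exceeds $8\ln(20/\varepsilon w_0)$. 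Hence $\zeta_i\geq 8\ln(20/\varepsilon w_0)$ and $\zeta_i'=\zeta_i$.

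Next I pin down the branches. For $j'\in T_{l'}$ with $l'\neq l$, the inequality $\zeta_i\geq \eta_i$ together with Lemma~\ref{threshold-exists} gives $\prob(A^{(2)}_{ij'}>\zeta_i/m\mid j'\in T_{l'})\leq \varepsilon w_0/20$, so $\mu_{ij'}\leq (\varepsilon w_0/20)\sqrt{\zeta_i}$, which is well below $\varepsilon_{l'}\sqrt{\zeta_i'}$; the ``small'' branch is forced. For $j\in T_l$ the argument is more delicate. The threshold definition alone guarantees $|\{j: A^{(1)}_{ij}>\zeta_i/m\}|\geq w_0 s/2$, and since $i\in S_l$ is a catchword, Lemma~\ref{threshold-exists} caps the contribution from each $T_{l'}$, $l'\neq l$, at $\varepsilon w_0 s/20$ (up to concentration error), so the contribution from $T_l$ is at least $w_0 s/3$. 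This yields the population bound $\prob(A^{(2)}_{ij}>\zeta_i/m\mid j\in T_l)\geq w_0/(3w_l)$ (using that $\bA^{(1)}$ and $\bA^{(2)}$ are identically distributed). The algorithm's choice of $\varepsilon\leq 1/(900c_0^2 k^3 m)$ makes $\varepsilon_l=4m\varepsilon w_0/w_l$ much smaller than $w_0/(3w_l)$, which contradicts the ``small'' branch of Lemma~\ref{define-R}; hence the ``big'' branch $\mu_{ij}\geq (1-\varepsilon_l)\sqrt{\zeta_i}$ holds.

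Combining, $(1-\varepsilon_l-\varepsilon_{l'})^2\geq 1-O(m\varepsilon)$ is close to $1$ under the algorithm's choice of $\varepsilon$, and $\sum_{i\in S_l}\zeta_i\geq \sum_{i\in S_l}(M_{il}\alpha m/2-1)\geq (1-o(1))\alpha p_0 m/2$, where the $-|S_l|$ term is negligible because each $M_{il}\alpha m$ is at least $8\ln(20/\varepsilon w_0)/\delta^2\gg 1$. The product comfortably dominates $\frac{2}{9}\alpha p_0 m$. The failure probability $8mdk\exp(-\varepsilon^2w_0 s/8)$ arises from a union bound of Lemma~\ref{one-side-big-1} over the $O(mdk)$ choices of $(i,l,\zeta)$. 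The main obstacle is the ``big''-branch step: since $\zeta_i$ can exceed $\eta_i$ and even $P_{ij}$, a direct Chernoff bound on $A^{(2)}_{ij}>\zeta_i/m$ for $j\in T_l$ does not apply; one must instead extract the necessary lower bound on $\prob(A^{(1)}_{ij}>\zeta_i/m\mid j\in T_l)$ from the threshold definition together with the catchword property, and port it across to $\bA^{(2)}$ using distributional equality.
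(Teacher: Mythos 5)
Your overall architecture is sound and close to the paper's: both proofs reduce the separation to the catchword coordinates, use Lemma \ref{threshold-exists} plus the concentration of Lemma \ref{one-side-big-1} (union-bounded over all $(i,\zeta,l)$, which is exactly what lets one work at the random threshold $\zeta_i$) to place $\zeta_i$ at or above $\eta_i$, get the ``small'' branch for $j'\notin T_l$ from $\prob(A_{ij'}\geq\eta_i/m\mid j'\in T_{l'})\leq\varepsilon w_0/20$, and get the ``big'' branch for $j\in T_l$ from the fact that the threshold definition forces at least $w_0s/2-\varepsilon w_0s$ above-threshold documents to lie in $T_l^{(1)}$. Where you diverge is in how the big branch is closed: you rule out the small alternative in Lemma \ref{define-R}'s dichotomy (which already encapsulates the No-Local-Min assumption through Lemma \ref{one-side-big-2}), whereas the paper invokes Lemma \ref{isoperimetry} directly, together with the algorithm's second threshold condition ($|\{j:A^{(1)}_{ij}=\zeta_i/m\}|\leq 3\varepsilon w_0s$, hence $\tilde p\leq 7\varepsilon w_0$), to bound the below-threshold mass by $7\varepsilon mw_0$ and conclude $\mu_{ij}\geq\frac{5}{6}\sqrt{\zeta_i'}$. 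Your version is a legitimate and arguably more economical reuse of the earlier lemma. A second, harmless difference: the paper never claims $\zeta_i\geq\eta_i$ for every $i\in S_l$; it works with $I_l=\{i\in S_l:\zeta_i\geq\eta_i\}$ and lower-bounds $\sum_{i\in I_l}\zeta_i'$ by a Markov-inequality argument (Lemmas \ref{zeta-eta} and \ref{Il}), while you take a union bound over all catchwords; both fit within the stated failure probability.

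The one genuine gap is in your first step, the claim that ``the threshold definition forces $\zeta_i\geq\eta_i$.'' The threshold $\zeta_i$ is the highest $\zeta$ satisfying \emph{two} conditions, and you verify only the first one at $\eta_i$ (that at least $w_0s/2$ documents have $A^{(1)}_{ij}>\eta_i/m$). If the second condition, $|\{j:A^{(1)}_{ij}=\eta_i/m\}|\leq 3\varepsilon w_0s$, fails at $\eta_i$, then $\eta_i$ is not a valid candidate, and $\zeta_i$ --- the largest \emph{valid} $\zeta$ --- can perfectly well fall below $\eta_i$ (for instance, if the count condition also fails at every $\zeta>\eta_i$). So $\zeta_i\geq\eta_i$ does not follow from what you establish, and everything downstream (small branch, big branch, the lower bound $\zeta_i'\geq\eta_i$ used in the final sum) rests on it. The fix is exactly the paper's Lemma \ref{zeta-eta}: both assertions of Lemma \ref{threshold-exists} give $\prob(A_{ij}=\eta_i/m\mid j\in T_{l'})\leq\varepsilon w_0/20$ for every $l'$, including $l'=l$ (the equality event sits inside the $\leq$ event there), and then concentration plus summation over topics (using $\sum_{l'}\sqrt{w_{l'}}\leq\sqrt k$) bounds the equality count by $\varepsilon w_0s<3\varepsilon w_0s$. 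You have all the needed ingredients on the table; this step just has to be made. One minor quantitative slip elsewhere: your above-threshold population bound should be roughly $w_0/(6w_l)$ rather than $w_0/(3w_l)$, because Lemma \ref{one-side-big-1} converts empirical to population mass only up to a factor $2$ plus an additive $m\varepsilon^2w_0$; the comparison against $\varepsilon_l=4m\varepsilon w_0/w_l$ still has ample slack, so the conclusion is unaffected.
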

\subsubsection{Proximity}
Next, we wish to show that clustering as in TSVD
identifies the dominant topics correctly for most documents,
i.e., that $R_l\approx T_l$ for all $l$. For this, we will use
a theorem from \citep{KK} [see also \cite{AS}] which in this context says:
\begin{theorem}
\label{proximity-KK}
If all but a $f$ fraction of the
the $B_{\cdot,j}$ satisfy the ``proximity condition'', then TSVD identifies the
dominant topic in all but $c_1 f$ fraction of the documents correctly after polynomial number of iterations.
\end{theorem}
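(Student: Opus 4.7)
The plan is to reduce the theorem to the clustering guarantee of \citep{KK} by setting up the right matrices. Let $C$ be the $d\times s$ matrix whose $j$-th column is $\mu_{\cdot,j}$, the true center of the (unknown) cluster $T_{l(j)}$ to which $j$ belongs. Since there are only $k$ distinct centers, $C$ has rank at most $k$. Let $\sigma = \|\bB - C\|_2$ denote the spectral norm of the ``noise matrix''; controlling $\sigma$ by random matrix bounds is a separate concern that the preceding lemmas (together with Lemma~\ref{define-R}, which bounds the per-column variance) deliver.

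First I would establish the SVD-concentration step. Because $\bB^{(k)}$ is the best rank-$k$ approximation to $\bB$ and $C$ itself has rank at most $k$, a standard argument (writing $\bB^{(k)}-C = (\bB^{(k)}-\bB) + (\bB-C)$, using optimality of $\bB^{(k)}$ on the first term and the rank of $C$ on the second) yields $\|\bB^{(k)}-C\|_F^2 \leq 8k\sigma^2$. This means that only a small fraction of columns can have $\|\bB^{(k)}_{\cdot,j}-\mu_{\cdot,j}\|$ much larger than $\sigma\sqrt{k/s}$. Combined with the hypothesis that all but $f$ of the columns satisfy the proximity condition (which, in this context, says the cluster gap beats a constant multiple of $\sigma$ after projection), this gives that the nearest-center assignment in the SVD subspace correctly labels all but $O(f)$ documents.

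Next I would handle the two algorithmic components. For the Project-and-Cluster step, I would invoke a constant-factor $k$-means approximation on the columns of $\bB^{(k)}$ (e.g., via $k$-means\texttt{++}); since the optimal $k$-means cost on $\bB^{(k)}$ is $O(k\sigma^2)$ by the SVD-concentration bound, the approximate solution has cost of the same order, and a standard argument shows its centers lie within $O(\sigma)$ of the true $\mu_l$'s, so its clustering also misclassifies only $O(f)$ of the proximity-satisfying columns. For the Lloyd refinement step, I would argue inductively: starting from centers within $O(\sigma)$ of the truth, every column satisfying proximity is strictly closer to the correct empirical center than to any other in the next iteration, so the correctly-clustered set can only grow (modulo the $O(f)$ bad columns). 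This gives termination in a polynomial number of iterations with at most $c_1 f$ misclassification.

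The main obstacle is the last step, i.e.\ ensuring that the $f$-fraction of ``bad'' columns (which can be arbitrarily placed) do not derail Lloyd's iterations by corrupting the empirical means of the inferred clusters. The KK-style amortized analysis handles this by showing that even if all bad columns are absorbed into wrong clusters, their total contribution to the shift of each empirical center is bounded by $\sqrt{fs}\cdot\sigma/\sqrt{|T_l|}$, which is dominated by the proximity gap provided $f$ is small enough relative to the cluster sizes $w_l s$. Verifying this requires that the proximity condition in our instance is stated with enough slack, which is why the $\varepsilon$ defined in (\ref{varepsilon-inequality}) is chosen with the factor $1/k^3$ built in.
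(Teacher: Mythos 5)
First, a point of comparison: the paper does not prove Theorem \ref{proximity-KK} at all. It is imported verbatim as a black-box result from \citep{KK} (see also \cite{AS}); the paper's own contribution is to verify the \emph{hypothesis} of that theorem (Lemma \ref{proximity-holds}, via the spectral norm bound of Theorem \ref{spectral-norm}), not to prove the theorem itself. So your proposal is not paralleling anything the paper proves; it is an attempt to reconstruct the Kumar--Kannan analysis from scratch. In its ingredients the reconstruction is broadly faithful to that analysis: the rank-$2k$ argument giving $\|\bB^{(k)}-C\|_F^2\leq 8k\|\bB-C\|_2^2$, a constant-factor $k$-means approximation for seeding, and an amortized analysis of Lloyd's iterations in which misclassified columns shift empirical centers by an amount controlled by the spectral norm are indeed the components of the proof in \citep{KK}.

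As a proof, however, your sketch has concrete gaps. First, you conflate two normalizations of $\sigma$: you set $\sigma=\|\bB-C\|_2$, while the paper's Definition \ref{proximity} uses $\sigma^2=\frac{1}{s}\|\bB-\mu\|^2$ and gap $\Delta=c_0k\sigma/\sqrt{w_0}$. With your unnormalized $\sigma$, the claim that the approximate $k$-means centers lie within $O(\sigma)$ of the true $\mu_l$'s is not correct: the right statement carries cluster-size factors of order $\sqrt{k}\,\sigma/\sqrt{|T_l|}$, which is exactly why the proximity gap must contain the $1/\sqrt{w_0}$ factor, and without tracking these factors the Lloyd's induction step cannot be checked. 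Second, your closing claim that the amortized analysis requires the $\varepsilon$ of (\ref{varepsilon-inequality}) with its $1/k^3$ factor confuses two separate steps: Theorem \ref{proximity-KK} is a self-contained statement about any matrix satisfying the proximity condition, whose slack lives entirely in the factor $c_0k/\sqrt{w_0}$ inside $\Delta$; the parameter $\varepsilon$ plays no role in its proof and enters only in the paper's later verification (Lemma \ref{proximity-holds}) that the condition holds for all but a small fraction of columns. If your intent is to supply the missing proof rather than cite \citep{KK}, the center-shift bound and the induction over Lloyd's iterations would have to be carried out quantitatively, which is the substantial technical core of the original reference and is absent here.
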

To describe the proximity condition, first let $\sigma$ be the maximum over all directions $v$ of the square root
of the mean-squared
distance of $B_{.,j}$ to $\mu_{.,j}$, i.e.,
$$\sigma^2=\text{Max}_{\|v\|=1} \frac{1}{s} |v^T(\bB -\mu)|^2 = \frac{1}{s}\|\bB-\mu\|^2.$$
The parameter $\sigma$ should remind the reader of standard deviation, which is indeed what
this is, since $E({\bB}|T_1,T_2,\ldots ,T_l)=\mu$. Our random variables $B_{.,j}$ being $d-$ dimensional vectors, we take the
maximum standard deviation in any direction.

\begin{definition}\label{proximity}
$\bB$ is said to satisfy the
proximity condition with respect to $\mu$, if for each $l$ and each $j\in T_l$ and
and each $l'\not= l$ and $j'\in T_{l'}$, the projection of
$B_{.,j}$ onto the line joining $\mu_{.,j}$ and $\mu_{.,j'}$ is closer to $\mu_{.,j}$ by
at least $$\Delta= \frac{c_0k}{\sqrt{w_0}}\sigma,$$
than it is to $\mu_{.,j'}$. [Here, $c_0$ is a constant.]
\end{definition}

To prove proximity, we need to bound $\sigma$. This will be the task of the subsection \ref{sec:norm} which
relies heavily on Random Matrix Theory.

\section{Proofs of Correctness}\label{sec:proof}

We start by recalling the H\"offding-Chernoff (H-C) inequality in the form we use it.

\begin{lemma}\label{H-C}{\bf H\"offding-Chernoff}
If $X$ is the average of $r$ independent random variables with values in $[0,1]$ and $E(X)=\mu$, then,
for an $t>0$,
\begin{equation*}
\prob(X\geq \mu+t)\leq \exp\left( -\frac{ t^2r}{2(\mu+t)}\right)\;   ;  \;
\prob(X\leq \mu-t)\leq \exp\left( -\frac{t^2r}{2\mu}\right).
\end{equation*}
\end{lemma}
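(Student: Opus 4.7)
My plan is to prove both tail bounds by the classical Cram\'er--Chernoff exponential moment method. Write $S = rX = \sum_{i=1}^r Y_i$ where the $Y_i \in [0,1]$ are i.i.d.\ with mean $\mu$; then $\prob(X \geq \mu + t) = \prob(S \geq r(\mu+t))$, and analogously for the lower tail. I will bound each by applying Markov's inequality to $e^{\lambda S}$ (respectively $e^{-\lambda S}$) for a suitably optimized $\lambda > 0$.

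The main one-dimensional ingredient is the MGF bound for a bounded variable. For $Y \in [0,1]$ with $E[Y] = \mu$, convexity of $y \mapsto e^{\lambda y}$ on $[0,1]$ gives $e^{\lambda y} \leq (1-y) + y e^{\lambda}$; taking expectations and using $1+x \leq e^x$ yields
\[
E[e^{\lambda Y}] \;\leq\; 1 + \mu(e^\lambda - 1) \;\leq\; \exp\bigl(\mu(e^\lambda - 1)\bigr).
\]
By independence $E[e^{\lambda S}] \leq \exp(r\mu(e^\lambda - 1))$, so Markov gives, for every $\lambda > 0$,
\[
\prob(S \geq r(\mu+t)) \;\leq\; \exp\bigl(r\,[\mu(e^\lambda - 1) - \lambda(\mu+t)]\bigr).
\]
I will choose $\lambda = \ln(1+t/\mu)$, which minimizes the bracket and reduces the exponent to $r[t - (\mu+t)\ln(1+t/\mu)]$.

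To convert this into the stated form I will use the elementary inequality $\ln(1+x) \geq 2x/(2+x)$ for $x \geq 0$, verifiable by noting equality at $x=0$ and comparing derivatives (which reduces to $x^2 \geq 0$). Multiplying through by $(1+x)$ gives $(1+x)\ln(1+x) - x \geq x^2/(2+x)$, so with $x = t/\mu$,
\[
t - (\mu+t)\ln(1+t/\mu) \;=\; -\mu\bigl[(1+x)\ln(1+x)-x\bigr] \;\leq\; -\frac{t^2}{2\mu+t} \;\leq\; -\frac{t^2}{2(\mu+t)},
\]
where the last step uses $2\mu + t \leq 2(\mu+t)$. This establishes the upper-tail inequality.

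For the lower tail I repeat the argument with $-Y_i$ in place of $Y_i$: Markov applied to $e^{-\lambda S}$ with $\lambda > 0$ gives $\prob(S \leq r(\mu - t)) \leq \exp(r[\mu(e^{-\lambda}-1) + \lambda(\mu - t)])$, minimized at $\lambda = -\ln(1 - t/\mu)$, assuming $t < \mu$ (otherwise the statement is trivial since $X \geq 0$). The resulting exponent $\phi(t) = -t - (\mu-t)\ln(1-t/\mu)$ satisfies $\phi(0) = 0$, and a direct computation yields $\phi'(t) = \ln(1 - t/\mu) \leq -t/\mu$, where the last step is the standard $\ln(1-x) \leq -x$. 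Integrating from $0$ gives $\phi(t) \leq -t^2/(2\mu)$, which is the claimed lower-tail bound. The only non-routine step is the algebraic inequality $\ln(1+x) \geq 2x/(2+x)$ used in the upper-tail analysis; everything else is bookkeeping in the exponential moment method.
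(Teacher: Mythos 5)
Your proof is correct. Note, though, that the paper gives no proof of this lemma at all --- it is explicitly ``recalled'' as a standard inequality --- so there is nothing to compare against; your Cram\'er--Chernoff derivation (MGF bound via convexity, optimize $\lambda$, then the elementary inequality $\ln(1+x)\geq 2x/(2+x)$ to pass from the Bennett exponent to the stated Bernstein-type form, and $\phi'(t)=\ln(1-t/\mu)\leq -t/\mu$ for the lower tail) is the standard route and does yield exactly the two stated bounds. Two small points. First, the lemma assumes the $Y_i$ are independent, not i.i.d.\ as you wrote; your argument survives unchanged, since the product $\prod_i E[e^{\lambda Y_i}]\leq \prod_i \exp\bigl(\mu_i(e^\lambda-1)\bigr)=\exp\bigl(r\mu(e^\lambda-1)\bigr)$ only uses $\sum_i \mu_i = r\mu$, so you should simply drop the identical-distribution assumption. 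Second, your dismissal of the case $t\geq\mu$ in the lower tail as ``trivial since $X\geq 0$'' is right for $t>\mu$ (the event is empty) but not literally for $t=\mu$, where $\prob(X\leq 0)=\prob(X=0)$ need not vanish; that boundary case follows either by letting $t'\uparrow\mu$ in the bound already proved, or by letting $\lambda\to\infty$ in $\exp\bigl(r\mu(e^{-\lambda}-1)\bigr)\to e^{-r\mu}\leq e^{-r\mu/2}$. Both are one-line fixes; the proof is otherwise complete.
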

\begin{proof} (of Lemma \ref{isoperimetry})
Abbreviate $p_i(\cdot, l)$ by $f(\cdot)$. We claim that
either (i) $f(\zeta)\geq f(\zeta-1)\forall 1\leq \zeta\leq \zeta_0$ or (ii) $f(\zeta+1)\leq f(\zeta)
\forall m-1\geq \zeta \geq \zeta_0.$ To see this, note that if both (i) and (ii) fail, we have
$\zeta_1\leq \zeta_0$ and $\zeta_2\geq \zeta_0$ with $f(\zeta_1)-f(\zeta_1-1)<0<f(\zeta_2+1)-f(\zeta_2)$. But then
there has to be a local minimum of $f$ between $\zeta_1$ and $\zeta_2$.
If (i) holds, clearly, $f(\zeta_0)\geq f(\zeta)\forall \zeta<\zeta_0$ and so the lemma follows.
So, also if (ii) holds.
\end{proof}
%

%
\begin{proof} (of Lemma \ref{one-side-big-1})
Note that
$q_i(\zeta,l)=\frac{1}{s} \left| \{ j\in T_l: A_{ij}=\zeta/m\}\right|=\frac{1}{s}\sum_{j=1}^sX_j,$
where, $X_j$ is the indicator variable of $A_{ij}=\zeta/m\; \wedge \; j\in T_l$.
$\frac{1}{s}\sum_j E(X_j)=  p_i(\zeta,l)$ and we apply H-C with
$t=\frac{1}{2}\varepsilon \sqrt{w_0}\sqrt{p_i(\zeta,l)}+\frac{1}{2}\varepsilon^2w_0$ and $\mu=p_i(\zeta,l)$.
We have $\frac{t^2}{\mu+t}\geq \varepsilon^2w_0/4$, as is easily seen by calculating the roots of the
quadratic $t^2-\frac{1}{4}t\varepsilon^2w_0-\frac{1}{4} \varepsilon ^2w_0\mu=0$. Thus we get the claimed for
$T_l$. Note that the same proof applies for $T_l^{(1)}$ as well as $T_l^{(2)}$.

To prove the second assertion, let $a=q_i(\zeta,l)$ and $b=\sqrt {p_i(\zeta,l)}$, then, $b$ satisfies the quadratic inequalities:
$$b^2-\frac{1}{2}\varepsilon \sqrt{w_0}b-(a+\frac{1}{2}\varepsilon^2w_0)\leq 0\; ;\;
b^2+\frac{1}{2}\varepsilon \sqrt{w_0}b-(a-\frac{1}{2}\varepsilon ^2w_0)\geq 0.$$
By bounding the roots of these quadratics, it is easy to see the second assertion after some calculation.
\end{proof}

\begin{proof} (of Lemma \ref{one-side-big-2})
Note that $\zeta_i$ is a random variable which depends
only on $A^{(1)}$. So, for $j\in T_l^{(2)}$, $A_{ij}$ are independent of $\zeta_i$.
Now, if
$$\prob (A_{ij}\leq\frac{\zeta_i}{m}\; ; j\in T_l^{(2)})> 4m\varepsilon w_0\;
\text{  and  } \prob (A_{ij}>\frac{\zeta_i}{m}\; ; j\in T_l^{(2)})> 4m\varepsilon w_0,$$
by Lemma (\ref{isoperimetry}), we have
$$\prob (A_{ij}=\frac{\zeta_i}{m};j\in T_l^{(2)})> 4\varepsilon w_0 .$$
Since $\prob (A_{ij}=\zeta/m;j\in T_l^{(1)})=\prob (A_{ij}=\zeta/m;j\in T_l^{(2)})$ for all $\zeta$,
we also have
\begin{equation}\label{701}
\prob (A_{ij}=\frac{\zeta_i}{m};j\in T_l^{(1)})= p_i(\zeta_i,l)> 4\varepsilon w_0.
\end{equation}
Pay a failure probability of $2\exp(- \varepsilon^2sw_0/8)$ and assume
the conclusion of Lemma (\ref{one-side-big-1}) and we have:
$$\frac{1}{s}\left| \{ j\in T_l^{(1)}: A_{ij}=\frac{\zeta_i}{m}\}\right|=q_i(\zeta_i,l)
\geq p_i(\zeta_i,l)-\frac{\varepsilon}{2}\sqrt{w_0p_i(\zeta_i,l)}-\frac{\varepsilon^2}{2}w_0.$$
Now, it is easy to see that $p_i(\zeta,l)-\frac{\varepsilon}{2}\sqrt{ w_0p_i(\zeta,l)}$ increases as
$p_i(\zeta,l)$ increases subject to (\ref{701}). So,
$$p_i(\zeta,l)-\frac{\varepsilon }{2}\sqrt{w_0p_i(\zeta,l)}-\frac{\varepsilon^2}{2}w_0
>
(4\varepsilon -\varepsilon^{3/2}-\frac{1}{2}\varepsilon ^2)w_0\geq 3\varepsilon w_0,$$
contradicting the definition of $\zeta_i$ in the algorithm. This completes the proof of the Lemma.
\end{proof}
\begin{proof}(of Lemma \ref{define-R}):
After paying a failure probability of
$4kd\exp(-\varepsilon^2sw_0/8)$, assume no threshold splits any $T_l$.
[The factors of $k$ and $d$ come in because we are taking the union bound over all
words and all topics.]
Then,
\begin{align*}
\prob (A_{ij}^{(2)}\leq\frac{\zeta_i}{m}\; |\; j\in T_l^{(2)})&=\sum_{\zeta=0}^{\zeta_i} p_i(\zeta,l)/\prob(j\in T_l)
\leq 4m\varepsilon \frac{w_0}{w_l}\\
\text{  or   }\prob (A_{ij}^{(2)}> \frac{\zeta_i}{m}\; |\; j\in T_l^{(2)})& =\sum_{\zeta=\zeta_i+1}^mp_i(\zeta,l)/\prob(j\in T_l)\leq 4m\varepsilon\frac{w_0}{w_l}.
\end{align*}
Wlg, assume that $\prob (A_{ij}\leq \zeta_i/m\; |\; j\in T_l)\leq\varepsilon_l$. Then,
with probability, at least $1-\varepsilon_l$, $A_{ij}^{(2)}>\zeta_i/m$. Now, either $\zeta_i<8\ln (20/\varepsilon w_0)$ and all
$B_{ij},j\in T_l$ are zero and then $\mu_{ij}=0$, or $\zeta_i\geq 8\ln (20/\varepsilon w_0)$, whence, $E(B_{ij}|j\in T_l)\in [(1-\varepsilon_l)\sqrt{\zeta_i'},\sqrt{\zeta_i'}]$.
So,  $\mu_{ij}\geq (1-\varepsilon_l)\sqrt{\zeta'_i}$ and $\prob(B_{ij}= 0)
\leq \varepsilon_l$. So,
$$\text{Var}(B_{ij}^2|j\in T_l)\leq (\sqrt{\zeta_i'}-(1-\varepsilon_l)\sqrt{\zeta_i'})^2\prob(B_{ij}=\sqrt{\zeta_i'}|j\in T_l)
+(\sqrt{\zeta_i'}-0)^2
\prob(B_{ij}=0|j\in T_l)\leq 2\varepsilon _l\zeta_i'.$$ This proves the lemma in this case. The other case is symmetric.
\end{proof}
\begin{proof}(of Lemma \ref{threshold-exists})
Recall that $P_{ij}=\sum_lM_{il}W_{lj}$ is the probability of word $i$ in document $j$ conditioned on $\bW$.
Fix an $i\in S_l$. From the dominant topic assumption,
\begin{equation}\label{1700}
\forall j\in T_l, P_{ij} = \sum_{l_1}M_{il_1} W_{l_1,j}\geq M_{il}W_{lj}\geq M_{il}\alpha.
\end{equation}
The $P_{ij}$ are themselves random variables. Note that (\ref{1700}) holds with probability 1.
From Catchword assumption and (\ref{alpha-beta-rho}), we get that
$$M_{il}\alpha-(\eta_i/m)\geq M_{il}\alpha -M_{il}((\alpha+\beta+\rho)/2)\geq M_{il}\alpha\delta/2.$$
Now, we will apply H-C with $\mu-t=\eta_i/m$ and $\mu\geq M_{il}\alpha$ for the $m$ independent words in a
document. By Calculus, the probability bound
from H-C
of $\exp( -t^2w_ls/2\mu)=\exp( - (\mu-(\eta_i/m) ) / 2\mu)$
is highest subject to the constraints $\mu\geq M_{il}\alpha; \eta_i\leq mM_{il}(\alpha+\beta+\rho)/2$,
when $\mu=M_{il}\alpha$ and $\eta_i=mM_{il}(\alpha+\beta+\rho)/2$, whence, we get
$$\prob( A_{ij}\leq \eta_i/m\;|\; j\in T_l)\leq \exp(-M_{il}\alpha \delta^2m/8)\leq \varepsilon w_0/20,$$
using (\ref{def:catch}).
Now, we prove the second assertion of the Lemma.
\begin{align}\label{1701}
&\forall j\in T_{l'}, l'\not= l, \sum_{l_1}M_{il_1}W_{l_1,j}=M_{il}W_{lj}+\sum_{l_1\not= l}M_{il_1}W_{l_1,j}\nonumber\\
&\leq M_{il}W_{lj} +\left(\Max_{l_1\not= l}M_{il_1}\right) (1-W_{lj})\nonumber\\
&\leq M_{il}(\beta +\rho).
\end{align}
$$\frac{\eta_i}{m}-M_{il}(\beta+\rho)\geq \frac{M_{il}(\alpha+\beta+\rho)}{2}-M_{il}(\beta+\rho)-\frac{1}{m}\geq \frac{3M_{il}\alpha\delta}{8},$$
using (\ref{def:catch}) and (\ref{alpha-beta-rho}).
Applying the first inequality of Lemma (\ref{H-C})
with $\mu+t=\eta_i/m$ and $\mu\leq M_{il}(\beta+\rho)$ and again using (\ref{def:catch}),
$$\prob( A_{ij}\geq \eta_i/m\; |\; j\in T_{l'})\leq\exp\left( -9M_{il}\alpha\delta^2m/64\right)\leq \varepsilon w_0/20.$$
\end{proof}
\begin{lemma}\label{zeta-eta}
For $i\in S_l$,  $\prob(\zeta_i<\eta_i)\leq 3mke^{-\varepsilon^2sw_0/8}$,
with $\eta_i$ as defined in Lemma \ref{threshold-exists}.
\end{lemma}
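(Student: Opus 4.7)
The plan is to show that $\zeta=\eta_i$ itself satisfies both defining conditions for $\zeta_i$; since $\zeta_i$ is defined as the \emph{largest} $\zeta$ satisfying those conditions, this forces $\zeta_i\geq \eta_i$. First I would apply Lemma~\ref{one-side-big-1} uniformly over every pair $(\zeta,l')\in\{0,1,\ldots,m\}\times [k]$ and take a union bound; the failure probability is at most $2(m+1)k\exp(-\varepsilon^2 sw_0/8)\leq 3mk\exp(-\varepsilon^2 sw_0/8)$, and on its complement $p_i(\zeta,l')$ and $q_i(\zeta,l')$ are tightly coupled for every $\zeta$ and $l'$. I condition on this good event throughout.

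For the first defining condition, $|\{j:A^{(1)}_{ij}>\eta_i/m\}|\geq w_0 s/2$, I restrict to documents with dominant topic $l$. By Lemma~\ref{threshold-exists}, $\prob(A_{ij}\leq\eta_i/m\mid j\in T_l)\leq \varepsilon w_0/20$, so $\sum_{\zeta>\eta_i}p_i(\zeta,l)\geq w_l(1-\varepsilon w_0/20)\geq 19 w_0/20$. Summing the lower-tail bound $q_i(\zeta,l)\geq p_i(\zeta,l)-(\varepsilon/2)\sqrt{w_0 p_i(\zeta,l)}-(\varepsilon^2 w_0)/2$ from Lemma~\ref{one-side-big-1} over $\zeta>\eta_i$ and invoking Cauchy--Schwarz in the form $\sum_\zeta \sqrt{p_i(\zeta,l)}\leq \sqrt{m\sum_\zeta p_i(\zeta,l)}\leq \sqrt m$ yields
\begin{equation*}
\sum_{\zeta>\eta_i}q_i(\zeta,l)\;\geq\; \frac{19 w_0}{20}-\frac{\varepsilon}{2}\sqrt{m w_0}-\frac{m \varepsilon^2 w_0}{2},
\end{equation*}
which exceeds $w_0/2$ under (\ref{varepsilon-inequality}).

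For the second defining condition, $|\{j:A^{(1)}_{ij}=\eta_i/m\}|\leq 3\varepsilon w_0 s$, I use both bounds of Lemma~\ref{threshold-exists}: the inclusion $\{A_{ij}=\eta_i/m\}\subseteq\{A_{ij}\leq\eta_i/m\}$ gives $p_i(\eta_i,l)\leq w_l\cdot \varepsilon w_0/20$, and $\{A_{ij}=\eta_i/m\}\subseteq\{A_{ij}\geq\eta_i/m\}$ gives $p_i(\eta_i,l')\leq w_{l'}\cdot\varepsilon w_0/20$ for every $l'\neq l$. Summing over $l'$ yields $\sum_{l'}p_i(\eta_i,l')\leq \varepsilon w_0/10$. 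Combining the upper-tail form of Lemma~\ref{one-side-big-1} with Cauchy--Schwarz across the $k$ topics then produces
\begin{equation*}
\sum_{l'}q_i(\eta_i,l')\;\leq\; \frac{\varepsilon w_0}{10}+\frac{\varepsilon}{2}\sqrt{w_0\cdot k\varepsilon w_0/10}+\frac{k \varepsilon^2 w_0}{2},
\end{equation*}
which is at most $3\varepsilon w_0$ whenever $k\varepsilon$ is small enough, as guaranteed by (\ref{varepsilon-inequality}).

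The main obstacle is the arithmetic tightness in the first condition: the convenient weaker form $q_i\geq p_i/2-\varepsilon^2 w_0$ from Lemma~\ref{one-side-big-1} would degrade the starting lower bound from $19 w_0/20$ down to only $19 w_0/40<w_0/2$ and the argument would fail. Using the sharper $\sqrt{p_i}$ form together with Cauchy--Schwarz (which replaces an otherwise lossy factor of $m$ by a benign $\sqrt m$) is precisely what keeps the estimate tight enough, and this is also what the size of $\varepsilon$ in (\ref{varepsilon-inequality}) has been designed to accommodate.
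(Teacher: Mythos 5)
Your overall strategy is the same as the paper's: show that $\zeta=\eta_i$ itself satisfies both defining inequalities of the thresholding step, so that $\zeta_i\geq\eta_i$ by maximality, after a union bound over Lemma \ref{one-side-big-1} costing $3mk\exp(-\varepsilon^2sw_0/8)$. Your treatment of the second condition (bounding $|\{j:A^{(1)}_{ij}=\eta_i/m\}|$ via both assertions of Lemma \ref{threshold-exists}, then summing over topics with Cauchy--Schwarz $\sum_{l'}\sqrt{w_{l'}}\leq\sqrt k$) is essentially identical to the paper's. However, your argument for the first condition has a genuine gap. You sum the per-$\zeta$ bound of Lemma \ref{one-side-big-1} over all $\zeta>\eta_i$, and your Cauchy--Schwarz step uses $\sum_\zeta p_i(\zeta,l)\leq 1$, producing the error term $\frac{\varepsilon}{2}\sqrt{mw_0}$, while you simultaneously weaken the main term from $\frac{19}{20}w_l$ to $\frac{19}{20}w_0$. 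The resulting requirement is $\frac{\varepsilon}{2}\sqrt{mw_0}\leq\frac{9}{20}w_0$, i.e.\ $\varepsilon\lesssim\sqrt{w_0/m}$. This is \emph{not} implied by (\ref{varepsilon-inequality}): both terms in the definition of $\varepsilon$ are independent of $w_0$, and the paper's regime explicitly allows $1/w_0\to\infty$ (e.g.\ $w_0\ll 1/(k^6m)$ makes $\sqrt{w_0/m}$ far smaller than $\alpha p_0/(900c_0^2k^3m)$). So your closing claim that (\ref{varepsilon-inequality}) ``has been designed to accommodate'' this step is incorrect; the inequality simply fails in the allowed parameter range.

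The gap is repairable, and the repair shows what the paper actually does. The H\"offding--Chernoff argument behind Lemma \ref{one-side-big-1} applies verbatim to the indicator of the single tail event $\{A_{ij}>\eta_i/m\}\wedge j\in T_l$ (this is exactly how the paper uses it elsewhere, e.g.\ in the proof of Lemma \ref{different-mus}), giving in one shot
\begin{equation*}
\frac{1}{s}\left|\{j\in T_l^{(1)}:A_{ij}>\eta_i/m\}\right|\;\geq\; p-\frac{\varepsilon}{2}\sqrt{w_0p}-\frac{\varepsilon^2w_0}{2},
\qquad p\geq \tfrac{19}{20}w_l,
\end{equation*}
so the error is $\frac{\varepsilon}{2}\sqrt{w_0w_l}+\frac{\varepsilon^2w_0}{2}\leq\left(\frac{\varepsilon}{2}+\frac{\varepsilon^2}{2}\right)w_l$, a negligible fraction of $w_l\geq w_0$ --- no factor of $m$ or $\sqrt m$ ever appears. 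Alternatively, if you insist on summing over $\zeta$, you must keep $w_l$ consistently: $\sum_{\zeta>\eta_i}\sqrt{p_i(\zeta,l)}\leq\sqrt{m\sum_{\zeta>\eta_i}p_i(\zeta,l)}\leq\sqrt{mw_l}$, and then $\frac{\varepsilon}{2}\sqrt{w_0\cdot mw_l}\leq\frac{\varepsilon\sqrt m}{2}w_l$ (using $w_0\leq w_l$), which is indeed dominated by the main term $\frac{19}{20}w_l$ under (\ref{varepsilon-inequality}) since $\varepsilon\sqrt m\ll 1$. Your mistake was mixing the two normalizations: comparing an error proportional to $\sqrt{w_0}$ (aggregated over $m$ bins) against a main term that you had already downgraded to $w_0$.
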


\begin{proof}
Fix attention on $i\in S_l$. After paying the failure probability
of $3mke^{-\varepsilon^2sw_0/8}$, assume the conclusions of Lemma (\ref{one-side-big-1}) hold
for all $\zeta,l$.
It suffices to show that
$$\left| \{ j: A_{ij}^{(1)}> \eta_i/m\}\right|\geq \frac{w_0s}{2}\; \; ,\; \; \left| \{ j: A_{ij}^{(1)}=\frac{\eta_i}{m}\}\right|<3w_0\varepsilon s,$$
since, $\eta_i$ is an integer and $\zeta_i$ is the largest integer satisfying the inequalities.
The first statement follows from first assertion of Lemma \ref{threshold-exists}.
The second statement is slightly more complicated. Using both the first and second assertions of Lemma \ref{threshold-exists},
we get that for all $l'$ (including $l'=l$), we have
$$\prob(A_{ij}=\eta_i/m|j\in T_{l'}^{(1)})\leq \varepsilon w_0/20.$$
$$\left| \{ j\in T_{l'}^{(1)}:A_{ij}=\eta_i/m\}\right| \leq \varepsilon w_0w_{l'}s/20 +\frac{\varepsilon}{2} \sqrt{w_0/w_{l'}}\sqrt{\varepsilon w_0/20}w_{l'}s+\frac{\varepsilon^2w_0w_{l'}}{2}$$
$$\leq
\frac{\varepsilon w_0s}{8}\left(w_{l'}+\sqrt{\varepsilon w_{l'}}\right)+\frac{\varepsilon ^2w_0s}{2}.$$
Now, adding over all $l'$ and using $\sum_{l'}\sqrt {w_{l'}}\leq \sqrt k \sqrt{\sum_{l'}w_{l'}}=\sqrt k$, we get
$$\left| \{ j: A_{ij}^{(1)}=\eta_i/m\}\right| \leq \varepsilon w_os,$$ since $\varepsilon \leq 1/ k$.
\end{proof}
\begin{lemma}\label{Il}
Define $I_l=\{ i\in S_l: \zeta_i\geq \eta_i\}$.
With probability at least $1-8mdk\exp( -\varepsilon^2w_0s/8)$, we have for all $l$,
$$\sum_{i\in I_l}\zeta_i'\geq m\alpha p_0/2.$$
\end{lemma}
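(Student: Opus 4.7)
The plan is to use Lemma~\ref{zeta-eta} via a union bound to force $I_l = S_l$ for every topic $l$ on a high-probability event, and then lower-bound $\sum_{i \in I_l}\zeta_i'$ by $\alpha m p_0/2$ using the two quantitative parts of the catchwords assumption.

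For the first step, the catchword sets $S_1,\ldots,S_k$ are disjoint, so $|\bigcup_l S_l| \le d$. Applying Lemma~\ref{zeta-eta} to each $i \in \bigcup_l S_l$ and union-bounding over these (at most $d$) words shows that, except on an event of probability at most $3mdk\exp(-\varepsilon^2 sw_0/8) \le 8mdk\exp(-\varepsilon^2 sw_0/8)$, every $i \in S_l$ satisfies $\zeta_i \ge \eta_i$; since $I_l \subseteq S_l$ by definition, this gives $I_l = S_l$ for every $l$ simultaneously.

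On this good event, for each $i \in I_l = S_l$ the definitions yield
\[
\zeta_i' \;\ge\; \zeta_i \;\ge\; \eta_i \;=\; \bigl\lfloor M_{il}(\alpha+\beta+\rho)m/2 \bigr\rfloor.
\]
The catchword assumption (\ref{def:catch}) combined with $\delta \le 0.08$ from (\ref{delta-inequality}) forces $M_{il}\alpha m \ge 8\ln(20/\varepsilon w_0)/\delta^2$, which is at least a few hundred, so the floor loss (at most $1$) together with the slack $\alpha+\beta+\rho \ge \alpha$ yield $\eta_i \ge M_{il}\alpha m/2$. Summing over $i \in S_l$ and invoking the mass bound (\ref{402}), $\sum_{i \in I_l}\zeta_i' \ge (\alpha m/2)\sum_{i \in S_l} M_{il} \ge \alpha m p_0/2$, as required.

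The main point requiring care is absorbing the additive $-1$ from the floor in $\eta_i$ alongside the potentially vanishing gap between $\alpha+\beta+\rho$ and $\alpha$; both are controlled by the fact that (\ref{def:catch}) together with $\delta \le 0.08$ makes each $M_{il}\alpha m$ much larger than $1$. Beyond this bookkeeping of constants, there is no deeper obstacle once Lemma~\ref{zeta-eta} is in hand.
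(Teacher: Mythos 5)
Your proof is correct, and it is within the stated failure probability, but it takes a genuinely different route from the paper at the one probabilistic step. The paper does \emph{not} union-bound Lemma~\ref{zeta-eta} over all catchwords. Instead it bounds the expected \emph{mass} of lost catchwords, $E\left(\sum_{i\in S_l}M_{il}{\bf 1}(\zeta_i<\eta_i)\right)\leq 3mk\exp(-\varepsilon^2sw_0/8)\sum_{i\in S_l}M_{il}$, and applies Markov's inequality to get, per topic, with probability $1-6mk\exp(-\varepsilon^2sw_0/8)$, that $\sum_{i\in I_l}M_{il}\geq \frac{1}{2}\sum_{i\in S_l}M_{il}\geq p_0/2$; it thus tolerates losing individual catchwords as long as half their total mass survives. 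Your union bound over the at most $d$ words in $\bigcup_l S_l$ instead forces $I_l=S_l$ outright, at the cost of a factor $d$ in the failure probability ($3mdk\exp(\cdot)$ versus the paper's $d$-free per-topic bound); since the lemma's stated bound $8mdk\exp(-\varepsilon^2w_0s/8)$ already carries a factor of $d$, your accounting is fine. Interestingly, your route is cleaner on the constants: keeping all of $S_l$ gives $\sum_{i\in I_l}\eta_i\geq (m\alpha/2)\sum_{i\in S_l}M_{il}\geq m\alpha p_0/2$, exactly the claimed bound, whereas the paper's chain, having only $\sum_{i\in I_l}M_{il}\geq p_0/2$ in hand, actually delivers $m\alpha p_0/4$ in its final display (a harmless but real factor-of-$2$ slip; fixing it within the paper's approach requires retuning the Markov threshold). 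What the paper's argument buys in exchange is a failure probability for this step that does not grow with the dictionary size; what yours buys is simplicity and the stronger conclusion $I_l=S_l$ with the stated constant. On the floor in $\eta_i=\lfloor M_{il}(\alpha+\beta+\rho)m/2\rfloor$: you at least acknowledge it, though strictly your claim $\eta_i\geq M_{il}\alpha m/2$ can fail by an additive $1$ when $\beta+\rho=0$, costing a $(1-o(1))$ factor; the paper silently ignores the same issue, so this is shared slack rather than a gap in your argument.
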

\begin{proof}
After paying the failure probability, we assume the conclusion of Lemma \ref{one-side-big-1} holds
for all $i,\zeta,l$.
Now, by Lemma \ref{zeta-eta}, we have (with ${\bf 1}$ denoting the indicator function)
$$E\left( \sum_{i\in S_l}M_{il}{\bf 1}(\zeta_i<\eta_i)\right)\leq 3mk\exp(-\varepsilon ^2sw_0/8)\sum_{i\in S_l}M_{il},$$
which using Markov inequality implies that with probability at least
 $1-6mk\exp(-\varepsilon ^2sw_0/8)$,
\begin{equation}\label{222}
 \sum_{i\in I_l}M_{il} \geq \frac{1}{2}\sum_{i\in S_l}M_{il}\geq p_0/2,
\end{equation}
using (\ref{402}).
Note that by (\ref{def:catch}), no catchword has $\zeta_i'$ set to zero.
So,
$$\sum_{i\in I_l}\zeta_i'=\sum_{i\in I_l} \zeta_i\geq\sum_{i\in I_l} \eta_i\geq \sum_{I_l} mM_{il}\alpha/2\geq \alpha p_0m/2.$$
\end{proof}
\begin{proof}(of Lemma \ref{different-mus})
For this proof, $i$ will denote an element of $I_l$. By Lemma \ref{threshold-exists},
\begin{equation}\label{1709}
\forall i\in I_l, l'\not= l, \prob( A_{ij}>\frac{\zeta_i}{m}|j\in T_{l'}^{(1)})\leq \prob( A_{ij}> \eta_i/m|j\in T_{l'}^{(1)})\leq\frac{\varepsilon w_0}{20}.
\end{equation}
This implies by Lemma \ref{one-side-big-1}, for $l'\not= l$,
\begin{equation}\label{177}
\left| \{ j\in T_{l'}^{(1)}: A_{ij}>\frac{\zeta_i}{m}\}\right| \leq w_{l'}s\left( \frac{\varepsilon w_0}{20}+\varepsilon \sqrt{w_0/w_{l'}}\sqrt{\varepsilon w_0}/4\right)+w_0\varepsilon ^2s/2.
\end{equation}
Summing over all $l'\not=l$, we get (using $\sum_{l'}\sqrt{w_{l'}}\leq \sqrt{\sum w_{l'}}\sqrt k\leq 1/\sqrt\varepsilon$ by (\ref{varepsilon-inequality}))
$$\sum_{l'\not= l}\left| \{ j\in T_{l'}^{(1)}: A_{ij}>\frac{\zeta_i}{m}\}\right|\leq \varepsilon w_0s.$$
Now the definition of $\zeta_i$ in the algorithm implies that:
$$\sum_{\zeta>\zeta_i} q_i(\zeta,l)=\left| \{ j\in T_l^{(1)}: A_{ij}>\frac{\zeta_i}{m}\}\right| \geq \left( \frac{w_0}{2}-\varepsilon w_0\right)s\geq w_0s/4.$$
So, by Lemma \ref{one-side-big-1},
\begin{align*}
&\prob(j\in T_l;A_{ij}>\zeta_i/m)=\sum_{\zeta>\zeta_i} p_i(\zeta,l)\geq \frac{1}{2}\sum_{\zeta>\zeta_i}q_i(\zeta,l)-\varepsilon^2w_0m\\
&\geq \frac{w_0}{8}-\varepsilon^2w_0m\geq w_0/9,
\end{align*}
using (\ref{varepsilon-inequality}).
Next let $\tilde p=  \prob (A_{ij}=\zeta_i/m;j\in T_l)$. Since $|\{ j\in T_l^{(1)}:A_{ij}=\zeta_i/m\}|\leq 3\varepsilon w_0s$,
by the definition of $\zeta_i$ in the algorithm,
we get by a similar argument
\begin{equation}\label{1703}
 \tilde p\leq 2q_i(\zeta_i,l)+2\varepsilon^2w_0\leq  7\varepsilon w_0.
\end{equation}
Now, by Lemma \ref{isoperimetry}, we have
$$\tilde p\geq \text{Min} \left( \frac{w_0}{9m}\; ,\; \frac{1}{m}\prob (A_{ij}\leq\zeta_i/m;j\in T_l^{(2)})\right).$$
By (\ref{varepsilon-inequality}), $7\varepsilon w_0<w_0/9m$ and so we get:
$$\prob (A_{ij}\leq\zeta_i/m;j\in T_l^{(2)})\leq 7\varepsilon mw_0.$$
Noting that by (\ref{def:catch}),
no catchword has $\zeta_i'$ set to zero,
 $\prob (B_{ij}=0|j\in T_l ^{(2)})\leq 7\varepsilon mw_0/w_l\leq 1/6$,
by (\ref{varepsilon-inequality}).
This implies
$$\mu_{ij}\geq \frac{5}{6}\sqrt{\zeta_i'}.$$
Now, by (\ref{1709}), we have for $j'\notin T_l$,
$$\mu_{ij'}\leq \sqrt {\zeta_i'}/6.$$
So, we have
$$\left| \mu_{\cdot,j}-\mu_{\cdot,j'}\right|^2\geq \sum_{i\in I_l}(\mu_{ij}-\mu_{ij'})^2\geq (4/9)\sum_{i\in I_l}\zeta_i'.$$
Now Lemma (\ref{Il}) implies the current Lemma.
\end{proof}
\subsection{Bounding the Spectral norm}\label{sec:norm}

\begin{theorem}\label{spectral-norm}
Fix an $l$. For $j\in T_l$, let $R_{.,j}=B_{.,j}-\mu_{.,j}$. [The $R_{.,j}, j\in T_l$ are vector-valued random
variables which are independent, even conditioned on the partition $T_1,T_2,\ldots ,T_k$.]
 With probability at least
$1-10mdk\exp(-\varepsilon^2 w_0s/8)$, we have
$||R||^2\leq c k w_0\varepsilon sm^2.$
Thus,
$$||\bB - \mu ||^2\leq c\varepsilon w_0sm^2k^2.$$
\end{theorem}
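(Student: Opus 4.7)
My plan is to treat $R$ as a $d\times w_ls$ random matrix whose columns $R_{\cdot,j}$, $j\in T_l$, are independent mean-zero random vectors, and to bound its spectral norm via random-matrix concentration in the spirit of \cite{vers}. Independence of the columns holds after conditioning on $\bA^{(1)}$ (which fixes the thresholds $\zeta_i$) and on the partition $T_1,\ldots,T_k$: given these, $R_{\cdot,j}$ is a deterministic function of the $m$ iid word-draws of document $j$, and these draws are independent across $j$. Lemma \ref{define-R} supplies the two per-entry estimates I need, namely $|R_{ij}|\le\sqrt{\zeta_i'}\le\sqrt m$ and $\mathrm{Var}(R_{ij})\le 2\varepsilon_l\zeta_i'$ with $\varepsilon_l=4m\varepsilon w_0/w_l$.

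The next step is a pigeonhole estimate on $\Sigma:=\sum_i\zeta_i'$. By the definition of $\zeta_i$ in step 2(a) of TSVD, at least $w_0s/2$ documents have $mA^{(1)}_{ij}\ge\zeta_i$, so averaging over $j$ and using $\sum_iA^{(1)}_{ij}=1$ gives $\zeta_i\le 2m\bar A_i/w_0$ where $\bar A_i=\tfrac{1}{s}\sum_jA^{(1)}_{ij}$. Summing over $i$ yields $\sum_i\zeta_i\le 2m/w_0$; the additive contribution from the floor $8\ln(20/\varepsilon w_0)$ used to define $\zeta_i'$ is dominated in the sample-complexity regime of Theorem \ref{main-theorem}. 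Consequently every column satisfies $\|R_{\cdot,j}\|^2\le\Sigma=O(m/w_0)$, and $\|\mathbb{E}[R_{\cdot,j}R_{\cdot,j}^T]\|\le 2\varepsilon_l\Sigma$ by the trace bound for PSD matrices applied to the column covariance.

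I then apply a matrix-Bernstein inequality to the independent PSD rank-one matrices $R_{\cdot,j}R_{\cdot,j}^T$, $j\in T_l$. Each summand has spectral norm at most $\Sigma$, the matrix-variance parameter is at most $w_ls\cdot 2\varepsilon_l\Sigma^2$ (using $\mathbb{E}[(R_{\cdot,j}R_{\cdot,j}^T)^2]\preceq\Sigma\cdot\mathbb{E}[R_{\cdot,j}R_{\cdot,j}^T]$), and the deterministic mean term is at most $w_ls\cdot 2\varepsilon_l\Sigma$. Substituting $\varepsilon_l=4m\varepsilon w_0/w_l$ so that the $w_l$ in the denominator cancels the $w_l$ in the number of columns, and using the quantitative choice of $\varepsilon$ in (\ref{varepsilon-inequality}) to make the Bernstein deviation dominated by the mean term, one obtains with probability at least $1-2d\exp(-\Omega(\varepsilon^2 w_0s))$ the bound $\|R\|^2=\|RR^T\|\le ckw_0\varepsilon sm^2$. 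Passing to the full matrix via $\|\bB-\mu\|^2=\max_{\|v\|=1}\sum_lv^TR^{(l)}(R^{(l)})^Tv\le k\max_l\|R^{(l)}\|^2$ then gives $\|\bB-\mu\|^2\le c\varepsilon w_0sm^2k^2$, and a union bound over $l$ and over the failure events of Lemmas \ref{one-side-big-1}, \ref{define-R}, and of matrix Bernstein yields the claimed overall failure probability of $10mdk\exp(-\varepsilon^2w_0s/8)$.

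The main obstacle is that the entries within a single column $R_{\cdot,j}$ are \emph{not} independent — they are nonlinear functions of a single multinomial sample — so iid-entry random matrix bounds such as Vershynin's $\sqrt d+\sqrt n$ inequality do not directly apply. Matrix Bernstein circumvents this by treating each column as an atomic random vector; the quantitatively critical ingredient is then the pigeonhole estimate $\Sigma=O(m/w_0)$, which is ultimately a consequence of $\sum_iA_{ij}=1$ for every document, and without which the spectral bound would scale linearly in the dictionary size $d$.
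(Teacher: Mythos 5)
Your overall architecture matches the paper's: independent mean-zero columns $R_{\cdot,j}$ after conditioning on $\bA^{(1)}$ and the partition, the per-entry bounds $|R_{ij}|\le\sqrt{\zeta_i'}$ and $\mathrm{Var}(B_{ij})\le 2\varepsilon_l\zeta_i'$ from Lemma \ref{define-R}, a matrix concentration step (you use matrix Bernstein where the paper uses Theorem \ref{vershynin}; either works), and a final factor of $k$ to pass to $\|\bB-\mu\|$. The genuine gap is in your bound on $\Sigma=\sum_i\zeta_i'$, which you yourself single out as the quantitatively critical ingredient. Your pigeonhole from $\sum_i A^{(1)}_{ij}=1$ is valid but gives only $\Sigma\le 2m/w_0$, whereas the paper's Lemma \ref{sum-zeta-prime} proves $\Sigma\le 4km$ with high probability; since $w_0\le 1/k$ (the $w_l$ sum to one), your bound is weaker by a factor $1/(kw_0)$, which the paper explicitly treats as unboundedly large ($1/w_0\to\infty$). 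Run your own Bernstein computation: after the $w_l$ cancellation the mean term is $w_ls\cdot 2\varepsilon_l\Sigma = 8m\varepsilon w_0 s\,\Sigma$; with $\Sigma\le 4km$ this is $O(kw_0\varepsilon sm^2)$ as claimed, but with $\Sigma\le 2m/w_0$ it is only $O(\varepsilon sm^2)$, so your assertion that the substitution "obtains $\|R\|^2\le ckw_0\varepsilon sm^2$" is an arithmetic slip --- the $kw_0$ factor never materializes. The loss is not cosmetic: $\sigma$ inflates by $1/\sqrt{kw_0}$, so the proximity gap $\Delta=c_0k\sigma/\sqrt{w_0}$ of Definition \ref{proximity} would exceed the center separation $\sqrt{2\alpha p_0 m/9}$ guaranteed by Lemma \ref{different-mus} unless $\varepsilon$ were shrunk by roughly a factor $w_0$ below (\ref{varepsilon-inequality}), which in turn would inflate the sample complexity through the $\exp(-\varepsilon^2w_0s/8)$ failure probabilities.

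The missing idea is to bound $\Sigma$ through the model rather than through the empirical normalization. For every document, $P_{ij}\le\lambda_i:=\Max_l M_{il}$, so by Chernoff, when $\lambda_i\ge\frac{1}{m}\ln(20/\varepsilon w_0)$ we have $\prob(A^{(1)}_{ij}>4\lambda_i)\le\varepsilon w_0/2$, hence w.h.p.\ fewer than $w_0s/2$ documents exceed $4\lambda_i$ and the threshold definition forces $\zeta_i\le 4m\lambda_i$; words with smaller $\lambda_i$ get $\zeta_i'=0$ w.h.p. Summing, $\Sigma\le 4m\sum_i\lambda_i\le 4m\sum_{i,l}M_{il}=4km$, because every column of $\bM$ is a probability vector --- this is exactly Lemma \ref{sum-zeta-prime}, and it is the step that buys the crucial $kw_0$ factor. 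A secondary remark: your unease about the "additive contribution from the floor $8\ln(20/\varepsilon w_0)$" flags a real notational trap. If $\zeta_i'$ were literally $\Max(\zeta_i,8\ln(20/\varepsilon w_0))$ for every word, as the statement of Lemma \ref{define-R} reads, then $\Sigma\ge 8d\ln(20/\varepsilon w_0)$ would scale with the dictionary size and nothing would be "dominated in the sample-complexity regime"; in the paper's actual usage $\zeta_i'$ is set to zero whenever $\zeta_i$ falls below the floor (the word is discarded in $\bB$), and only under that reading is any useful bound on $\Sigma$ --- yours or the paper's --- available.
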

We will apply Random Matrix Theory, in particular the following theorem,
to prove
Theorem \ref{spectral-norm}.
\begin{theorem}\label{vershynin} \citep[Theorem 5.44] {vers}\label{thm:||U||}
Suppose $R$ is a $d\times r$ matrix with columns $R_{\cdot, j}$ which are
independent identical vector-valued random variables. Let $U=E(R_{\cdot, j}R_{\cdot ,j}^T)$
be the inertial matrix of $R_{\cdot ,j}$. Suppose $|R_{\cdot ,j}|\leq\nu$ always. Then, for
any $t>0$, with probability at least $1-de^{-ct^2}$, we have\footnote{$||R||$ denotes the spectral norm of $R$.}
$$||R||\leq ||U||^{1/2}\sqrt r+t\nu.$$
\end{theorem}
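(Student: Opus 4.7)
The plan is to invoke Theorem \ref{vershynin} (Vershynin) on each sub-matrix $R_l$ consisting of the $|T_l|$ columns of $R$ indexed by $j\in T_l$, and then concatenate the resulting bounds over $l=1,\dots,k$. I first condition on $\bA^{(1)}$ (which fixes every threshold $\zeta_i$) and on the high-probability event that the conclusions of Lemmas \ref{one-side-big-2} and \ref{define-R} hold; this costs a failure probability of order $mdk\exp(-\varepsilon^2 w_0s/8)$, and on this event the columns $R_{\cdot,j}$ with $j\in T_l$ are i.i.d. The two ingredients demanded by Theorem \ref{vershynin} are an almost-sure bound $\nu$ on $|R_{\cdot,j}|$ and a bound on $\|U\|$ where $U=E[R_{\cdot,j}R_{\cdot,j}^T]$.

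For $\nu$, recall $B_{ij}\in\{0,\sqrt{\zeta_i'}\}$, and $B_{ij}=\sqrt{\zeta_i'}$ forces word $i$ to occur more than $\zeta_i$ times in document $j$; since document $j$ contains only $m$ words in total, $|B_{\cdot,j}|^2=\sum_i\zeta_i'\mathbf{1}[B_{ij}\neq 0]\leq m$. Jensen's inequality then gives $|\mu_{\cdot,j}|^2\leq E|B_{\cdot,j}|^2\leq m$, so $|R_{\cdot,j}|\leq 2\sqrt m$. For $\|U\|$, write $B_{ij}=\sqrt{\zeta_i'}Y_{ij}$ with $Y_{ij}=\mathbf{1}[A_{ij}^{(2)}>\zeta_i/m]$ and apply Cauchy--Schwarz on the covariances of the $Y_{ij}$'s: for any unit $v$,
$$v^TUv \;=\; \text{Var}\Bigl(\sum_i v_i\sqrt{\zeta_i'}\,Y_{ij}\Bigr) \;\leq\; 2\varepsilon_l\Bigl(\sum_i |v_i|\sqrt{\zeta_i'}\Bigr)^{\!2} \;\leq\; 2\varepsilon_l \sum_i \zeta_i',$$
the first inequality using $\text{Var}(Y_{ij})\leq 2\varepsilon_l$ from Lemma \ref{define-R} (with $\varepsilon_l=4m\varepsilon w_0/w_l$), and the second using $|v|=1$. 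A mass argument on $\bA^{(1)}$ (at least $w_0 s/2$ documents have frequency exceeding $\zeta_i/m$, while the total word-slots in $\bA^{(1)}$ is $ms$) gives $\sum_i\zeta_i\leq 2m/w_0$; since $\zeta_i'$ differs from $\zeta_i$ only on inactive words $i$ which contribute zero to $R$, effectively $\sum_i\zeta_i'\leq 2m/w_0$ as far as $R$ is concerned. Hence $\|U\|\leq O(m^2\varepsilon/w_l)$.

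Plugging into Theorem \ref{vershynin} with $r=|T_l|=w_l s$ and $t=\Theta(\varepsilon\sqrt{w_0 s})$ chosen to match the target failure probability $\exp(-\varepsilon^2 w_0 s/8)$, I get
$$\|R_l\| \;\leq\; \|U\|^{1/2}\sqrt{r}+t\nu \;=\; O\bigl(m\sqrt{\varepsilon s}\bigr)+O\bigl(\varepsilon\sqrt{m w_0 s}\bigr),$$
where the second term is dominated by the first under the smallness of $\varepsilon$ from (\ref{varepsilon-inequality}). Squaring and union-bounding over $l=1,\dots,k$ (and adding the earlier conditioning failures) yields the claimed overall probability $1-10mdk\exp(-\varepsilon^2 w_0 s/8)$ and the bound on $\|R_l\|^2$ of the desired form. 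Finally, because the columns of $\bB-\mu$ partition disjointly by $l$, for any unit $v$, $|v^T(\bB-\mu)|^2=\sum_l|v^T R_l|^2\leq\sum_l\|R_l\|^2\leq k\cdot \max_l\|R_l\|^2$, which promotes the per-$l$ bound into the stated $k^2$ bound on $\|\bB-\mu\|^2$.

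The main obstacle is the bound on $\|U\|$: the Cauchy--Schwarz covariance estimate above ignores the (partial) negative correlation between multinomial counts within a document, but it suffices for the order of magnitude needed. A secondary nuisance is threading all the high-probability events from the previous lemmas (``no threshold splits any $T_l$'' and the variance bounds of Lemma \ref{define-R}) through a single union bound while respecting the $O(mdk)$ prefactor on the failure probability.
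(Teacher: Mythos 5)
Your proposal does not prove the statement at hand. The statement is Theorem \ref{vershynin} itself --- Vershynin's concentration bound $\|R\|\leq \|U\|^{1/2}\sqrt r+t\nu$ for a $d\times r$ random matrix with independent, identically distributed, almost-surely bounded columns. Your very first step is to \emph{invoke} Theorem \ref{vershynin} on the sub-matrices $R_l$, and everything that follows is an argument for the paper's downstream Theorem \ref{spectral-norm} (the bound $\|\bB-\mu\|^2\leq c\varepsilon w_0 s m^2 k^2$). Read as a proof of the quoted theorem, this is circular: you assume exactly what you were asked to establish. Note that the paper itself offers no proof of this statement either --- it is imported verbatim from Vershynin's notes (Theorem 5.44), where it is proved via the non-commutative (matrix) Bernstein inequality applied to $RR^T=\sum_{j=1}^r R_{\cdot,j}R_{\cdot,j}^T$, a sum of independent rank-one positive semidefinite matrices, each of spectral norm at most $\nu^2$ and with common mean $U$; one writes $\|RR^T\|\leq r\|U\|+\|RR^T-rU\|$, controls the deviation term by matrix Bernstein with failure probability $d\,e^{-ct^2}$, and concludes with $\sqrt{a+b}\leq\sqrt a+\sqrt b$. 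A correct answer would either reproduce an argument of that kind or explicitly defer to the citation; it cannot begin by applying the theorem.

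For completeness: even judged as a (mislabeled) proof of Theorem \ref{spectral-norm}, your route genuinely differs from the paper's and is quantitatively weaker. You obtain $\nu=2\sqrt m$ deterministically (a nice observation: active coordinates satisfy $\sum_i\zeta_i'\mathbf{1}[B_{ij}\neq 0]\leq m$ within a single document) and bound $\sum_i\zeta_i\leq 2m/w_0$ by a mass-counting argument on $\bA^{(1)}$, whereas the paper uses the probabilistic Lemma \ref{sum-zeta-prime} to get $\sum_i\zeta_i'\leq 4km$ and $\nu=2\sqrt{km}$. Since $w_0\leq 1/k$, your bound $\|U\|\leq cm^2\varepsilon/w_l$ exceeds the paper's $c\varepsilon w_0 km^2/w_l$ by a factor of order $1/(kw_0)\geq 1$, so after summing over topics you would get $\|\bB-\mu\|^2\leq c\varepsilon sm^2k$ rather than the paper's $c\varepsilon w_0sm^2k^2$; the resulting larger $\sigma$ inflates $\Delta$ in the proximity condition and would force a smaller admissible $\varepsilon$ in (\ref{varepsilon-inequality}), i.e., a worse sample complexity. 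Your final concatenation step $\|\bB-\mu\|^2\leq\sum_l\|R_l\|^2$ is fine, but none of this repairs the central defect: the matrix concentration inequality you were asked to justify is nowhere proved.
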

We need the following Lemma first.
\begin{lemma}\label{sum-zeta-prime}
With probability at least $1-\exp(-s\varepsilon w_0/3)$, we have
\begin{equation}\label{zeta0}
\zeta_0\leq 4m\lambda\;\; ;\;\; \sum_{i}\zeta_i'\leq 4km
\end{equation}
\end{lemma}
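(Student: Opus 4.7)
}
The plan is to prove the bound on $\sum_i \zeta_i'$ by a simple double-counting argument that uses only the fact that each document has exactly $m$ words, combined with the defining property of the thresholds $\zeta_i$ chosen in Step 2(a) of TSVD. The probabilistic piece (the $\exp(-s\varepsilon w_0/3)$ failure probability) will enter via a Chernoff tail to rule out the occurrence of an abnormally large threshold $\zeta_0$, so that the deterministic bookkeeping on $\sum_i \zeta_i'$ is valid in the regime where we actually operate.

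First I would exploit the identity $\sum_i m A^{(1)}_{ij} = m$ for every column $j$ of $\bA^{(1)}$, since $m A^{(1)}_{ij}$ is just the number of occurrences of word $i$ in document $j$. Summing over $j \in [s]$ gives the two-sided count
\begin{equation*}
\sum_i \sum_j m A^{(1)}_{ij} \;=\; m s .
\end{equation*}
Next, by the choice of $\zeta_i$ in the algorithm, for every $i$ with $\zeta_i > 0$ there are at least $w_0 s/2$ documents $j$ with $m A^{(1)}_{ij} > \zeta_i$. Each such document contributes at least $\zeta_i$ to the sum on the left-hand side above, so $\sum_j m A^{(1)}_{ij} \geq (w_0 s/2)\zeta_i$. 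Combining these two observations by summing over $i$ yields
\begin{equation*}
\frac{w_0 s}{2}\sum_{i}\zeta_i \;\leq\; m s ,\qquad\text{i.e.,}\qquad \sum_i \zeta_i \;\leq\; \frac{2m}{w_0}.
\end{equation*}
Since $\zeta_i' = \zeta_i$ whenever $\zeta_i \geq 8\ln(20/\varepsilon w_0)$ (and those are the only $i$'s that actually contribute to $\bB$), the same bound carries over to $\sum_i \zeta_i'$, and using $k w_0 \leq 1$ converts $2m/w_0$ into the stated $4km$ (up to the constant).

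The $\zeta_0 \leq 4m\lambda$ piece is where the claimed failure probability enters. I would interpret $\zeta_0$ as the maximum threshold produced by the algorithm (or an analogous quantity) and control it by a Hoeffding--Chernoff argument in the style of Lemma~\ref{one-side-big-1}: a too-large $\zeta_i$ would require an abnormally high empirical count $q_i(\zeta,l)$ relative to its expectation $p_i(\zeta,l)$, and applying H--C (Lemma~\ref{H-C}) with deviation parameter $\varepsilon w_0$ produces a tail of order $\exp(-\varepsilon^2 s w_0/\cdot)$, which, summed over the $O(m)$ possible values of the threshold, matches the stated $\exp(-s\varepsilon w_0/3)$ after absorbing constants. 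I expect the main obstacle to be handling the discrete nature of $\zeta_i$ (it is the \emph{largest} integer satisfying the two conditions in Step 2(a)), which forces one to argue not only about a single $\zeta$ but simultaneously about the boundary event $\{A^{(1)}_{ij} = \zeta/m\}$; this is where the ``no-local-min'' machinery behind Lemma~\ref{one-side-big-2} is likely to be needed to prevent the two conditions in the definition of $\zeta_i$ from failing for different reasons on different words.
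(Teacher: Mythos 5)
There is a genuine gap, and it is in the centerpiece of your proposal: the double-counting bound on $\sum_i\zeta_i'$. Your counting argument is correct as far as it goes, but it proves $\sum_i\zeta_i\leq 2m/w_0$, and this is \emph{weaker} than the claimed $4km$, not stronger. The inequality $kw_0\leq 1$ gives $k\leq 1/w_0$, i.e.\ $4km\leq 4m/w_0$, so you are invoking it in the wrong direction: an upper bound of $2m/w_0$ cannot be converted into an upper bound of $4km$. In the paper's regime $1/w_0$ is large (think of it as tending to infinity) while $k$ is small, so $2m/w_0$ can be arbitrarily larger than $4km$; and the factor $k$ is what the downstream steps genuinely need (it is what makes $\|U\|\leq 8\varepsilon_l km$ and hence the spectral bound of Theorem \ref{spectral-norm} strong enough for proximity). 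The point your proposal misses is that the factor $k$ cannot come from the per-document identity $\sum_i A^{(1)}_{ij}=1$ alone; it comes from the per-topic normalization of $\bM$ combined with concentration. Setting $\lambda_i=\max_l M_{il}$, one has $\sum_i\lambda_i\leq\sum_{i,l}M_{il}=k$, and since $P_{ij}\leq\lambda_i$, within-document Chernoff gives $\prob(A_{ij}>4\lambda_i)\leq e^{-\lambda_i m}\leq\varepsilon w_0/2$ whenever $\lambda_i\geq(1/m)\ln(20/\varepsilon w_0)$; then, across documents, whp fewer than $w_0s/2$ columns lie above $4\lambda_i$, and since the definition of $\zeta_i$ requires at least $w_0s/2$ columns strictly above $\zeta_i/m$, this forces $\zeta_i\leq 4m\lambda_i$. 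For the remaining words, with $\lambda_i<(1/m)\ln(20/\varepsilon w_0)$, the same two-stage argument shows whp that fewer than $w_0s/2$ columns have $A^{(1)}_{ij}>(8/m)\ln(20/\varepsilon w_0)$, forcing $\zeta_i<8\ln(20/\varepsilon w_0)$ and thus $\zeta_i'=0$. Summing over words, $\sum_i\zeta_i'\leq 4m\sum_i\lambda_i\leq 4km$. This probabilistic link between the thresholds and the expected word frequencies is what your deterministic bookkeeping cannot supply.

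Your plan for the $\zeta_0\leq 4m\lambda$ half is in the right spirit (rule out large thresholds by a Chernoff tail), and indeed the paper proves it exactly this way: $\prob(A_{ij}\geq 4\lambda)\leq e^{-m\lambda}\leq\varepsilon w_0$ per document, then a tail bound on the number of such documents. Two corrections, though. First, no ``no-local-min'' machinery is needed anywhere in this lemma: to upper bound $\zeta_i$ it suffices that the \emph{first} condition in Step 2(a) fails for all $\zeta>4m\lambda_i$; the boundary condition $|\{j:A^{(1)}_{ij}=\zeta/m\}|\leq 3\varepsilon w_0s$ only matters for lower bounds on thresholds (Lemma \ref{zeta-eta}), not here. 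Second, the tail you would get from the machinery of Lemma \ref{one-side-big-1} is $\exp(-\varepsilon^2 w_0 s/8)$, which does not ``absorb constants'' into the stated $\exp(-\varepsilon w_0 s/3)$: the two differ by a factor of $\varepsilon$ in the exponent, and $\varepsilon$ is tiny (of order $1/k^3m$), not a constant. The direct Chernoff route on the indicator variables $\{A_{ij}\geq 4\lambda\}$, whose success probability is at most $\varepsilon w_0$ while the threshold event needs $w_0s/2$ of them, is what produces the linear-in-$\varepsilon$ exponent.
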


\begin{proof}
The probability of word $i$ in document $j$, is given by:
$P_{ij}=\sum_lM_{il}W_{lj}\leq \lambda_i$ (where, $\lambda_i =\max_lM_{il}$).
If $\lambda_i<\frac{1}{m}\ln (20/\varepsilon w_0)$, then, $\prob (A_{ij}>(8/m)\ln (20/\varepsilon w_0))\leq \varepsilon w_0$
by H-C (since $A_{ij}$ is the average of $m$ i.i.d. trials).
Let $X_j$ be the indicator function of $A_{ij}>(8/m)\ln (20/\varepsilon w_0)$.
$X_j$ are independent and so using H-C, we see that with probability at least
$1-\exp(-\varepsilon w_0s/3)$, less than $w_0s/2$ of the
$A_{ij}$ are greater $(8/m)\ln(20/\varepsilon w_0)$, whence,
$\zeta_i'=0$.
So we have (using the union bound over all words):
$$\prob \left( \sum_{i:\lambda_i <(1/m)\ln (20/\varepsilon w_0)}\zeta_i'>0\right)\leq d\exp(-\varepsilon w_0s/3).$$
If $\lambda_i\geq (1/m)\ln (20/\varepsilon w_0)$, then
$$\prob (A_{ij}>4\lambda_i)\leq e^{-\lambda_i m}\leq \varepsilon w_0/2,$$\
which implies by the same $X_j$ kind of argument that with probability at least
$1-\exp(-\varepsilon w_0s/4)$, for a fixed $i$, $\zeta_i\leq 4\lambda_im$.
Using the union bound over all words and adding all $i$, we get that with probability
at least $1-2d\exp(-\varepsilon w_0s/4)$,
$$\sum_i \zeta_i'\leq 4m\sum_i\lambda_i\leq 4m\sum_{i,l}M_{il}\leq 4km.$$
Now we prove the bound on $\zeta_0$. For each fixed $i,j$, we have
$\prob(A_{ij}\geq 4\lambda)\leq e^{-m\lambda}\leq \varepsilon w_0$. Now,
let $Y_j$ be the indicator variable of $A_{ij}\geq 4\lambda$. The $Y_j,j=1,2,\ldots ,s$
are independent (for each fixed $i$). So, $\prob(\zeta_i\geq 4m\lambda)\leq
\prob (\sum_j Y_j\geq w_0s/2)\leq e^{-\varepsilon w_os/3}$. Using an union
bound over all words, we get that $\prob(\zeta_0>4m\lambda)\leq de^{-\varepsilon w_0/3}$
by H-C.
\end{proof}
\begin{proof} (of Theorem \ref{spectral-norm})
First,  $$||U||=\Max_{|v|=1}E(v^TR_{\cdot,j})^2\leq E(|R_{\cdot, j}|^2)\leq
2\varepsilon _l\sum_i\zeta_i'\leq 8\varepsilon_lkm,$$
by Lemma (\ref{sum-zeta-prime}) and Lemma (\ref{define-R}).
We can also take $\nu=2\sqrt {km}$ in Theorem \ref{thm:||U||} and
with $t=\sqrt{\varepsilon mw_0s}$, the first statement of the current theorem follows
(noting $r=w_ls$). The second statement follows by just paying a factor of $k$ for the
$k$ topics.
\end{proof}
\subsection{Proving Proximity}
From Theorem (\ref{spectral-norm}), the $\sigma$ in definition \ref{proximity} is $\sqrt{c\varepsilon w_0m^2k^2}$.
So, the $\Delta$ in definition \ref{proximity} is $cc_0\sqrt{\varepsilon}k^2m$. So it suffices to prove:

\begin{lemma}\label{proximity-holds}
For $j\in T_l$ and $j'\in T_{l'}, l'\not= l$, let $\hat B_{.,j}$ be the projection of $B_{.,j}$
onto the line joining $\mu_{.,j}$ and $\mu_{.,j'}$. The probability that
$|\hat B_{.,j}-\mu_{.,j'}|\leq |\hat B_{.,j}-\mu_{.,j}| + cc_0k^{2}\sqrt{\varepsilon }m$ is at most
$c\varepsilon mw_0 \sqrt k/\sqrt{\alpha p_0}$. Hence, with probability at least $1-cmdk\exp(-cw_0\varepsilon^2s)$,
the number of $j$ for which $B_{.,j}$ does not satisfy the proximity condition is
at most $c\varepsilon_0w_0\delta s/10c_1$.
\end{lemma}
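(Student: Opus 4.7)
The plan is to reduce the proximity-violation event to a one-dimensional deviation inequality on the projection of $R_{\cdot,j}=B_{\cdot,j}-\mu_{\cdot,j}$ along the line joining $\mu_{\cdot,j}$ and $\mu_{\cdot,j'}$, and then Chebyshev-bound that scalar. Fix $j\in T_l$ and $j'\in T_{l'}$ with $l'\neq l$, and let $e=(\mu_{\cdot,j'}-\mu_{\cdot,j})/|\mu_{\cdot,j'}-\mu_{\cdot,j}|$. The projection onto the line is $\hat B_{\cdot,j}=\mu_{\cdot,j}+(e^T R_{\cdot,j})\,e$, so
$$|\hat B_{\cdot,j}-\mu_{\cdot,j'}|-|\hat B_{\cdot,j}-\mu_{\cdot,j}| \; = \; |\mu_{\cdot,j}-\mu_{\cdot,j'}|-2e^T R_{\cdot,j}.$$
Hence the bad event in the statement is precisely $e^T R_{\cdot,j}\geq(|\mu_{\cdot,j}-\mu_{\cdot,j'}|-cc_0k^2\sqrt{\varepsilon}m)/2$. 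By Lemma \ref{different-mus}, $|\mu_{\cdot,j}-\mu_{\cdot,j'}|\geq\sqrt{(2/9)\alpha p_0 m}$, and by (\ref{varepsilon-inequality}) the slack $cc_0k^2\sqrt{\varepsilon}m$ is at most half of this, so the bad event implies $e^T R_{\cdot,j}\geq |\mu_{\cdot,j}-\mu_{\cdot,j'}|/4$.

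Next I would bound the variance of $e^T R_{\cdot,j}$ and apply Chebyshev. Conditioning on the thresholds $\zeta_i$ computed from $\bA^{(1)}$, the entries $B_{ij}$ take values in $\{0,\sqrt{\zeta_i'}\}$, so $\mathrm{Var}(B_{ij})=\zeta_i' p_{ij}(1-p_{ij})$ with $p_{ij}=\prob(B_{ij}=\sqrt{\zeta_i'}\mid j\in T_l)$. Lemma \ref{define-R} forces $p_{ij}\in[0,\varepsilon_l]\cup[1-\varepsilon_l,1]$, hence $p_{ij}(1-p_{ij})\leq\varepsilon_l$ and $\mathrm{Var}(B_{ij})\leq\varepsilon_l\zeta_i'$. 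The crucial structural step is to use the same Lemma \ref{define-R} to note that each coordinate of $\mu_{\cdot,j}-\mu_{\cdot,j'}$ is either close to $0$ or close in absolute value to $\sqrt{\zeta_i'}$, so $e_i^2$ is effectively supported on the ``disagreeing'' coordinates, where $(\mu_{ij}-\mu_{ij'})^2$ is a constant fraction of $\zeta_i'$. Combined with $\sum_i\zeta_i'\leq 4km$ and $\zeta_0=\Max_i\zeta_i'\leq 4m\lambda$ from Lemma \ref{sum-zeta-prime}, this lets me bound $\sum_i e_i^2\zeta_i'$ by a small multiple of $\zeta_0$, producing a tight bound on $\mathrm{Var}(e^T R_{\cdot,j})$. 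Dividing by $|\mu_{\cdot,j}-\mu_{\cdot,j'}|^2\geq(2/9)\alpha p_0 m$ then yields the claimed per-pair probability.

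To pass to the total count, observe that $\mu_{\cdot,j'}$ depends only on $l'$, so the direction $e$ depends only on $(l,l')$: for a given $j\in T_l$ there are only $k-1$ distinct tests. Union-bounding over $l'\neq l$ gives a per-document failure probability, and Markov's inequality applied to $|\{j:B_{\cdot,j}\text{ violates proximity}\}|$ controls the total count. Combining with the high-probability events of Lemma \ref{define-R}, Lemma \ref{different-mus}, Lemma \ref{sum-zeta-prime} and Theorem \ref{spectral-norm} (each of failure probability at most $cmdk\exp(-c\varepsilon^2 w_0 s)$) gives the stated overall failure probability. Finally, substituting the second upper bound $\varepsilon\leq\varepsilon_0\sqrt{\alpha p_0}\delta/(640m\sqrt k)$ from (\ref{varepsilon-inequality}) collapses the resulting count to $c\varepsilon_0 w_0\delta s/(10c_1)$, which is exactly what the application of Theorem \ref{proximity-KK} requires.

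The main obstacle, I expect, will be the variance bound: we need $\mathrm{Var}(e^T R_{\cdot,j})$ small enough that after division by $|\mu_{\cdot,j}-\mu_{\cdot,j'}|^2\gtrsim \alpha p_0 m$ the per-pair probability is within the $\varepsilon$-budget of (\ref{varepsilon-inequality}). Reaching this bound requires simultaneously exploiting (i) the near-binary structure of $\mu$ from Lemma \ref{define-R}, which restricts the support of $e$ to disagreeing coordinates, (ii) the aggregate cap $\sum_i\zeta_i'\leq 4km$ from Lemma \ref{sum-zeta-prime}, and (iii) the fact that most of the squared distance $|\mu_{\cdot,j}-\mu_{\cdot,j'}|^2$ is concentrated on the catchword coordinates $I_l$ used inside the proof of Lemma \ref{different-mus}. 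Carefully tracking the several constants and conditioning events through these lemmas is where the real bookkeeping effort will lie.
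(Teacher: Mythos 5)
Your geometric reduction (projecting $R_{\cdot,j}=B_{\cdot,j}-\mu_{\cdot,j}$ onto the unit vector $e$ along the line and turning the proximity violation into a one-dimensional deviation) is sound and is essentially equivalent to the paper's definition of $X=(B_{\cdot,j}-\mu_{\cdot,j})\cdot(\mu_{\cdot,j'}-\mu_{\cdot,j})$. But your central step fails: you decompose $\mathrm{Var}(e^TR_{\cdot,j})$ as $\sum_i e_i^2\,\mathrm{Var}(B_{ij})$ and then sharpen this to a small multiple of $\zeta_0$. That coordinate-wise decomposition requires the entries $B_{ij}$, $i=1,\dots,d$, of a single document to be uncorrelated, which they are not: given $W_{\cdot,j}$ the word counts are multinomial (hence dependent), and in addition the randomness of $W_{\cdot,j}$ within $T_l$ makes the conditional means $E(B_{ij}\mid W_{\cdot,j})$ move together across $i$, producing positive cross-covariances; since $e$ has entries of both signs, none of these cross terms can be discarded or sign-controlled. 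Without independence the best generic bound is Cauchy--Schwarz, $\mathrm{Var}(e^TR_{\cdot,j})\leq E|R_{\cdot,j}|^2=\sum_i\mathrm{Var}(B_{ij})\leq 2\varepsilon_l\sum_i\zeta_i'\leq 8\varepsilon_l km$, and Chebyshev then yields a per-pair probability of order $\varepsilon_l k/(\alpha p_0)$ --- weaker than the lemma's stated $c\varepsilon m w_0\sqrt k/\sqrt{\alpha p_0}$ by roughly a factor $\sqrt{k/(\alpha p_0)}$, which then does not fit the $\varepsilon$-budget of (\ref{varepsilon-inequality}) in the counting step. This is precisely why the paper avoids second moments altogether: it bounds the first absolute moment, $E|X|\leq\sum_i E|B_{ij}-\mu_{ij}|\,|\mu_{ij}-\mu_{ij'}|$, which needs only the triangle inequality and linearity of expectation (no independence across words), exploits the two-point structure $B_{ij}\in\{0,\sqrt{\zeta_i'}\}$ via Lemma \ref{define-R} to get $E|B_{ij}-\mu_{ij}|\leq 2\varepsilon_l\sqrt{\zeta_i'}$, applies Cauchy--Schwarz with $\sum_i\zeta_i'\leq 4km$, and finishes with Markov's inequality on $|X|$.

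There is a second gap in your passage to the count: the lemma's second assertion claims failure probability $cmdk\exp(-cw_0\varepsilon^2 s)$, but Markov's inequality applied to the number of violating documents can only give a constant-probability statement (probability at most mean over threshold). You need exponential concentration: the paper applies the H\"offding-Chernoff bound to $\frac{1}{s}\sum_j Z_j$, where $Z_j$ is the indicator that $B_{\cdot,j}$ violates proximity; these indicators are independent across $j$ once $\bA^{(1)}$ (hence the thresholds and $\mu$) is fixed, and this is what delivers the stated $\exp(-cw_0\varepsilon^2 s)$ term. On the positive side, your observation that $\mu_{\cdot,j'}$ depends only on $l'$, so each document faces only $k-1$ distinct tests and a union bound over $l'$ suffices, is correct and makes explicit a step the paper glosses over; but it does not repair either of the two gaps above.
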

\begin{proof}
After paying the failure probability of $cmdk\exp(-w_0s\varepsilon^2/8)$,
of Lemmas
(\ref{sum-zeta-prime}) and (\ref{different-mus}),
assume that $\zeta_0\leq 4m\lambda$ , $|\mu_{.,j}-\mu_{.,j'}|^2\geq 2\alpha mp_0/9$ and $\sum_i\zeta_i'\leq 4km$.

Let $X=(B_{\cdot, j}-\mu_{\cdot,j})\cdot (\mu_{\cdot ,j'}-\mu_{\cdot ,j})$.
$X$ is a random variable, whose expectation is 0 conditioned on $j\in T_l^{(2)}$.

Since $\prob (B_{ij}=\sqrt{\zeta_i'}|j\in T_l)=\mu_{ij}/\sqrt{\zeta_i'}$, we have:
\begin{align*}
E|X|&\leq E\sum_i |B_{ij}-\mu_{ij}| \; |\mu_{ij'}-\mu_{ij}|\\
&= \sum_i \left[ (\sqrt{\zeta_i'}-\mu_{ij})\frac{\mu_{ij}}{\sqrt{\zeta_i'}}+(1-\frac{\mu_{ij}}{\sqrt{\zeta_i'}})\mu_{ij}\right]
|\mu_{ij}-\mu_{ij'}|\\
&\leq 2\varepsilon _l \sum_i\sqrt{\zeta_i'}|\mu_{ij}-\mu_{ij'}|\quad\text{ by Lemma \ref{define-R}}\\
&\leq 2\varepsilon _l \left(\sum _i\zeta_i'\right)^{1/2} |\mu_{.,j}-\mu_{.,j'}|\leq 4\varepsilon_l\sqrt{km}|\mu_{.,j}-\mu_{.,j'}|.
\end{align*}
Now apply Markov inequality to get
$$\prob(|X|\geq \frac{1}{8}|\mu_{.,j}-\mu_{.,j'}|^2)\leq 32 \varepsilon_l\sqrt{km}/|\mu_{.,j}-\mu_{.,j'}|\leq 80\varepsilon _l\sqrt{k/\alpha p_0}.$$
If $|X|\leq |\mu_{.,j}-\mu_{.,j'}|^2/8$, then, $|\hat B_{.,j}-\mu_{.,j'}|\geq |\hat B_{.,j}-\mu_{.,j}|+3|\mu_{.,j}-\mu_{.,j'}|/4\geq
|\hat B_{.,j}-\mu_{.,j}|+cc_0k^{2}\sqrt\varepsilon m$, by (\ref{varepsilon-inequality}). This proves the first
assertion of the Lemma.

The second statement of Lemma follows
by applying H-C to the random variable $\sum_jZ_j/s$, where, $Z_j$ is the indicator random variable of $B_{.,j}$ not
satisfying the proximity condition (and using (\ref{varepsilon-inequality}).)
\end{proof}

The last Lemma implies that the algorithm TSVD correctly identifies the dominant topic in all but
at most $\varepsilon_0w_0/10$ fraction of the documents by Theorem (\ref{proximity-KK}).
\begin{lemma}\label{dom-topic-correct}
With probability at least $1-\exp(-w_0s\varepsilon^2/8)$, TSVD correctly identifies the dominant topic
in all but at most $\varepsilon_0w_0\delta /10$ fraction of documents in each $T_l$.
\end{lemma}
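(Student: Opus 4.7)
My plan is to recognize this final lemma as essentially a one-step corollary of Lemma \ref{proximity-holds} combined with the Kumar--Kannan clustering guarantee Theorem \ref{proximity-KK}. All of the genuine technical work --- verifying the proximity condition, bounding the spectral norm of $\bB-\mu$ by random matrix theory, controlling $|\mu_{\cdot,j}-\mu_{\cdot,j'}|$ via the catchwords, and ruling out threshold splits --- has been completed upstream, so no new probabilistic or spectral argument is needed here.

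First, I would invoke Lemma \ref{proximity-holds} to assert that, with probability at least $1-cmdk\exp(-cw_0\varepsilon^2 s)$, the number of columns $j$ whose vector $B_{\cdot,j}$ violates the proximity condition is at most $c\varepsilon_0 w_0\delta s/(10c_1)$. Expressed as a fraction of the $s$ total columns of $\bB$, this gives $f=c\varepsilon_0 w_0\delta/(10c_1)$. Next I would feed this bound directly into Theorem \ref{proximity-KK}, which applies to Steps 3 and 4 of TSVD: the projection onto the top-$k$ SVD subspace of $\bB$ (which supplies the initial cluster centers) followed by Lloyd's iterations on $\bB$ itself. The theorem guarantees a misidentification fraction of at most $c_1 f = c\varepsilon_0 w_0\delta/10$, which --- after absorbing the universal constant $c$ that was freely chosen in Lemma \ref{proximity-holds} --- is at most $\varepsilon_0 w_0\delta/10$ as a fraction of $s$. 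This global bound on the number of misidentified documents trivially upper bounds the number of misidentified documents lying inside any single $T_l$, giving the claimed per-$T_l$ statement.

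The final failure probability combines the $cmdk\exp(-cw_0\varepsilon^2 s)$ tail of Lemma \ref{proximity-holds} with any residual probability inside Theorem \ref{proximity-KK}; both are swallowed by $\exp(-w_0 s\varepsilon^2/8)$ using the lower bound on $s$ from Theorem \ref{main-theorem}, which is large enough to absorb the logarithmic $\log(dmk)$ prefactor. The only non-bookkeeping step --- and thus the ``hardest'' part of this short argument --- is verifying that the pair $(\bB,\mu)$ satisfies the formal hypotheses of Theorem \ref{proximity-KK}: namely that $\mu$ is the column-wise conditional mean of $\bB$ given the dominant-topic partition and that the $B_{\cdot,j}$ are conditionally independent within each $T_l$. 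Both are immediate from the construction of $\bB$ on the held-out half $\bA^{(2)}$, given that the thresholds $\zeta_i$ were computed on the independent half $\bA^{(1)}$.
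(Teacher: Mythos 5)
Your proposal is correct and matches the paper exactly: the paper gives no separate proof of Lemma \ref{dom-topic-correct}, simply noting that it follows by combining Lemma \ref{proximity-holds} (all but a $c\varepsilon_0 w_0\delta s/(10c_1)$ fraction of columns satisfy the proximity condition, with high probability) with the Kumar--Kannan guarantee of Theorem \ref{proximity-KK}, which multiplies that fraction by $c_1$. Your additional bookkeeping on constants, failure probabilities, and the conditional independence of the columns of $\bB^{(2)}$ given the thresholds from $\bA^{(1)}$ is consistent with the paper's setup and usage of the lemma.
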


\subsection{Identifying Catchwords}

Recall the definition of $J_l$ from Step 5a of the algorithm. The two lemmas below are roughly converses of each
other which prove roughly that $J_l$ consists of those $i$ for which $M_{il}$ is strictly higher than $M_{il'}$. Using
them, Lemma \ref{pure-docs} says that almost all the $\varepsilon_0w_0s/2$ documents found in Step 6 of the algorithm
are $1-\delta$ pure for topic $l$.
\begin{lemma}\label{iinJl}
Let $\nu =\gamma (1-2\delta)/(1+\delta)$.
If $i\in J_l$, then for all $l'\not= l$, $M_{il}\geq\nu M_{il'}$ and $M_{il}\geq \frac{3}{m\delta^2}\ln (20/\varepsilon w_0)$.
\end{lemma}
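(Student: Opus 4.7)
\emph{Proof plan.} The plan is to sandwich $g(i,l)$ and $g(i,l')$ in terms of $M_{il}$ and $M_{i,l'}$ via Hoeffding--Chernoff (Lemma~\ref{H-C}) and Lemma~\ref{dom-topic-correct}, and then chain the resulting inequalities against those implied by $i\in J_l$.

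First I would establish the universal upper bound $g(i,l)\le (1+\delta)M^{*}$, where $M^{*}:=\max_{l_1}M_{i,l_1}$. For every $j$, $P_{ij}=\sum_{l_1}M_{i,l_1}W_{l_1,j}\le M^{*}$ because $W_{\cdot,j}$ is a probability vector. Applying Lemma~\ref{H-C} with $t=\delta M^{*}$ gives $\Pr(A^{(2)}_{ij}>(1+\delta)M^{*})\le \exp\!\bigl(-\delta^{2}M^{*}m/(2(1+\delta))\bigr)$, and the condition $g(i,l)>\tfrac{4}{m\delta^{2}}\ln(20/\varepsilon w_0)$ that is forced by $i\in J_l$ places $M^{*}$ in exactly the regime where this tail bound is useful. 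A union bound over the $w_l s$ documents in $T_l$ plus the at most $\tfrac{\varepsilon_0 w_0 \delta}{10}s$ misclassified documents guaranteed by Lemma~\ref{dom-topic-correct} then shows that fewer than $\lfloor \varepsilon_0 w_0 s/2\rfloor$ documents of $R_l$ exceed $(1+\delta)M^{*}$, which is the definition of $g(i,l)\le (1+\delta)M^{*}$.

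Next I would lower bound $g(i,l')$ for each $l'\neq l$. Dominant-topic part~(b) gives $\ge\varepsilon_0 w_0 s$ documents in $T_{l'}$ with $W_{l',j}\ge 1-\delta$, on which $P_{ij}\ge (1-\delta)M_{i,l'}$; Lemma~\ref{dom-topic-correct} places all but an $O(\varepsilon_0 w_0\delta)$-fraction of them in $R_{l'}$. The lower-tail part of Lemma~\ref{H-C} with $t=\delta M_{i,l'}$ gives $A^{(2)}_{ij}\ge (1-2\delta)M_{i,l'}$ with probability $\ge 1-\exp\!\bigl(-\delta^{2}M_{i,l'}m/(2(1-\delta))\bigr)$ for each such $j$, and a second application of H--C to the indicator count yields $g(i,l')\ge (1-2\delta)M_{i,l'}$, provided $M_{i,l'}$ is above a $\Theta(1/(m\delta^{2}))$-scale threshold.

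Combining, $g(i,l)>\gamma g(i,l')$ becomes $(1+\delta)M^{*}>\gamma(1-2\delta)M_{i,l'}$ for every $l'\ne l$. If $M^{*}$ were attained at some $l''\ne l$, taking $l'=l''$ would force $\gamma(1-2\delta)<1+\delta$; substituting $\gamma=\tfrac{1-2\delta}{(1+\delta)(\beta+\rho)}$, this reads $\beta+\rho>\bigl((1-2\delta)/(1+\delta)\bigr)^{2}$, but (\ref{alpha-beta-rho}) and (\ref{delta-inequality}) give $\beta+\rho\le (1-\delta)\alpha\le (1-\delta)(1/2-2\delta)$, which is strictly smaller for all $\delta\le 0.08$, a contradiction. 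Hence $M^{*}=M_{il}$, and $(1+\delta)M_{il}>\gamma(1-2\delta)M_{i,l'}$ delivers $M_{il}\ge \nu M_{i,l'}$. For the second assertion, $(1+\delta)M_{il}\ge g(i,l)>\tfrac{4}{m\delta^{2}}\ln(20/\varepsilon w_0)$ together with $1+\delta\le 4/3$ (valid because $\delta\le 0.08$) yields $M_{il}\ge \tfrac{3}{m\delta^{2}}\ln(20/\varepsilon w_0)$.

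The main obstacle is the edge regime $M_{i,l'}=O(1/(m\delta^{2}))$, where the lower bound $g(i,l')\ge (1-2\delta)M_{i,l'}$ loses power because H--C no longer concentrates; there one has to verify $\nu M_{i,l'}\le M_{il}$ directly from the lower bound on $M_{il}$ just established, using the numerical value of $\varepsilon$ in (\ref{varepsilon-inequality}). A secondary bookkeeping concern is that the threshold $\zeta_i$ is computed from $\bA^{(1)}$ while $g$ is computed from $\bA^{(2)}$, but the independence of the two halves (also exploited in Lemma~\ref{define-R}) lets the H--C bounds pass through cleanly.
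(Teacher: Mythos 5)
Your proposal is correct and takes essentially the same route as the paper's own proof: both arguments upper-bound $g(i,l)$ by $(1+\delta)\max_{l_1}M_{il_1}$ via Hoeffding--Chernoff, lower-bound $g(i,l')$ using the $(1-\delta)$-pure documents of $T_{l'}$ that survive into $R_{l'}$ by Lemma~\ref{dom-topic-correct}, and chain these through the defining inequality $g(i,l)>\gamma\, g(i,l')$ of $J_l$ together with the parameter constraints (\ref{alpha-beta-rho})--(\ref{delta-inequality}); the paper merely packages this as a contradiction with $l'$ chosen as the argmax of $M_{il_1}$ over $l_1\neq l$, which is the same argument reorganized. Incidentally, the edge regime you flag --- small $M_{il'}$, where concentration for $g(i,l')$ degrades --- is also present but silently glossed over in the paper, whose step ``$\exp(-m\delta^2M_{il'}/3)\leq \varepsilon w_0/4$ using (\ref{555})'' applies a bound that (\ref{555}) guarantees only for $\max_{l_1}M_{il_1}$, not for $M_{il'}$ itself, so your explicit acknowledgment of that case is if anything more careful than the original.
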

\begin{proof}
It is easy to check that the assumptions (\ref{delta-inequality}) and
(\ref{alpha-beta-rho})imply $\nu\geq 2$.
Let $i\in J_l$. By the definition of $J_l$ in the algorithm,
$g(i,l)\geq (4/m\delta^2)\ln (20/\varepsilon w_0).$ Note that $P_{ij}\leq \Max_{l_1}M_{il_1}$ for all $j$.
So,
\begin{equation}\label{555}
\max_{l_1}M_{il_1}\geq \frac{3}{m\delta^2}\ln (20/\varepsilon w_0).
\end{equation}
If the Lemma is false, then, for $l'$ attaining Max$_{l_1\not= l}M_{il_1}$, we have $ M_{il}<\nu M_{il'}$.
Recall $R_{l'}$ defined in Step 4c of the algorithm.
Let $$\hat T_{l'}=R_{l'}\cap \; (\text{ the set of $1-\delta$ pure documents in }T_{l'}).$$
Since all but $\varepsilon_0w_0s/10$ documents in $T_{l'}$ belong to $R_{l'}$, we have
$|\hat T_{l'}|\geq 0.9\varepsilon_0 w_0s$.
For $j\in \hat T_{l'}$,
$P_{ij}\geq M_{il'}W_{l'j}\geq (1-\delta)M_{il'} $. So,
$\prob (A_{ij}<M_{il'}(1-2\delta ))\leq \exp(-m\delta^2M_{il'}/3)\leq \varepsilon w_0/4$ using (\ref{555}).
Thus the number of documents in $R_{l'}$ for which $A_{ij}\geq M_{il'}(1-2\delta)$ is at least
$0.9\varepsilon_0w_0 s-3\varepsilon w_0s\geq  .6\varepsilon_0w_0s$.
This implies that
with probability at least $1-\exp(-c\varepsilon^2sw_0)$,  $g(i,l')\geq M_{il'}(1-2\delta)$.

Now, for $j\in T_l$, $P_{ij} \leq \Max(M_{il},M_{il'})\leq \nu M_{il'}$. So,
$\prob (A_{ij}>M_{il'} \nu(1+\delta))\leq \varepsilon w_0/4$, again using (\ref{555}).
At most $\varepsilon_0 w_0s/10$ documents of other $T_{l_1}$, $l_1\not= l$ are in $R_l$
(by Lemma \ref{dom-topic-correct}).
So, whp,
$g(i,l)\leq M_{il'}\nu(1+\delta)$ and so we have
$$g(i,l)\leq \frac{\nu(1+\delta)}{1-2\delta}g(i,l'),$$
contradicting the definition of $J_l$.
So, we must have that $M_{il}\geq\nu M_{il'}$ for all $l'\not= l$.
The second assertion of the Lemma now follows from (\ref{555}).
\end{proof}

\begin{lemma}\label{iinJl1}
If $M_{il}\geq \Max \left( \frac{5}{m\delta^2}\ln (20/\varepsilon w_0),\Max_{l'\not= l} \frac{1}{\rho}\; M_{il'}\right)$,
then, with probability at least $1-\exp(-c\varepsilon^2w_0s)$, we have that $i\in J_l$. So, $S_l\subseteq J_l$.
\end{lemma}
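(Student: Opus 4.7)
The plan is to verify that any $i$ satisfying the hypothesis meets both defining conditions of $J_l$ from step 5b of the algorithm, namely (i) $g(i,l) > \tfrac{4}{m\delta^2}\ln(20/\varepsilon w_0)$ and (ii) $g(i,l) > \gamma\, g(i,l')$ for every $l'\neq l$. The claim $S_l\subseteq J_l$ then follows since every catchword $i\in S_l$ verifies the hypothesis: (\ref{401}) gives $M_{il'}\leq \rho M_{il}$, and (\ref{def:catch}) combined with $\alpha\leq 0.5$ yields $M_{il}\geq \tfrac{8}{m\delta^2\alpha}\ln(20/\varepsilon w_0)\geq \tfrac{5}{m\delta^2}\ln(20/\varepsilon w_0)$.

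The first step is to show $g(i,l)\geq (1-2\delta)M_{il}$. Part (b) of the Dominant Topic assumption furnishes at least $\varepsilon_0 w_0 s$ nearly-pure documents in $T_l$, and Lemma \ref{dom-topic-correct} guarantees all but an $\varepsilon_0 w_0\delta/10$ fraction of $T_l$ is placed into $R_l$. So at least $0.9\,\varepsilon_0 w_0 s$ nearly-pure documents of $T_l$ survive in $R_l$; each has $P_{ij}\geq (1-\delta)M_{il}$, so H--C gives $\prob(A_{ij}<(1-2\delta)M_{il})\leq \exp(-m\delta^2 M_{il}/3)\leq \varepsilon w_0/20$ using the hypothesis on $M_{il}$. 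A second application of H--C to the count of such ``good'' documents shows that with high probability at least $\lfloor \varepsilon_0 w_0 s/2\rfloor$ documents in $R_l$ satisfy $A_{ij}\geq (1-2\delta)M_{il}$, giving the lower bound on $g(i,l)$. Condition (i) then follows because $(1-2\delta)\cdot 5\geq 4$ for $\delta\leq 0.08$.

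The second step is to show $g(i,l')\leq (1+\delta)(\beta+\rho)M_{il}$ for each $l'\neq l$. For $j\in T_{l'}$ the hypothesis $M_{il'}\leq \rho M_{il}$ together with the dominant-topic bounds gives $P_{ij}=M_{il}W_{l,j}+\sum_{l_1\neq l}M_{il_1}W_{l_1,j}\leq \beta M_{il}+\rho M_{il}=(\beta+\rho)M_{il}$. Applying H--C with threshold $\tau=(1+\delta)(\beta+\rho)M_{il}$ bounds $\prob(A_{ij}\geq \tau\,|\,j\in T_{l'})$. Documents in $R_{l'}$ come either from $T_{l'}$, for which the number crossing $\tau$ is controlled by H--C on the Bernoulli indicators, or from misclassified $T_{l_1}$, $l_1\neq l'$, which by Lemma \ref{dom-topic-correct} number at most $\varepsilon_0 w_0\delta s/10$. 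With the slack afforded by the hypothesis, the total count is strictly less than $\lfloor \varepsilon_0 w_0 s/2\rfloor$, so $g(i,l')\leq (1+\delta)(\beta+\rho)M_{il}$. Combining with the first step gives $g(i,l)/g(i,l')\geq (1-2\delta)/\big((1+\delta)(\beta+\rho)\big)=\gamma$, establishing (ii).

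The main obstacle is the tail bound in the second step when $\beta+\rho$ is small. The additive gap $\delta(\beta+\rho)M_{il}$ between $P_{ij}$ and $\tau$ shrinks with $\beta+\rho$, so H--C only yields per-document bound $\exp\!\big(-cm\delta^2(\beta+\rho)M_{il}/(1+\delta)\big)$, which is $(\varepsilon w_0/20)^{\Theta(\beta+\rho)}$. Turning this into control of the \emph{count} (rather than each individual event) requires H--C on the sum of indicators and the observation that we only need the exceedance count in $R_{l'}$ to drop below $\lfloor\varepsilon_0 w_0 s/2\rfloor$, not to be made individually tiny; the constant ``5'' in the hypothesis is calibrated so that this counting argument closes. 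Handling this trade-off, together with tracking all failure probabilities via union bound over $i,l,l'$, is the delicate part of the proof.
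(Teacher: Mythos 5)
Your first two paragraphs reproduce, in structure and in constants, the paper's own proof of this lemma: the paper establishes exactly your two bounds, $g(i,l)\geq (1-2\delta)M_{il}$ (its (\ref{g-l-high}), obtained from the $1-\delta$ pure documents of $T_l$ that survive in $R_l$ by Lemma \ref{dom-topic-correct}) and $g(i,l')\leq (1+\delta)(\beta+\rho)M_{il}$ (its (\ref{g-lprime-low})), and then combines them. Your handling of condition (i), of the $0.9\varepsilon_0w_0s$ surviving pure documents, and of the at most $\varepsilon_0w_0\delta s/10$ misclassified documents matches the arithmetic used in the proof of Lemma \ref{iinJl}, and that part is sound.

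The gap is in your third paragraph, i.e., precisely at the point you flag. You correctly compute that for $j\in T_{l'}$ the per-document exceedance probability at threshold $(1+\delta)(\beta+\rho)M_{il}$ is only $(\varepsilon w_0/20)^{\theta}$ with $\theta=5(\beta+\rho)/(2(1+\delta))$, but your assertion that ``the constant 5 is calibrated so that the counting argument closes'' is not true. The pools $T_{l'}$ contain order $s$ documents in total, so keeping the exceedance count below $\lfloor\varepsilon_0w_0s/2\rfloor$ requires the per-document probability to be $O(\varepsilon_0w_0)$, i.e., $(\varepsilon w_0/20)^{\theta}\leq c\,\varepsilon_0w_0$; taking logarithms, this forces $\theta\geq 1-o(1)$ in the paper's intended regime where $1/w_0\to\infty$ while $\varepsilon,\varepsilon_0,\delta,\beta,\rho$ are fixed. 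But (\ref{alpha-beta-rho}) and (\ref{delta-inequality}) only give $\beta+\rho\leq(1-\delta)\alpha<1/2$ and allow $\beta+\rho$ to be arbitrarily small, so $\theta<1$ is admissible; then the expected number of documents of other topics crossing the threshold is $s\,(\varepsilon w_0/20)^{\theta}\gg\varepsilon_0w_0s$, and in admissible models (for instance, every document of some $T_{l'}$ having $W_{lj}=\beta$, $W_{l'j}=1-\beta$, with $M_{il'}=\rho M_{il}$) $g(i,l')$ concentrates \emph{strictly above} $(1+\delta)(\beta+\rho)M_{il}$, so the bound you are trying to prove is itself false there; for $\beta+\rho$ small enough the comparison $g(i,l)>\gamma g(i,l')$ reverses as well. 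No absolute constant in place of $5$ can repair this, since $\theta$ scales linearly in $\beta+\rho$; closing the argument along these lines needs a hypothesis of the form $M_{il}\geq c\ln(20/\varepsilon w_0)/(m\delta^2(\beta+\rho))$. To be fair, the paper is even thinner here --- it asserts (\ref{g-lprime-low}) ``whp'' with no counting at all --- so you have located a genuine soft spot in the paper's own argument; but your proposed repair is an assertion, not a proof, and as stated it does not go through.
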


\begin{proof}
Let $\hat T_l=R_l\cap $ (set of $1-\delta$ pure documents in $T_l$).
For $j\in \hat T_l$, $P_{ij}\geq M_{il}(1-\delta)$ which implies that whp,
(since $|\hat T_l|\geq 0.9\varepsilon _0 s$, again by Lemma \ref{dom-topic-correct})
\begin{equation}\label{g-l-high}
g(i,l)\geq M_{il}(1-2\delta)
\end{equation}
On the other hand, for $j\in T_{l'}$ and for $l'\not= l$, $i:M_{il'}\leq\rho M_{il}$ (hypothesis of the Lemma),
$P_{ij}\leq M_{il}W_{lj}+\rho M_{il}(1-W_{lj})\leq M_{il}(\beta+\rho)$. So whp,
\begin{equation}\label{g-lprime-low}
g(i,l')\leq M_{il}(\beta+\rho)(1+\delta).
\end{equation}
From (\ref{g-l-high}) and (\ref{g-lprime-low}) and hypothesis of the Lemma, it follows that
$$g(i,l) \geq \Max \left( \frac{4}{m\delta^2}\ln (1/\varepsilon w_0),\frac{(1-2\delta)}{(1+\delta)(\beta+\rho)}\; g(i,l')\right).$$
So, $i\in J_l$ as claimed. It only remains to check that $i$  in $ S_l$ satisfies the hypothesis of the Lemma which is
obvious.
\end{proof}
\begin{lemma}\label{pure-docs}
Let $\nu_l=\sum_{i\in J_l}M_{il}$ and let $L$ be the set
of $\lfloor (s\varepsilon_0w_0/2)\rfloor$ $A_{.,j}$ 's whose average is returned in Step 6 of the TSVD Algorithm as $\hat M_{.,l}$. With probability at least $1-c\exp (-c \varepsilon^2w_0s)$, we have:
 \begin{equation}
 \left| \frac{1}{|L|}\sum_{j\in L}(A_{.,j}-M_{.,l})\right|_1\leq O(\delta)\label{902}.
\end{equation}
\end{lemma}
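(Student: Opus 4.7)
The argument has two parts: (i) show that the set $L$ produced in Step~6 consists almost entirely of documents $j$ that are $(1-\delta)$-pure for topic $l$ (i.e.\ $W_{lj} \geq 1-\delta$), and (ii) bound the resulting $l_1$ error by a bias--variance decomposition.

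\textbf{Part (i): $L$ is dominated by nearly pure documents.} Define the score $\sigma_l(j) := \sum_{i\in J_l} A_{ij}^{(2)}$ used to rank documents in Step~6, so that $E[\sigma_l(j)\mid W_{\cdot,j}] = \sum_{i\in J_l} P_{ij}$. For any $j$ with $W_{lj}\geq 1-\delta$ (of which there are at least $\varepsilon_0 w_0 s$ by part~(b) of the Dominant Topic assumption), $P_{ij}\geq M_{il}W_{lj}$ gives $E[\sigma_l(j)\mid W_{\cdot,j}]\geq (1-\delta)\nu_l$. For $j\in T_{l'}^{(2)}$ with $l'\neq l$, Lemma~\ref{iinJl} gives $M_{il'}\leq M_{il}/\nu$ for every $i\in J_l$ (with $\nu\geq 2$), so
$$E[\sigma_l(j)\mid W_{\cdot,j}] \;\leq\; \nu_l\bigl(W_{lj}+(1-W_{lj})/\nu\bigr) \;\leq\; \nu_l\bigl(\beta+(1-\beta)/\nu\bigr),$$
and this upper bound is smaller than $(1-\delta)\nu_l$ by an additive margin of order $\nu_l\geq p_0$ under the parameter constraints (\ref{alpha-beta-rho})--(\ref{delta-inequality}). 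Since $m\sigma_l(j)$ is a binomial sum conditional on $W_{\cdot,j}$, Hoeffding--Chernoff (Lemma~\ref{H-C}) together with the Catchwords lower bound (\ref{def:catch}) (which guarantees $m$ is large enough) and a union bound over the $s$ documents show $|\sigma_l(j)-E[\sigma_l(j)\mid W_{\cdot,j}]|$ is smaller than half the gap with high probability. Hence the top $\lfloor\varepsilon_0 w_0 s/2\rfloor$ documents (i.e.\ $L$) contain only $(1-\delta)$-pure-for-$l$ documents except for an $O(\varepsilon)$ fraction.

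\textbf{Part (ii): $l_1$ error.} Split
$$\Bigl|\tfrac{1}{|L|}\sum_{j\in L} A_{\cdot,j}-M_{\cdot,l}\Bigr|_1 \;\leq\; \Bigl|\tfrac{1}{|L|}\sum_{j\in L}(A_{\cdot,j}-P_{\cdot,j})\Bigr|_1 + \Bigl|\tfrac{1}{|L|}\sum_{j\in L}(P_{\cdot,j}-M_{\cdot,l})\Bigr|_1.$$
For the bias term, whenever $j$ is $(1-\delta)$-pure, $|P_{\cdot,j}-M_{\cdot,l}|_1 = |\bM(W_{\cdot,j}-e_l)|_1 \leq \|W_{\cdot,j}-e_l\|_1 \leq 2\delta$ because every column of $\bM$ is a probability vector; the few non-pure documents contribute at most $2$ each in $l_1$ but form only an $O(\varepsilon)$-fraction of $L$ by Part~(i), so the bias is $O(\delta)$. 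For the variance term, conditional on $W$ the vectors $mA_{\cdot,j}$ are multinomial with per-coordinate variance $\leq P_{ij}/m$, and Cauchy--Schwarz over coordinates yields
$$E\Bigl[\Bigl|\tfrac{1}{|L|}\sum_{j\in L}(A_{\cdot,j}-P_{\cdot,j})\Bigr|_1\Bigr] \;\leq\; \sqrt{d/(m|L|)},$$
which is $O(\delta)$ once $|L|=\varepsilon_0 w_0 s/2\geq d/(m\delta^2)$, i.e.\ under the hypothesis $s\in \Omega^*(d/(\varepsilon_0 w_0\delta^2))$ of the main theorem; concentration around this mean follows from McDiarmid's inequality applied to the bounded functional $j\mapsto A_{\cdot,j}$ in $l_1$.

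\textbf{Main obstacle.} The delicate point is Part~(i), because $J_l$ is itself data-dependent (through $R_l$ from Step~4, which depends on $\bA^{(2)}$ via $\bB$), so one cannot straightforwardly apply Hoeffding to $\sigma_l(\cdot)$ treating $J_l$ as a deterministic index set. The standard workaround is to first condition on the high-probability events of Lemmas~\ref{iinJl}, \ref{iinJl1} and \ref{dom-topic-correct}, which together characterise the structural properties of $J_l$ (namely $S_l\subseteq J_l$, $M_{il}\geq \nu M_{il'}$ for $i\in J_l$, and $R_l\approx T_l$) purely in terms of $\bM$ and the ideal partition $\{T_l\}$; one then invokes concentration in the remaining randomness of the multinomial word draws. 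A secondary but quantitative obstacle is tracking constants in the gap argument so that the Hoeffding deviation is genuinely smaller than the $\Omega(p_0)$ gap, which is why the Catchwords condition (\ref{def:catch}) is stated with the $\log(20/\varepsilon w_0)$ factor.
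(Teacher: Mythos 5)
The most serious gap is in your Part (ii): you treat $L$ as a fixed index set when you bound $E\bigl[\bigl|\tfrac{1}{|L|}\sum_{j\in L}(A_{\cdot,j}-P_{\cdot,j})\bigr|_1\bigr]\leq\sqrt{d/(m|L|)}$ and then invoke McDiarmid. But $L$ is chosen by the algorithm as the top scorers on the \emph{same} data $\bA^{(2)}$, so conditional on the selection the columns $A_{\cdot,j}$, $j\in L$, are neither independent nor unbiased; selection can in particular bias the $d-|J_l|$ coordinates outside $J_l$, which is precisely where the $l_1$ mass can hide (a single document can place $O(m)$ words of weight $1/m$ each on words outside the catchword set, giving $\Omega(1)$ error, as the paper points out). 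The paper's proof flags exactly this (``both $J_l$ and $L$ depend on the data'') and resolves it by proving the claim \emph{simultaneously for every} candidate set $L$ of size $\lfloor\varepsilon_0w_0s/2\rfloor$: McDiarmid applied to the $m|L|$ individual word draws gives a tail of $\exp(-c\delta^2\varepsilon_0w_0sm)$, and the ``extra $m$'' in the exponent is what beats the $\binom{s}{\lfloor\varepsilon_0w_0s/2\rfloor}$ union bound. Without this uniformity (or a substitute), your variance bound does not apply to the algorithm's $L$. The same defect, one level down, undermines your proposed ``workaround'' for $J_l$: conditioning on the high-probability events of Lemmas \ref{iinJl}, \ref{iinJl1} and \ref{dom-topic-correct} does not restore independence of the word draws, since those events are themselves functions of $\bA^{(2)}$. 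The paper's actual device is to replace the random $J_l$ by the deterministic envelope $K_l=\{i: M_{il}\geq\nu M_{il'}\ \forall l'\neq l;\ M_{il}\geq(3/m\delta^2)\ln(20/\varepsilon w_0)\}\supseteq J_l$ (this containment is the real content of Lemma \ref{iinJl}) and to prove the score statements (\ref{900}) and (\ref{901}) uniformly over \emph{all} subsets $N_l\subseteq K_l$; this uniformity comes cheaply because the bad event is per-coordinate ($|A_{ij}^{(2)}-P_{ij}|\geq\delta M_{il}$ for some $i\in K_l$) and is therefore inherited by every subset sum.

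A second, more local gap is in your Part (i): your dichotomy compares $(1-\delta)$-pure documents only against documents from \emph{other} topics $T_{l'}$, and concludes that $L$ consists (up to an $O(\varepsilon)$ fraction) of $(1-\delta)$-pure documents. Documents of $T_l$ itself with intermediate weight, $W_{lj}\in[\alpha,1-\delta)$, are never considered, and for $W_{lj}$ just below $1-\delta$ their expected scores lie inside your gap, so the claim as stated cannot be established. What the paper proves — and what the bias bound actually needs — is weaker: except for $O(\varepsilon_0w_0\delta s)$ documents, every $j\in L$ has $W_{lj}\geq 1-6\delta$, obtained by separating at score $(1-2\delta)\nu_l$ against documents with $W_{lj}\leq 1-6\delta$, whose expected score is at most $(1-3\delta)\nu_l$. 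With that correction your bias estimate goes through essentially as written, with $6\delta$ in place of $\delta$.
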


\begin{proof}
The proof needs care since $J_l$ is itself a random set dependent on $A^{(2)}$. To understand the
proof intuitively, if we pretend that there is no conditioning of $J_l$ on $A^{(2)}$, then, basically,
our arguments in Lemma \ref{iinJl} would yield this Lemma. However, we have to work harder to avoid
conditioning effects. Define
$$K_l=\{ i: M_{il}\geq \nu M_{il'}\forall l'\not= l; M_{il}\geq (3/m\delta^2)\ln (20/\varepsilon w_0)\}.$$
Note that $K_l$ is not a random set; it does not depend on $A$, just on $M$ which is fixed. Lemma \ref{iinJl}
proved that $J_l\subseteq K_l$. Since $\sum_iM_{il}=1$, we have $|K_l|\leq m\delta^2/3$. The probability bounds
given here will be after conditioning on $\bW$. [In other words, we prove statements of the form
$\prob ({\cal E}|\bW )\leq a$ which is (the usual) shorthand for:
for each possible value $w$ of the matrix $W$, $\prob ({\cal E} \; |\; \bW =w)\leq a$.] This will be possible, since,
even after fixing $W$, the $A_{.,j}$ are independent, though certainly not identically distributed now, since the $W_{.,j}$ may
differ.

%
%
%

For $i\in K_l$, we have for all $j$, $P_{ij}=\sum_{l'}M_{il'}W_{l'j}\leq M_{il}$, since,
$M_{il'}\leq M_{il}/\nu\leq M_{il}/2$ for $l'\not= l$. For any $x\leq M_{il}$,
$$\prob ( |A_{ij}^{(2)}-P_{ij}|\geq\delta M_{il}\; |\; W,P_{ij}=x)\leq 2\exp\left( -\frac{\delta^2M_{il}^2m}{2(1+\delta)x}\right)\leq 2\exp\left(-
\frac{m\delta^2M_{il}}{3}\right).$$
Noting that $m\delta^2M_{il}\geq 3\ln (20/\varepsilon w_0)$ for $i\in K_l$, we get
$$\prob ( |A_{ij}^{(2)}-P_{ij}|\geq\delta M_{il}\; |\; W) \leq \varepsilon w_0/20.$$
Using the union bound over all $i\in K_l$ yields (for each $j\in [s]$),
$$\prob ( \exists i\in K_l: |A_{ij}^{(2)}-P_{ij}|\geq\delta M_{il}\; |\; W)\leq \frac{m\delta^2\varepsilon w_0}{20}\leq \frac{\varepsilon_0w_0\delta^2}{20},$$
by (\ref{varepsilon-inequality}).
Let $$BAD=\{ j: \exists i\in K_l: |A_{ij}^{(2)}-P_{ij}|\geq \delta M_{il}\}.$$
Using the independence of $A_{.,j}$, (even conditioned on $W$), apply H-C
to get that for the event
\begin{align}\label{event-bad}
{\cal E}&: |BAD|\geq \frac{s\varepsilon_0w_0\delta}{10}\nonumber\\
\prob({\cal E}\; |\; W)&\leq 2\exp(-c\varepsilon w_0s).
\end{align}
After paying the failure probability, for the rest of the proof, assume that
$\neg {\cal E}$ holds.
Let $U_l=\{ j: W_{lj}\geq 1-\delta\}$. By the dominant topic assumption, we know that
$|U_l|\geq \varepsilon_0w_0s$.
So, $|U_l\setminus BAD|\geq 4\varepsilon_0w_0s/5$
and we get (using (\ref{varepsilon-inequality})):
\begin{equation}
\forall N_l\subseteq K_l, \left| \{ j: W_{lj}\geq 1-\delta \; ;\;  \sum_{i\in N_l}A_{ij}^{(2)}\geq (1-2\delta )\sum_{i\in N_l}M_{il}\}\right| \geq 4\varepsilon_0w_0s/5\label{900}.
\end{equation}
Now consider $j:W_{lj}\leq (1-6\delta)$ and $i\in K_l$.
$$P_{ij}\leq M_{il}W_{lj}+\sum_{l'\not= l}M_{il'}W_{l'j}
\leq M_{il}(1-6\delta)+\frac{M_{il}}{\nu}6\delta\leq M_{il}(1-3\delta) ,$$
since by (\ref{delta-inequality}) and (\ref{alpha-beta-rho}), we have that $\nu\geq 2$.
So, for a $j$  with $W_{lj}\leq 1-6\delta$ to have $\sum_{i\in J_l}A_{ij}^{(2)}\geq (1-2\delta)\nu_l$,
$j$ must be in $BAD$. This gives us
\begin{equation}
\forall N_l\subseteq K_l,
\left| \{ j: W_{lj}\leq (1-6\delta)\; ;\; \sum_{i\in N_l}A_{ij}^{(2)}\geq (1-2\delta )\sum_{i\in N_l}M_{il}\}\right| \leq \varepsilon_0w_0\delta s /10\label{901}.
\end{equation}

Let $L$ be the set of $\lfloor \varepsilon_0w_0s/2\rfloor $ $j$ achieving the highest
$\sum_{i\in J_l}A_{ij}^{(2)}$. By the above,
$L$ contains at most $\varepsilon _0\delta s/5$ $j$'s with $W_{lj}<1-6\delta$, the rest being
$j$ with $W_{lj}\geq 1-6\delta$.
%
So are we finished with the proof - i.e., does this prove (\ref{902})? The answer is unfortunately, no.
We can show from the above that $\sum_{i\in J_l}|A_{ij}-M_{il}|\leq O(\delta)$
for most $j\in  L$ and so the average of $A_{.,j},j\in L$ is close to $M_{.,l}$
when we restrict only to $i\in J_l$.
But, on words not in $J_l$, we have
not proved that the average of $A_{ij}^{(2)},j\in  L$ is close to $M_{.,l}$. We will do so presently,
but first note that this is not a trivial task. For example, if say, $M_{il}=\Omega(1/d)$ for all
$i\notin K_l$ (or for a fraction of them) so that $\sum_{i\notin K_l}M_{il}\in\Omega (1)$, then an individual $A_{.,j}$
could have $O(m)$ of the $A_{ij},i\notin K_l$ set to $1/m$. [One copy of each of $O(m)$ words picked to be in the
document.] But then we would have $|A_{.,j}-M_{.,l}|_1\in \Omega(1)$ which is too much error. We will show that
since we are taking the average over $L$ and not just a single document, this will not happen. But the
proof is again tricky because of conditioning: both $J_l$ and $L$ depend on the data. So,
to argue that the average over $L$ behaves well, we have to prove it for each possible $L$.
There are at most ${s\choose \lfloor (\varepsilon_0w_0s/2)\rfloor}\leq (2/\varepsilon _0w_0s)^{\varepsilon_0w_0s/2}$
possible $L$ 's and we will be able to take the union bound over all of them.
\begin{claim}
With probability at least $1-cmdk\exp (-c\varepsilon ^2w_0s)$, we have
for each $L\subseteq [s]$ with $|L|=\lfloor (\varepsilon_0w_0s/2)\rfloor$:
$$\left| \frac{1}{|L|}\sum_{j\in L} (A_{\cdot, j}-P_{\cdot ,j})\right|_1\leq O(\delta).$$
\end{claim}
\begin{proof}
Let $X=\left| \frac{1}{|L|}\sum_{j\in L} (A_{\cdot, j}-P_{\cdot ,j})\right|_1$. Each $A_{\cdot, j}$
is itself the average of $m$ independent choices of words. So
$$X=\left| \frac{1}{m|L|}\sum_{j\in L}\sum_{r=1}^m (A_{\cdot, j}^{(r)}-P_{\cdot ,j})\right|_1.$$
So, $X$ is a function of $m|L|$ independent random variables. Changing any one of these arbitrarily changes
$X$ by at most $1/m|L|$.

Recall the Bounded Difference inequality \cite{McDiarmid89}:
\begin{lemma}
Let $z_1,\ldots,z_n,z_i'$ are ~$(n+1)$~independent random variables each taking values
in ${\mathcal Z}$ and $h$ be a measurable function from ${\mathcal Z}^n$ to $\RR$ with constants $r_i \ge 0, i \in [n]$ such that
$$ max_{z_1,\ldots,z_n,z_i' \in {\mathcal Z}} |h(z_1,\ldots,z_n) - h(z_1,\ldots,z_i',\ldots,z_n)| \le r_i$$
If $E(h)$ is the expectation of $h$ then
$ \prob\left(|h - E(h)|\ge t \right) \le 2 \exp\left(-\frac{t^2}{\sum_{i=1}^n r_i^2}\right).$
\end{lemma}
\vspace{-0.5em}
Using this we get
$$\prob (|X-EX|\geq c\delta)\leq \exp(-c\delta^2\varepsilon_0w_0sm).$$
The ``extra'' $m$ in the exponent helps kill the upper bound
of $(2/\varepsilon _0w_0s)^{\varepsilon_0w_0s/2}$ on the number of $L$ 's and gives us
$$|X-EX|\leq O(\delta) \forall L.$$
We still have to bound $EX$.
By Jenson's inequality,
\vspace{-0.5em}
$$EX\leq \frac{1}{|L|}\sum_i \left( E\left( (\sum_{j\in L}(A_{ij}-P_{ij}))\; ^2\right)\right)^{1/2}\leq
   \frac{1}{|L|}\sum_i \sqrt{\sum_{j\in L_l}P_{ij}}\leq \sqrt d/\sqrt {|L|},$$
   where, we have used the independence of $A_{\cdot,j}$ and the fact that $E(A_{ij}-P_{ij})^2=\text{Var}(A_{ij})$.
This proves the claim.
\end{proof}
\vspace{-1em}
We now bound
$ \left| \frac{1}{|L|}\sum_{j\in L}(P_{.,j}-M_{.,l})\right|_1$.
Note that by (\ref{900}) and (\ref{901}),
all but at most $\varepsilon_0w_0\delta s/10$ of the $j$ 's in $L$ have $W_{lj}\geq 1-6\delta$, whence,
we get $|P_{.,j}-M_{,l}|_1\leq 6\delta$ for these $j$. For the $j$ with $W_{lj}<1-6\delta$, we just use
$|P_{.,j}-M_{.,l}|_1\leq 2$. So
$$\left| \frac{1}{|L|}\sum_{j\in L}(P_{.,j}-M_{.,l})\right|_1\leq 6\delta + \frac{0.2\varepsilon_0w_0\delta s}{10|L_l|}
\in O(\delta).$$

This finishes the proof of (\ref{902}).

\end{proof}

\end{appendices}

\end{document}